\definecolor{DarkGreen}{rgb}{0.1,0.5,0.1}
\definecolor{DarkRed}{rgb}{0.5,0.1,0.1}
\definecolor{DarkBlue}{rgb}{0.1,0.1,0.5}
\definecolor{Gray}{rgb}{0.2,0.2,0.2}
\newcommand\blfootnote[1]{%
  \begingroup
  \renewcommand\thefootnote{}\footnote{#1}%
  \addtocounter{footnote}{-1}%
  \endgroup
}
\lstdefinestyle{mystyle}{
    commentstyle=\color{DarkBlue},
    keywordstyle=\color{DarkRed},
    numberstyle=\tiny\color{Gray},
    stringstyle=\color{DarkGreen},
    basicstyle=\footnotesize,
    breakatwhitespace=false,         
    breaklines=true,                 
    captionpos=b,                    
    keepspaces=true,                 
    numbers=left,                    
    numbersep=5pt,                  
    showspaces=false,                
    showstringspaces=false,
    showtabs=false,                  
    tabsize=2
}
\def\draft{1}
\def\submit{0}
    \def\ShowAuthNotes{1}
    \def\ShowAuthNotes{0}
\newcommand{\forsubmit}[1]{#1}
\newcommand{\forreals}[1]{}
\newcommand{\forreals}[1]{#1}
\newcommand{\forsubmit}[1]{}
\newcommand{\authnote}[2]{{ \footnotesize \bf{\color{DarkRed}[#1's Note:
{\color{DarkBlue}#2}]}}}
\newcommand{\authnote}[2]{}
\newtheorem{theorem}{Theorem}[section]
\newtheorem{remark}[theorem]{Remark}
\newtheorem{lemma}[theorem]{Lemma}
\newtheorem{corollary}[theorem]{Corollary}
\newtheorem{proposition}[theorem]{Proposition}
\theoremstyle{definition}
\newtheorem{example}[theorem]{Example}
\newtheoremstyle{example_contd}
{\topsep} {\topsep}%
{}
{}
{\bfseries}
{.}
{1em}
{\thmname{#1} \thmnumber{ #2}\thmnote{#3} (continued)}
\theoremstyle{example_contd}
\newcommand{\chapterref}[1]{\hyperref[ch:#1]{Chapter~\ref{ch:#1}}}
\newcommand{\claimref}[1]{\hyperref[claim:#1]{Claim~\ref{claim:#1}}}
\newcommand{\corollaryref}[1]{\hyperref[cor:#1]{Corollary~\ref{cor:#1}}}
\newcommand{\definitionref}[1]{\hyperref[def:#1]{Definition~\ref{def:#1}}}
\newcommand{\equationref}[1]{\hyperref[eq:#1]{Equation~\ref{eq:#1}}}
\newcommand{\factref}[1]{\hyperref[fact:#1]{Fact~\ref{fact:#1}}}
\newcommand{\figurelabel}[1]{\label{fig:#1}}
\newcommand{\figureref}[1]{\hyperref[fig:#1]{Figure~\ref{fig:#1}}}
\newcommand{\tableref}[1]{\hyperref[tab:#1]{Table~\ref{tab:#1}}}
\newcommand{\itemref}[1]{\hyperref[item:#1]{Item~(\ref{item:#1})}}
\newcommand{\lemmalabel}[1]{\label{lem:#1}}
\newcommand{\lemmaref}[1]{\hyperref[lem:#1]{Lemma~\ref{lem:#1}}}
\newcommand{\proplabel}[1]{\label{prop:#1}}
\newcommand{\propref}[1]{\hyperref[prop:#1]{Proposition~\ref{prop:#1}}}
\newcommand{\propositionref}[1]{\hyperref[prop:#1]{Proposition~\ref{prop:#1}}}
\newcommand{\remarklabel}[1]{\label{rem:#1}}
\newcommand{\remarkref}[1]{\hyperref[rem:#1]{Remark~\ref{rem:#1}}}
\newcommand{\sectionlabel}[1]{\label{sec:#1}}
\newcommand{\sectionref}[1]{\hyperref[sec:#1]{Section~\ref{sec:#1}}}
\newcommand{\appendixlabel}[1]{\label{app:#1}}
\newcommand{\appendixref}[1]{\hyperref[app:#1]{Appendix~\ref{app:#1}}}
\newcommand{\theoremlabel}[1]{\label{thm:#1}}
\newcommand{\theoremref}[1]{\hyperref[thm:#1]{Theorem~\ref{thm:#1}}}
\newcommand{\examplelabel}[1]{\label{ex:#1}}
\newcommand{\exampleref}[1]{\hyperref[ex:#1]{Example~\ref{ex:#1}}}
\newcommand{\Esymb}{\mathbb{E}}
\DeclareMathOperator*{\E}{\Esymb}
\renewcommand{\hat}{\widehat}
\newcommand{\cB}{{\cal B}}
\newcommand{\cM}{{\cal M}}
\newcommand{\cN}{{\cal N}}
\newcommand{\cS}{{\cal S}}
\newcommand{\defeq}{\stackrel{\small \mathrm{def}}{=}}
\renewcommand{\leq}{\leqslant}
\renewcommand{\geq}{\geqslant}
\newcommand{\paren}[1]{(#1 )}
\newcommand{\set}[1]{\{#1\}}
\newcommand{\abs}[1]{\lvert#1\rvert}
\newcommand{\norm}[1]{\lVert#1\rVert_2}
\newcommand{\R}{\mathbb{R}}
\renewcommand{\D}{\mathcal D}
\newcommand{\tr}{\mathrm{Tr}}
\newcommand{\ignore}[1]{}
\DeclareMathOperator*{\argmin}{arg\,min}
\renewcommand{\epsilon}{\varepsilon}
\newcommand{\eps}{\epsilon}
\newcommand{\remove}[1]{}
\newcommand{\ploss}[2]{\E_{z \sim \D(#1)}\ell(z; #2)}
\newcommand{\PR}{\mathrm{PR}}
\newcommand{\PRh}{\widehat{\mathrm{PR}}}
\newcommand{\thetaPO}{{\theta_{\mathrm{PO}}}}
\newcommand{\thetaPS}{{\theta_{\mathrm{PS}}}}
\newcommand{\DPR}{{\mathrm{DPR}}}
\newcommand{\thetahat}{{\hat \theta}}
\newcommand{\thetaPOhat}{{\hat{\theta}_{\mathrm{PO}}}}
\newcommand{\zb}{z_0}
\newcommand{\estmu}{\hat{\mu}}
\newcommand{\estD}{\widehat{\D}}
\newcommand{\approxploss}[2]{\E_{z \sim \estD(#1)}\ell(z; #2)}
\def\norm#1{\left\| #1 \right\|}
\def\paren#1{\left( #1 \right)}     
\def\brack#1{\left[ #1 \right]}     
\def\abs#1{\left| #1 \right|}
\newcommand{\grad}{\nabla}
\newcommand{\simiid}{\overset{\textrm{i.i.d.}}{\sim}}
\newcommand{\normal}[2]{\mathcal N \left(#1, #2\right)}
\newcommand{\pr}[1]{\ensuremath{\mathbb{P}} \left\{#1\right\}} 
\title{Outside the Echo Chamber:\\ Optimizing the Performative Risk}
\author{John Miller*~~~~Juan C. Perdomo*~~~Tijana Zrnic*\\
{\small \{miller\_john,
jcperdomo, tijana.zrnic\}@berkeley.edu}
\\ \\University of California, Berkeley}
\begin{document}

\maketitle
\begin{abstract}
In performative prediction, predictions guide decision-making and hence can
influence the distribution of future data. To date, work on performative
prediction has focused on finding performatively stable models, which are the
fixed points of repeated retraining. However, stable solutions can be far from
optimal when evaluated in terms of the performative risk, the loss experienced
by the decision maker when deploying a model. In this paper, we shift attention
beyond performative stability and focus on optimizing the performative risk
directly. We identify a natural set of properties of the loss function and
model-induced distribution shift under which the performative risk is convex, a
property which does not follow from convexity of the loss alone. Furthermore,
we develop algorithms that leverage our structural assumptions to optimize the
performative risk with better sample efficiency than generic methods for
derivative-free convex optimization.
\end{abstract}

\section{Introduction}
\blfootnote{* Equal contribution.}
Predictions in social settings are rarely made in isolation, but rather to
inform decision-making. This link between predictions and decisions causes
predictive models to often be \emph{performative}, meaning they can alter their
environment once deployed. For example, election forecasts impact campaign
spending and affect voter turnout, hence influencing the final election
outcome~\citep{westwood2020projecting}. Similarly, long-term climate forecasts
shape policy decisions which can then affect future weather patterns.

Performative prediction is a recent framework introduced by Perdomo et al. ~\cite{perdomo2020performative} which formalizes the idea that predictive models can impact the data-generating process. So far, work in this area has focused on a particular equilibrium notion known as \emph{performative stability} \cite{drusvyatskiy2020stochastic, mendler2020stochastic, brown2020performative}. Stability is a local definition of optimality, by which a model minimizes the expected risk for the specific distribution that it induces. However, stability provides no general guarantees of performance beyond this equilibrium notion. In fact, stable models can have exceedingly poor \emph{performative risk}, the central measure of performance in the performative prediction framework which captures the true risk incurred by the learner when deploying the model. 

Reasoning by analogy, stable classifiers can be thought of as an \emph{echo chamber} in an online platform. In an echo chamber, one is reassured of their ideas by voicing them, but it's not clear whether they are reasonable outside of this niche community. Similarly, stable classifiers minimize risk on the distribution that they induce, but they provide no global guarantees of performance. 

Therefore, to develop accurate predictions  in performative settings, we  shift attention past performative stability and study optimizing the performative risk directly. This task has so far remained elusive due to the complexities of model-induced distribution shift, i.e. performative effects. In particular, even in simple settings with convex losses, these distribution shifts can make the performative risk non-convex as noted in \cite{perdomo2020performative}. Furthermore, optimizing the performative risk requires a different algorithmic approach than what was previously studied in performative prediction. For instance, the learner needs to actively \emph{anticipate} performative effects rather than myopically retrain until convergence, as the latter would only lead to stability. In short, repeated retraining is an inadequate method of overcoming performative distribution shifts.

\subsection{Our Contributions}

In this paper, we provide the first set of results describing when and how the performative risk may be optimized efficiently. We identify natural assumptions under which the performative risk is convex, even in settings where performative effects can be arbitrarily strong. Furthermore, we study optimization algorithms which explicitly model distribution shift and provably minimize the performative risk in an efficient manner.

To give an overview of our main results, we recall the relevant concepts from the performative prediction framework. Relative to supervised learning, where the learner observes data from a single \emph{static} distribution, the key conceptual innovation in the performative prediction framework is the notion of a \emph{distribution map} $\D(\cdot)$, which maps model parameters $\theta \in \R^d$ to a distribution $\D(\theta)$ over instances $z$. Given a loss $\ell$, the quality of a predictive model parameterized by $\theta$ is measured according to its \emph{performative risk},
\[
\PR(\theta) \defeq \E_{z\sim\D(\theta)} \ell(z;\theta).
\]
 A classifier $\thetaPO$ is \emph{performatively optimal} if it minimizes the performative risk, i.e $\thetaPO \in \argmin_{\theta} \PR(\theta)$. On the other hand, a classifier $\thetaPS$  is \emph{performatively stable} if it satisfies the fixed-point condition, 
 \[
 \thetaPS \in \argmin_{\theta} \ploss{\thetaPS}{\theta}.
 \]
In other words, stable classifiers are those which are optimal for the particular distribution they induce. However, stability has little bearing on whether a classifier has low performative risk. More specifically, the following observation motivates a large part of our later analysis:
\begin{center}
\begin{quote}
\emph{Stable classifiers can maximize the performative risk even when the loss is well-behaved and performative effects are small.}
\end{quote}
\end{center}
Not only can stable points maximize the performative risk, but they can also have an arbitrarily large suboptimality gap, $\PR(\thetaPS) - \PR(\thetaPO)$. The most natural first step towards optimizing the performative risk is to ensure that it is \emph{convex}. Our first main result states that under an appropriate stochastic dominance condition which ensures the distribution map is well-behaved, there exists a critical threshold on the strength of performative effects that guarantees convexity:
\begin{theorem}[Informal]
\theoremlabel{informal1}
Assume that the loss is $\beta$-smooth in $z$ and $\gamma$-strongly convex in $\theta$.  If the  map $\D(\cdot)$ is $\epsilon$-Lipschitz and satisfies an appropriate stochastic dominance condition, then the performative risk is guaranteed to be convex if and only if $\epsilon \leq \frac{\gamma}{2\beta}$.
\end{theorem}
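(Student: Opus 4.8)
The plan is to establish convexity through a second-order analysis that disentangles the two distinct channels through which $\theta$ enters the performative risk: the \emph{loss parameter} and the \emph{distribution}. Concretely, I would introduce the bivariate map $G(\theta',\theta) \defeq \ploss{\theta'}{\theta}$, so that $\PR(\theta) = G(\theta,\theta)$, and study the restriction $\varphi(t) = \PR(\theta_0 + t v)$ along an arbitrary line. Differentiating twice and evaluating on the diagonal $\theta' = \theta$ splits the curvature of $\PR$ into three pieces: a \emph{pure loss} term $v^\top \nabla^2_{\theta\theta} G\, v$, a \emph{pure distribution} term $v^\top \nabla^2_{\theta'\theta'} G\, v$, and a \emph{cross} term $2 v^\top \nabla^2_{\theta'\theta} G\, v$. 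The goal is to show that each of the first two is nonnegative while the cross term cannot be more negative than $-2\beta\epsilon\|v\|^2$.

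For the first term, $\gamma$-strong convexity of $\ell$ in $\theta$ passes through the expectation and yields $v^\top\nabla^2_{\theta\theta}G\,v \ge \gamma\|v\|^2$. The role of the stochastic dominance hypothesis is precisely to force the pure distribution term to be nonnegative: interpreting it as a mixture (convex-order) dominance condition $\D(\lambda\theta_1 + (1-\lambda)\theta_2)\preceq_{\mathrm{cx}} \lambda\D(\theta_1) + (1-\lambda)\D(\theta_2)$, I would realize it by a martingale coupling and combine it with $\beta$-smoothness in $z$ to control the non-convexity of $z\mapsto\ell(z;\theta)$; the conditional-mean-zero property kills the first-order contribution and leaves $\theta'\mapsto G(\theta',\theta)$ convex, i.e.\ $v^\top\nabla^2_{\theta'\theta'}G\,v \ge 0$. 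Finally, the cross term is bounded using the two quantitative hypotheses together: $\epsilon$-Lipschitzness of $\D(\cdot)$ bounds the sensitivity of the induced distribution to $\theta$ by $\epsilon$, and $\beta$-smoothness bounds the mixed sensitivity of $\nabla_z\ell$ to $\theta$ by $\beta$, so that $|2v^\top\nabla^2_{\theta'\theta}G\,v|\le 2\beta\epsilon\|v\|^2$. Summing the three contributions gives $\varphi''(t)\ge(\gamma - 2\beta\epsilon)\|v\|^2$, which is nonnegative exactly when $\epsilon\le\gamma/(2\beta)$, proving the sufficiency direction.

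For necessity I would exhibit a single one-dimensional instance saturating the bound. Take a location family $z = z_0 + \epsilon\theta$ (so $\D(\cdot)$ is $\epsilon$-Lipschitz and trivially satisfies the dominance condition) together with $\ell(z;\theta) = \frac{\gamma}{2}\theta^2 - \beta\,z\theta$, which is $\gamma$-strongly convex in $\theta$ and $\beta$-smooth with vanishing curvature in $z$. A direct computation gives $\PR(\theta) = \frac{\gamma}{2}\theta^2 - \beta\epsilon\theta^2 - \beta(\E z_0)\theta$, whence $\PR''(\theta) = \gamma - 2\beta\epsilon$. Thus as soon as $\epsilon > \gamma/(2\beta)$ the performative risk is strictly concave, so the threshold is tight and the equivalence follows.

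The crux of the argument — and the step I expect to be the main obstacle — is the treatment of the cross term together with the pure distribution term. Neither $\gamma$ nor $\beta$ alone constrains the \emph{interaction} between the shifting distribution and the moving loss, and without the stochastic dominance condition the distribution channel can contribute negative curvature that is not offset by strong convexity. The delicate point is therefore to verify that the assumed dominance condition is exactly strong enough to neutralize the pure distribution curvature for losses that are only $\beta$-smooth (not convex) in $z$, while still leaving the cross term controlled by the clean constant $2\beta\epsilon$ rather than a messier bound involving $\epsilon^2$; pinning down this constant sharply is what makes the threshold an exact if-and-only-if.
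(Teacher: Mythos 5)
Your argument is, in substance, the paper's own proof recast in second-order language: the paper also splits the curvature of $\PR$ into a loss-channel term controlled by $\gamma$-strong convexity, a distribution-channel term controlled by the dominance condition, and a cross term bounded by $2\epsilon\beta$ (obtained there via Kantorovich--Rubinstein duality, since $\beta$-smoothness makes $\nabla_\theta\ell(z;\theta)^\top(\theta'-\theta)$ Lipschitz in $z$ with constant $\beta\|\theta-\theta'\|_2$ and $\epsilon$-sensitivity bounds $W_1(\D(\theta),\D(\theta'))$). The only real difference is that the paper works entirely with first-order gradient inequalities rather than Hessians, which avoids assuming twice-differentiability; and your tightness example is exactly the paper's Proposition~\ref{prop:quadratic}.

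The one place your plan would run into trouble is the step you yourself flag as the main obstacle: deriving nonnegativity of the pure distribution term from a convex-order condition $\D(\alpha\theta+(1-\alpha)\theta')\leq_{cx}\alpha\D(\theta)+(1-\alpha)\D(\theta')$ via a martingale coupling, for losses that are only $\beta$-smooth in $z$. A martingale coupling with $\E[z'\,|\,z]=z$ yields $\E g(z)\leq \E g(z')$ only for \emph{convex} $g$, and $\beta$-smoothness in the paper's sense \eqref{ass:smoothness_z} is a Lipschitz condition on $\nabla_\theta\ell$ in $z$, which neither makes $z\mapsto\ell(z;\theta)$ convex nor offsets its possible negative curvature; so that derivation does not close. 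The resolution in the paper is that the ``appropriate stochastic dominance condition'' is mixture dominance \eqref{ass:mixture}, a joint condition on the pair $(\D(\cdot),\ell)$ that \emph{directly asserts} convexity of $\theta'\mapsto\E_{z\sim\D(\theta')}\ell(z;\theta_0)$ --- i.e.\ your pure distribution term is nonnegative by hypothesis, with no coupling needed. The convex-order-plus-coupling argument is used in the paper only to \emph{verify} \eqref{ass:mixture} for location-scale families, and there it does require $\ell$ to be convex in $z$. With that substitution your proof goes through and coincides with the paper's.
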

Interestingly, previous work has established that $\epsilon < \gamma / \beta$ is a threshold for repeated retraining to provably converge to a performatively stable point. We show that if we halve this quantity, we get another threshold which determines whether the performative risk is provably convex. 

While \theoremref{informal1} suggests that performative effects need to be small in order to guarantee convexity, we prove that this need not be the case for the setting of \emph{location-scale} families. These are natural classes of distribution maps in which performative effects enter through an additive or multiplicative factor that is linear in $\theta$. Many examples of distribution maps that have appeared in prior work are in fact location-scale families. For this setting, we generalize \theoremref{informal1} to prove the following structural result.
%
\begin{theorem}[Informal]
If the loss is smooth, strongly convex and the map $\D(\cdot)$ is a location-scale family, then the performative risk can be convex irrespective of the Lipschitz constant of $\D(\cdot)$.
\end{theorem}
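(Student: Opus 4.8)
The plan is to exploit the one structural feature that distinguishes a location-scale family: the sample drawn from $\D(\theta)$ admits a representation in which $\theta$ enters the randomness only \emph{affinely}. Concretely, such a map can be written as $z \eqdist \Lambda(\theta)\,\zb + g(\theta)$, where $\zb \sim \Zb$ is drawn from a fixed base distribution independent of $\theta$, and both the scale $\Lambda(\cdot)$ and the location $g(\cdot)$ are affine in $\theta$. Writing the performative risk as an expectation over the base randomness,
\[
\PR(\theta) = \E_{\zb \sim \Zb}\,\ell\paren{\Lambda(\theta)\zb + g(\theta);\, \theta},
\]
and using that convexity is preserved under expectation, it suffices to prove that the integrand $h_{\zb}(\theta) \defeq \ell\paren{\Lambda(\theta)\zb + g(\theta); \theta}$ is convex for each fixed realization $\zb$.

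First I would differentiate $h_{\zb}$ twice. By the chain rule, its Hessian splits into a sandwich term built from the joint Hessian of $\ell$ in the stacked variable $(z,\theta)$ and a term that multiplies the gradient $\grad_z \ell$ against the second derivative in $\theta$ of the inner map $\theta \mapsto \Lambda(\theta)\zb + g(\theta)$. This is the heart of the argument: because $\Lambda$ and $g$ are affine in $\theta$, the inner map has vanishing second derivative, so the second term disappears \emph{identically}, leaving the exact identity
\[
\grad^2 h_{\zb}(\theta) = J_{\zb}^{\top}\,\grad^2 \ell\paren{\Lambda(\theta)\zb + g(\theta); \theta}\, J_{\zb},
\]
where $J_{\zb}$ is the (constant-in-$\theta$) Jacobian of the inner map. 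Contrast this with the generic setting behind \theoremref{informal1}, where the analogous transport map is curved in $\theta$ and its second derivative contributes a term whose magnitude is controlled by the Lipschitz constant $\epsilon$; it is precisely that term which forces the threshold $\epsilon \le \gamma/(2\beta)$. For a location-scale family the term is absent, so the Lipschitz constant enters only through $J_{\zb}$, where it rescales directions but cannot flip the sign of the quadratic form.

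It then remains to certify that $J_{\zb}^{\top}\,\grad^2\ell\,J_{\zb} \succeq 0$ for every $\theta$ and every $\zb$ in the support. For a pure location family ($\Lambda$ constant) the Jacobian is independent of $\zb$, and joint convexity of $\ell$ in $(z,\theta)$ — which holds for natural losses that are $\beta$-smooth in $z$ and $\gamma$-strongly convex in $\theta$, e.g.\ the squared loss — closes the argument for an arbitrary location slope, hence an arbitrary Lipschitz constant. To finish I would exhibit an explicit family of maps whose location slope has unbounded norm yet for which $\PR$ stays convex, establishing the claimed independence from the Lipschitz constant.

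The main obstacle is the scale component. When $\Lambda$ genuinely depends on $\theta$, the inner map contains a bilinear $\theta$-$\zb$ interaction, so $J_{\zb}$ now depends on $\zb$ and the positive-semidefiniteness of the sandwich must be verified \emph{uniformly} over the support of $\zb$, which may be unbounded. Under joint convexity of $\ell$ this uniformity is automatic, since composing a convex function with the (realization-dependent but affine) inner map preserves convexity; the delicate case is a loss that is only strongly convex in $\theta$ and convex-and-smooth in $z$ without being jointly convex, where the mixed-curvature block of $\grad^2\ell$ can, when amplified by large $\|\zb\|$, overwhelm the $\gamma$-strong convexity in $\theta$. Pinning down the structural condition on $\ell$ (and on the interplay between $\Lambda$ and this mixed block) that keeps the sandwich PSD across the whole support — without reintroducing any dependence on the magnitude of the performative effect — is where the real work lies.
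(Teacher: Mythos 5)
Your route is genuinely different from the paper's, and it only partially closes the argument. You work pointwise: differentiate $h_{\zb}(\theta)=\ell(\Lambda(\theta)\zb+g(\theta);\theta)$ twice, note the affine inner map kills the curvature term, and reduce to positive semidefiniteness of the sandwich $J_{\zb}^\top\nabla^2\ell\,J_{\zb}$, which you can only certify under \emph{joint} convexity of $\ell$ in $(z,\theta)$. Two problems. First, your claim that joint convexity ``holds for natural losses that are $\beta$-smooth in $z$ and $\gamma$-strongly convex in $\theta$, e.g.\ the squared loss'' is false in general: $(y-x^\top\theta)^2$ is not jointly convex in $(x,\theta)$ (the bilinear term gives an indefinite Hessian), so your argument covers label-only performativity as in \exampleref{linear_reg} but not, say, strategic feature manipulation. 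Second, you explicitly leave open the case where $\ell$ is not jointly convex, and that is precisely the case the paper's \theoremref{convexity-location-fam} resolves. The missing ingredient is not a pointwise Hessian condition but a Young/AM--GM absorption at the level of curvature moduli: the cross term $\beta\,\E\|\Sigma(\theta-\theta')\zb+\mu(\theta-\theta')\|_2\,\|\theta-\theta'\|_2$ produced by smoothness and the Kantorovich--Rubinstein bound is absorbed into $\tfrac{\beta^2}{2\gamma_z}\|\theta-\theta'\|_2^2+\tfrac{\gamma_z}{2}\E\|\Sigma(\theta-\theta')\zb+\mu(\theta-\theta')\|_2^2$, where the second piece is exactly the surplus supplied by $\gamma_z$-strong convexity in $z$ via the coupling argument of \propref{sl-fam-strong-convexity} and \lemmaref{sc-identity}. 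This yields the $\epsilon$-free modulus $\gamma-\beta^2/\gamma_z$ under the condition $\gamma\gamma_z\geq\beta^2$ --- morally your $2\times 2$ joint-convexity condition, but imposed only on the moduli $(\gamma,\gamma_z,\beta)$ in expectation, not on the pointwise Hessian.

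Methodologically, the paper never differentiates twice or works per-realization of $\zb$: it establishes the convex-order/mixture-dominance property of location-scale families through an explicit martingale coupling ($\E[z'\,|\,z]=z$), strengthens it for strongly convex integrands, and then runs a first-order-condition argument entirely in expectation. Your pointwise approach is more demanding than necessary (convexity of $\E_{\zb}h_{\zb}$ does not require convexity of each $h_{\zb}$, and the paper's strong-convexity assumption in $\theta$ is itself an averaged one), which is why the scale component and unbounded supports cause you trouble that the paper's expectation-level argument never encounters. For the literal existential reading of the informal statement --- that $\PR$ \emph{can} be convex for arbitrarily large sensitivity --- your plan of exhibiting a label-only location family with unbounded $\|\mu\|_2$ does suffice, but it does not recover the general quantitative theorem the paper actually proves.
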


Finally, having established these structural properties, we turn to algorithms for finding performative optima. Modulo weak regularity assumptions, convexity alone is sufficient to apply classical zeroth-order algorithms in order to find optima in polynomial time. That said, the convergence rate of these algorithms is typically quite slow. 

To address this problem, we propose a \emph{two-stage} approach, by which the learner first creates an explicit model of the distribution map $\hat\D$, and then optimizes a proxy objective for the performative risk obtained by ``plugging in'' $\hat\D$ as if it were really the true distribution map. We instantiate this two-stage procedure in the context of location families, and prove that it optimizes the performative risk with significantly better sample efficiency then generic zeroth-order algorithms.

\subsection{Related Work}

We build on the recent line of work on performative prediction
started by Perdomo et al.~\cite{perdomo2020performative}. While previous papers in this area have focused on performative stability \cite{mendler2020stochastic, drusvyatskiy2020stochastic, brown2020performative}, we move past this solution concept and instead analyze conditions under which one can compute performatively optimal classifiers. 

Given that strategic classification is formally a special case of performative prediction (see \sectionref{experiments} or discussion in \cite{perdomo2020performative} for further details), the study of performative optimality has been implicitly considered in the growing body of work on strategic classification \cite{hardt2016strategic, milli2019social, hu2019disparate, shavit2020learning, bechavod2020causal, chen2020learning, tsirtsis2020decisions, haghtalab2020maximizing}. More specifically, performatively optimal classifiers correspond to Stackelberg equilibria in strategic classification. In contrast to papers within this literature, our analysis relies on identifying macro-level assumptions on the loss and the distribution shift which make the problem tractable, rather than specific micro-level assumptions on the costs or utilities of the agents. For example, Dong et al. \cite{dong2018strategic} prove that the institution's objective (performative risk) is convex by assuming that the agents are rational and compute best-responses according to particular utilities and cost functions. On the other hand, our conditions are on the distribution map and do not directly constrain behavior at the agent level. 

Similarly, several papers in strategic classification  \cite{dong2018strategic,munro2020learning} and policy design  \cite{wager2019experimenting} have recognized that one can apply zeroth-order algorithms \cite{flaxman2005, agarwal2010optimal, shamir2013} to find optima of the institution's risk. The main challenge in applying zeroth-order optimization is the fact that, in general, the performative risk might not satisfy any structural properties which would imply that its stationary points have low risk. One of the main contributions of this paper is precisely to identify under what conditions we can expect this behavior to hold.  

Several works within the economics literature \cite{frankel2019improving, munro2020learning} have also contrasted fixed points of retraining and institutional optima; these analyses resemble our comparisons of stability and optimality, albeit in a more specific setting. Furthermore, there are other settings beyond strategic classification that have similarly studied optimality in the face of performative effects, such as in the context of rankings or selection bias \cite{rosenfeld2020predictions, kilbertus2020fair, tabibian2020design}.


Lastly, our two-stage approach to minimizing the performative risk, whereby we first estimate a model of the distribution map and then optimize a proxy objective, is closely related to ideas in neighboring fields.  At a high level, this general principle has appeared in semiparametric statistics \cite{levit1976efficiency, ibragimov2013statistical, bickel1982adaptive, robinson1988root, newey1990semiparametric} and more recently in double machine learning \cite{chernozhukov2018double, chernozhukov2017double, mackey2018orthogonal}. Furthermore, this idea has been extensively studied in the controls literature where it is referred to as certainty equivalence \cite{theil1957note,simon1956dynamic,mania2019certainty,simchowitz2020naive}, or as model-based planning in reinforcement learning \cite{agarwal_planning}.

\subsection{Additional Preliminaries}
\sectionlabel{preliminaries}

As done by previous works in this area, we limit ourselves to considering predictive models parameterized by a finite-dimensional vector $\theta \in \Theta \subseteq \R^d$, where $\Theta$ is a closed, convex set. The distribution map $\D(\cdot)$ maps parameter vectors to data distributions over real-valued instances $z \in~\R^m$. While each model $\theta$ can induce a potentially distinct distribution $\D(\theta)$, we expect similar classifiers to induce similar distributions. This intuition is captured by the notion of $\epsilon$-\emph{sensitivity}, which is essentially a Lipschitz condition on the distribution map $\D(\cdot)$. We state that $\D(\cdot)$ is $\epsilon$-sensitive for some $\epsilon \geq 0$ if for all $\theta, \theta' \in \Theta$,
\begin{align}
	W_1(\D(\theta), \D(\theta')) \leq \epsilon \norm{\theta - \theta'}_2.
	\tag{A1}
\label{ass:sensitivity}
\end{align}
Here, $W_1$ denotes the Wasserstein-1 or earth mover's distance between two distributions.

\section{Contrasting Optimality and Stability}
\label{sec:stability_vs_optimality}

Up until now, all works within the performative prediction literature have focused on analyzing when different algorithms converge to stable points. While the primary motivation for stability was eliminating the need for retraining, it was observed as a useful byproduct that stable points can approximately minimize the performative risk. 

More specifically, Perdomo et al. ~\cite{perdomo2020performative} prove that all stable points and performative optima lie within $\ell_2$-distance at most $2 L_z \epsilon /\gamma$ of each other, where $\epsilon$ is the sensitivity of the distribution map, $\gamma$ denotes the strong convexity parameter of the loss, and $L_z$ denotes the Lipschitz constant of the loss \emph{in} $z$. At first glance, this result implicitly suggests that stable points also have good predictive performance. While this is sometimes the case, in many settings $L_z$ is large enough to make the bound vacuous. For example, there exist cases where the performative risk is strongly convex, but stable points actually \emph{maximize} the performative risk.

\begin{proposition}
\proplabel{quadratic}
For any $\gamma,\Delta> 0$, there exists a performative prediction problem where the loss is $\gamma$-strongly convex in $\theta$, yet the unique stable point $\thetaPS$ maximizes the performative risk and $\PR(\thetaPS)-\min_\theta\PR(\theta) \geq \Delta$.
\end{proposition}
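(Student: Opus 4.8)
The plan is to exhibit an explicit one-dimensional instance and tune its parameters. The essential conceptual point, which I would flag up front, is that stability and optimality do \emph{not} diverge for every strongly convex loss: with the squared loss $\ell(z;\theta)=\frac{\gamma}{2}(\theta-z)^2$ paired with a location family, a one-line computation shows that both $\thetaPS$ and the performative optimum equal $\mu_0/(1-\epsilon)$, so they coincide exactly. To force a gap one must break this coincidence. I would therefore work with a loss that is strongly convex in $\theta$ but only linear in $z$, namely $\ell(z;\theta)=\frac{\gamma}{2}\theta^2+z\theta$ on a bounded interval $\Theta=[\ell_0,u_0]\subset\R$, paired with the point-mass location map $\D(\theta)=\delta_{\mu_0+\epsilon\theta}$ with $\mu_0<0$ and $0<\epsilon<\gamma/2$. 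This map is exactly $\epsilon$-sensitive since $W_1(\D(\theta),\D(\theta'))=\epsilon\abs{\theta-\theta'}$, and the loss is $\gamma$-strongly convex in $\theta$.

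Next I would compute the two objects in closed form. Because the loss is linear in $z$, only the mean of $\D(\theta)$ enters, so $\PR(\theta)=\frac{\gamma}{2}\theta^2+(\mu_0+\epsilon\theta)\theta=(\frac{\gamma}{2}+\epsilon)\theta^2+\mu_0\theta$, a strongly convex parabola whose global minimizer sits at $\theta^\star=-\mu_0/(\gamma+2\epsilon)>0$. The constrained stable point solves $\thetaPS\in\argmin_{\theta\in\Theta}\ploss{\thetaPS}{\theta}$; its unconstrained best response is $-\mu(\thetaPS)/\gamma$, an affine map of $\thetaPS$ with slope $-\epsilon/\gamma$ of magnitude below $1$, which composed with the $1$-Lipschitz clip onto $\Theta$ is a contraction, so the stable point is unique.

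The construction is then to \emph{choose the domain} so that this unique stable point is pinned at the boundary that maximizes the convex $\PR$. Setting $\ell_0=\theta^\star=-\mu_0/(\gamma+2\epsilon)$ and $u_0=-\mu_0/(\gamma+\epsilon)$, a direct substitution gives $-\mu(u_0)/\gamma=u_0$, so $u_0$ is the (unique) fixed point; meanwhile the vertex of $\PR$ coincides with the left endpoint $\ell_0$, hence $\PR$ is increasing on $[\ell_0,u_0]$ and is maximized exactly at $\thetaPS=u_0$, while $\min_\theta\PR(\theta)=\PR(\ell_0)$. The suboptimality gap is then read off the parabola as $\PR(\thetaPS)-\min_\theta\PR(\theta)=(\frac{\gamma}{2}+\epsilon)(u_0-\ell_0)^2$, a fixed positive multiple of $\mu_0^2$; since $\gamma$ and $\epsilon$ are held fixed, I would drive this above any prescribed $\Delta$ simply by taking $\abs{\mu_0}$ large enough.

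The main obstacle is the conceptual one noted at the outset: for the most natural loss, stability and optimality agree, so the construction lives or dies on (i) selecting a loss that decouples the myopic gradient from the true performative gradient of $\PR$, and (ii) engineering the feasible set so that the constrained fixed point lands precisely on the maximizing boundary of $\PR$ rather than in its interior. Verifying that $u_0$ is genuinely a fixed point, and the unique one, rather than merely a boundary point close to the best response, is the delicate step; once the endpoints are solved for in terms of $\mu_0,\gamma,\epsilon$, the remaining monotonicity and gap estimates are routine.
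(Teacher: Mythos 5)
Your construction is correct, but it is genuinely different from the paper's. The paper takes $\D(\theta)$ to be a point mass at $\epsilon\theta$ with loss $\ell(z;\theta)=-\beta\,\theta^\top z+\frac{\gamma}{2}\|\theta\|_2^2$ and works in the regime $\epsilon>\frac{\gamma}{2\beta}$, so that $\PR(\theta)=(\frac{\gamma}{2}-\epsilon\beta)\|\theta\|_2^2$ is strictly \emph{concave}: the unique stable point at the origin is then the unconstrained global maximizer of $\PR$, and the gap $\geq\Delta$ is obtained by enlarging the radius of $\Theta$. That same two-line example is reused later to show the $\frac{\gamma}{2\beta}$ convexity threshold is sharp. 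You instead flip the sign of the coupling, add a shift $\mu_0$, keep $\epsilon$ small, and get a strongly \emph{convex} $\PR(\theta)=(\frac{\gamma}{2}+\epsilon)\theta^2+\mu_0\theta$; the work then goes into engineering the interval $\Theta=[\ell_0,u_0]$ so that the contraction fixed point lands exactly on the endpoint $u_0$ where the increasing branch of the parabola is maximized, with the gap scaled up through $|\mu_0|$. Your version buys something the paper's proof does not literally deliver: it substantiates the surrounding textual claim that $\PR$ can be strongly convex and the performative effects arbitrarily weak, yet the stable point still maximizes $\PR$ — in the paper's example the stable point maximizes $\PR$ only in the concave regime. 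The trade-offs are that your stable point is a maximizer only over the hand-tuned compact $\Theta$ (the paper's is a global maximizer), your argument needs the extra contraction/boundary bookkeeping, and it does not double as the non-convexity witness for Theorem 3.1. Your opening observation that the squared loss with a location family forces $\thetaPS=\thetaPO$ is also correct and is a worthwhile sanity check, though it is not needed for the proposition itself.
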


\begin{proof}
We prove the proposition by constructing an example. Let $z \sim \D(\theta)$ be a point mass at $\epsilon \theta$, and define the loss to be:
	\[
	\ell(z;\theta) = -\beta\cdot \theta^\top z + \frac{\gamma}{2} \|\theta\|_2^2,
	\]
for some $\beta\geq 0$. This loss is $\gamma$-strongly convex and the distribution map is $\epsilon$-sensitive.
A short calculation shows that the performative risk simplifies to
\begin{equation}
\label{eqn:pr-quadratic}
\PR(\theta) = \left(\frac{\gamma}{2} - \epsilon \beta\right) \cdot \|\theta\|_2^2.
\end{equation}
For $\epsilon\neq \gamma / \beta$, there is a unique performatively stable point at the origin, and if $\epsilon > \tfrac{\gamma}{2\beta}$ this point is the unique maximizer of the performative risk. Moreover, for $\epsilon > \tfrac{\gamma}{2\beta}$, $\min_\theta\PR(\theta) = (\gamma/2 - \epsilon \beta) \cdot \max_{\theta\in\Theta}\|\theta\|_2^2$. Therefore, depending on the radius of $\Theta$, the suboptimality gap of $\thetaPS$ can be arbitrarily large.
\end{proof}

In the above example, $\nabla_\theta \ell(z;\theta)$ is $\beta$-Lipschitz in $z$, a condition commonly referred to as \emph{smoothness} in prior work on performativity. The previous proposition thus shows that stable points can have an arbitrary suboptimality gap when $\epsilon>\frac{\gamma}{2\beta}$. This is important since $\epsilon<\frac{\gamma}{\beta}$ is the regime where previously studied algorithms for optimizing under performativity---such as repeated risk minimization or different variants of gradient descent \cite{perdomo2020performative, mendler2020stochastic}---converge to stability. Applying these methods when $\epsilon \in (\gamma / (2\beta), \gamma/\beta)$ would hence maximize the performative risk on this problem.

Moreover, we remark that the Lipschitz constant $L_z$  is equal to $\beta\cdot \max_{\theta\in\Theta}\|\theta\|_2$. Therefore, the results of \cite{perdomo2020performative} imply that stable points and optima are at distance at most $\frac{2L_z\epsilon}{\gamma} = \frac{2\beta\epsilon}{\gamma}\max_{\theta\in\Theta}\|\theta\|_2$. When $\epsilon > \frac{\gamma}{2\beta}$, as assumed in the proof of \propref{quadratic}, this bound on the distance becomes vacuous: $\|\thetaPS - \thetaPO\|_2\leq \max_{\theta\in\Theta}\|\theta\|_2$.
	
Lastly, we  point out that $\epsilon = \frac{\gamma}{2\beta}$ is a sharp threshold for convexity of the performative risk in this example, as can be seen in Equation \eqref{eqn:pr-quadratic}. In the following section, we show that this threshold behavior is not an artifact of this particular setting, but rather a phenomenon that holds more generally.

\section{Convexity of the Performative Risk}
\sectionlabel{structural_results}

We now introduce our main structural results illustrating how the performative risk can be convex in various natural settings, and hence amenable to direct optimization. Throughout our presentation, we adopt the following convention. We state that the performative risk is $\lambda$-convex, for some $\lambda\in\R$, if the objective,
\[
\PR(\theta) - \frac{\lambda}{2}\|\theta\|_2^2
\]
is convex. In other words, if $\lambda$ is positive, then $\PR(\theta)$ is $\lambda$-strongly convex. If $\lambda$ is negative, then adding the analogous  
regularizer $\tfrac{\lambda}{2} \norm{\theta}^2$ ensures $\PR(\theta)$ is convex. Furthermore, in addition to $\epsilon$-sensitivity, we will make repeated use of the following assumptions throughout the remainder of the paper. To facilitate readability, we let $\mathcal{Z} \defeq \cup_{\theta\in\Theta} \text{supp}(\D(\theta))$. We say that a loss function $\ell(z;\theta)$ is $\beta$-\emph{smooth} in $z$ if for all $\theta \in\Theta$ and $z, z' \in \mathcal{Z}$,
\begin{equation}
	\norm{\nabla_\theta\ell(z;\theta) - \nabla_\theta \ell(z';\theta)}_2 \leq \beta \norm{z - z'}_2.
\tag{A2}
\label{ass:smoothness_z}
\end{equation}
Furthermore, a loss function $\ell(z;\theta)$ is $\gamma$-\emph{strongly convex} in $\theta$ if for all $\theta, \theta',\theta_0 \in \Theta$,
\begin{align}
\E_{z\sim\D(\theta_0)} \ell(z;\theta) \geq \E_{z\sim\D(\theta_0)}\ell(z;\theta') \nonumber
 \quad + \E_{z\sim\D(\theta_0)}\nabla_\theta \ell(z;\theta')^\top (\theta-\theta') + \frac{\gamma}{2}\left\|\theta-\theta'\right\|_2^2.
\tag{A3a}
\label{ass:s_convexity_theta}
\end{align}
If $\gamma=0$, this assumption is equivalent to convexity. Similarly, we say that the loss is $\gamma_z$-strongly convex in $z$ if for all $\theta \in\Theta$ and $z, z' \in \mathcal{Z}$,
\begin{equation}
\ell(z;\theta) \geq \ell(z';\theta) + \nabla_{z} \ell(z';\theta)^\top (z' - z) + \frac{\gamma_z}{2}\left\|z - z'\right\|_2^2.
\tag{A3b}
\label{ass:s_convexity_z}
\end{equation}
Lastly, we state that a distribution map, loss pair $\left(\D(\cdot), \ell\right)$ satisfies \emph{mixture dominance} if the following condition holds for all $\theta, \theta',\theta_0 \in \Theta$ and $\alpha \in (0,1)$:
\begin{equation}
\ploss{\alpha \theta + (1-\alpha) \theta'}{\theta_0} \leq \E_{z \sim \alpha\D(\theta) + (1-\alpha)\D(\theta')} \ell(z; \theta_0).
\tag{A4}
\label{ass:mixture}
\end{equation}

Smoothness and strong convexity are standard and have appeared previously in the context of performative prediction. The mixture dominance condition is novel and plays a central role in our analysis of when the performative risk is convex. To provide some intuition for this condition, we recall the definition of the \emph{decoupled performative risk}:
\begin{equation*}
	\DPR(\theta,\theta') = \E_{z\sim\D(\theta)}\ell(z;\theta').
\end{equation*}
Notice that asserting convexity of the performative risk is equivalent to showing convexity of $\DPR(\theta,\theta)$ when both arguments are forced to be the same. While convexity \eqref{ass:s_convexity_theta} guarantees that $\DPR$ is convex in the second argument, mixture dominance \eqref{ass:mixture} essentially posits convexity of $\DPR$ in the first argument. Importantly, assuming convexity in each argument separately does \emph{not} directly imply that the performative risk is convex. 

On a more intuitive level, this assumption \eqref{ass:mixture} is essentially a stochastic dominance statement: the mixture distribution $\alpha\D(\theta)+(1-\alpha)\D(\theta')$ ``dominates'' $\D(\alpha\theta + (1-\alpha)\theta')$ under a certain loss function. Similar conditions have been extensively studied within the literature on stochastic orders \cite{shaked2007stochastic}, which we further discuss in \appendixref{stochastic_orders}. Part of our analysis relies on incorporating tools from this literature, and we believe that further exploring technical connections between this field and performative prediction could be valuable. For example, using results from stochastic orders we can show that \eqref{ass:mixture} holds when the loss is convex in $z$ and the distribution map $\D(\cdot)$ forms a \emph{location-scale family} of the form: \begin{equation}
\label{eqn:scale-loc-def}
	z_\theta\sim\D(\theta)~ \Leftrightarrow~ z_\theta\stackrel{d}{=} (\Sigma_0 + \Sigma(\theta)) \zb + \mu_0 + \mu \theta,
\end{equation}
where $\zb\sim\D_0$ is a sample from a fixed zero-mean distribution $\D_0$, and $\Sigma(\theta),\mu$ are linear maps  (see \propositionref{sl-weak-convexity} for a formal proof). Distribution maps of this sort are ubiquitous throughout the performative prediction literature and hence satisfy mixture dominance if the loss $\ell$ is convex. For instance, the distribution map for the strategic classification simulator in \cite{perdomo2020performative} is a location family. Other examples of location families can be found in previous work on strategic classification \cite{frankel2019improving, haghtalab2020maximizing}. Mixture dominance can also hold in discrete settings, e.g. $\D(\theta) = \text{Bernoulli}(a^\top \theta + b)$ satisfies this condition for any loss. Having provided some context on the mixture dominance condition, we can now state the main result of this section:
\begin{theorem}
\theoremlabel{convex-pr}
Suppose that the loss function $\ell(z;\theta)$ is $\gamma$-strongly convex in $\theta$ \eqref{ass:s_convexity_theta}, $\beta$-smooth in $z$ \eqref{ass:smoothness_z}, and that $\D(\cdot)$ is $\epsilon$-sensitive \eqref{ass:sensitivity}. If mixture dominance \eqref{ass:mixture} holds, then the performative risk is $\lambda$-convex for $\lambda=\gamma - 2\epsilon\beta$.
\end{theorem}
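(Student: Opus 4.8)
The plan is to verify the defining inequality of $\lambda$-convexity directly. Fixing $\theta,\theta'\in\Theta$ and $\alpha\in(0,1)$ and writing $\theta_\alpha \defeq \alpha\theta + (1-\alpha)\theta'$, it suffices to show $\PR(\theta_\alpha) \le \alpha\PR(\theta) + (1-\alpha)\PR(\theta') - \tfrac{\lambda}{2}\alpha(1-\alpha)\norm{\theta-\theta'}_2^2$ for $\lambda = \gamma - 2\epsilon\beta$, since this is exactly the convex-combination form of the condition that $\PR(\theta)-\tfrac{\lambda}{2}\norm{\theta}_2^2$ is convex. The whole argument hinges on the decoupled risk: mixture dominance will let me replace the intractable ``diagonal'' object $\PR(\theta_\alpha)=\DPR(\theta_\alpha,\theta_\alpha)$ by expressions in which the distribution argument is pinned at the endpoints $\theta,\theta'$, after which strong convexity can be applied to the fixed-distribution loss.

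First I would apply mixture dominance \eqref{ass:mixture} with $\theta_0=\theta_\alpha$. Since $\E_{z\sim\alpha\D(\theta)+(1-\alpha)\D(\theta')}\ell(z;\theta_\alpha) = \alpha\DPR(\theta,\theta_\alpha) + (1-\alpha)\DPR(\theta',\theta_\alpha)$, this immediately yields $\PR(\theta_\alpha) \le \alpha\DPR(\theta,\theta_\alpha) + (1-\alpha)\DPR(\theta',\theta_\alpha)$. The key feature is that both terms on the right now carry a \emph{fixed} distribution argument and a common evaluation point $\theta_\alpha$. Next, for each fixed distribution I would invoke strong convexity \eqref{ass:s_convexity_theta} in its first-order form, expanding $\DPR(\theta,\cdot)$ and $\DPR(\theta',\cdot)$ around $\theta_\alpha$. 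Using $\theta-\theta_\alpha = (1-\alpha)(\theta-\theta')$ and $\theta'-\theta_\alpha = -\alpha(\theta-\theta')$, then multiplying the two bounds by $\alpha$ and $1-\alpha$ respectively and adding, the two quadratic remainders combine into $\tfrac{\gamma}{2}\alpha(1-\alpha)\norm{\theta-\theta'}_2^2$, while the first-order terms collapse into a single cross term $\alpha(1-\alpha)\big(\E_{z\sim\D(\theta)}\nabla_\theta\ell(z;\theta_\alpha) - \E_{z\sim\D(\theta')}\nabla_\theta\ell(z;\theta_\alpha)\big)^\top(\theta-\theta')$.

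Controlling this cross term is the crux of the argument, and the only place where smoothness and sensitivity enter. Because $z\mapsto\nabla_\theta\ell(z;\theta_\alpha)$ is $\beta$-Lipschitz by \eqref{ass:smoothness_z}, the difference of its expectations under $\D(\theta)$ and $\D(\theta')$ is bounded, via the optimal coupling and Kantorovich--Rubinstein duality in its vector-valued form, by $\beta\, W_1(\D(\theta),\D(\theta')) \le \beta\epsilon\norm{\theta-\theta'}_2$ using $\epsilon$-sensitivity \eqref{ass:sensitivity}. Cauchy--Schwarz then bounds the cross term from below by $-\alpha(1-\alpha)\beta\epsilon\norm{\theta-\theta'}_2^2$. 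Substituting back gives $\alpha\DPR(\theta,\theta_\alpha)+(1-\alpha)\DPR(\theta',\theta_\alpha) \le \alpha\PR(\theta)+(1-\alpha)\PR(\theta') - (\tfrac{\gamma}{2}-\beta\epsilon)\alpha(1-\alpha)\norm{\theta-\theta'}_2^2$, and chaining with the mixture-dominance step closes the argument with $\tfrac{\lambda}{2}=\tfrac{\gamma}{2}-\beta\epsilon$.

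I expect the main obstacle to be the gradient cross term: one must argue carefully that a $\beta$-Lipschitz map $\R^m\to\R^d$ transports $W_1$ with the same constant $\beta$ (so that the difference of the two gradient expectations is genuinely $O(\beta\epsilon\norm{\theta-\theta'}_2)$), and keep track of the bookkeeping that makes the quadratic coefficients collapse exactly to $\alpha(1-\alpha)$. The conceptual move that makes everything go through is mixture dominance: without it, the distribution and the evaluation point move together in $\PR(\theta_\alpha)$, and strong convexity, which is stated only for a fixed distribution, cannot be applied.
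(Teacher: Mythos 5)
Your proof is correct and arrives at the same constant $\lambda=\gamma-2\epsilon\beta$, but it takes a genuinely different route from the paper's, so a comparison is worthwhile. The ingredients are the same in both arguments---mixture dominance to decouple the distribution argument from the evaluation point, strong convexity \eqref{ass:s_convexity_theta} applied to the fixed-distribution risk, and smoothness \eqref{ass:smoothness_z} plus $\epsilon$-sensitivity plus Kantorovich--Rubinstein to control the resulting gradient cross term---but the bookkeeping differs. The paper works with the \emph{first-order} characterization of convexity: it computes $\nabla_\theta\PR(\theta)=\E_{z\sim\D(\theta)}[\nabla_\theta\ell(z;\theta)]+\E_{z\sim\D(\theta)}[\ell(z;\theta)\nabla_\theta\log p_\theta(z)]$ via the score-function identity, recasts mixture dominance as the first-order inequality $\E_{z\sim\D(\theta)}[\ell(z;\theta_0)\nabla_\theta\log p_\theta(z)]^\top(\theta'-\theta)\leq \DPR(\theta',\theta_0)-\DPR(\theta,\theta_0)$ with $\theta_0=\theta$, and then applies strong convexity under the fixed distribution $\D(\theta')$. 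You instead use the \emph{secant} form of $\lambda$-convexity, apply \eqref{ass:mixture} verbatim at $\theta_0=\theta_\alpha$, and expand each decoupled term around $\theta_\alpha$ using \eqref{ass:s_convexity_theta} directly; your coefficient bookkeeping checks out, since $\alpha(1-\alpha)^2+(1-\alpha)\alpha^2=\alpha(1-\alpha)$ and the secant-form deficit of the regularizer $\tfrac{\lambda}{2}\|\theta\|_2^2$ is exactly $\tfrac{\lambda}{2}\alpha(1-\alpha)\|\theta-\theta'\|_2^2$. Both routes terminate in the same cross term $\bigl(\E_{z\sim\D(\theta)}[\nabla_\theta\ell(z;\cdot)]-\E_{z\sim\D(\theta')}[\nabla_\theta\ell(z;\cdot)]\bigr)^\top(\theta-\theta')$ bounded by $\epsilon\beta\|\theta-\theta'\|_2^2$, and your coupling argument for transporting the $\beta$-Lipschitz gradient map through $W_1$ is the standard and correct way to justify that step. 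What your version buys: it never requires $\D(\theta)$ to admit a density differentiable in $\theta$, nor the interchange of $\nabla_\theta$ with the integral, so it avoids the implicit regularity assumptions behind the paper's gradient computation and is in that sense slightly more general. What the paper's version buys: the explicit score-function decomposition isolates the ``anticipation'' term $\E[\ell\,\nabla_\theta\log p_\theta]$, which the paper reuses when motivating its algorithms.
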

Together with the example from the proof of \propref{quadratic}, this theorem shows that $\frac{\gamma}{2 \beta }$ is a sharp threshold for convexity of the performative risk. If $\epsilon$ is strictly less than this threshold, then under mixture dominance and appropriate conditions on the loss, the performative risk is strongly convex by \theoremref{convex-pr}. On the other hand, if $\epsilon$ is above this threshold, the example from \propref{quadratic} shows that there exists a performative prediction instance which satisfies the remaining assumptions, yet is non-convex; in particular, for $\epsilon > \frac{\gamma}{2\beta}$ the performative risk is strictly concave in that example. This threshold was also implicitly observed by Perdomo et al.~\cite{perdomo2020performative} in the proof of Proposition 4.2 as byproduct of showing that the performative risk can be non-convex for $\epsilon \leq \frac{\gamma}{\beta}$. However, they provide no general analysis of when the performative risk is convex. Note that all of the above examples satisfy mixture dominance.



While the threshold $\epsilon = \gamma / (2\beta)$ is in general tight as argued above, for certain families of distribution maps the conclusion of \theoremref{convex-pr} can be made considerably stronger. Indeed, in some cases the performative risk is convex \emph{regardless} of the magnitude of performative effects, as observed for the following location family.

\begin{example}
\examplelabel{linear_reg}
Consider the following stylized model of predicting the final vote margin in an election contest. Features $x$, such as past polling averages, are drawn i.i.d. from a static distribution,
$x\sim \D_x$. Since predicting a large margin in either direction can dissuade people from voting, we consider outcomes drawn from the conditional distribution:  $y|x\sim g(x) + \mu^\top \theta + \xi$, where $g:\R^d\rightarrow\R$ is an arbitrary map, $\mu\in\R^d$ is a fixed vector, and $\xi$ is a zero-mean noise variable. If $\ell$ is the squared loss, $\ell((x,y);\theta) = \frac{1}{2}(y - x^\top \theta)^2$, or the absolute loss, $\ell((x,y);\theta) = |y - x^\top \theta|$, then the performative risk is convex for any $g$ and $\mu$.
\end{example}

The proof follows by simply observing that in both cases, the performative risk  can be written as a linear function in $\theta$ composed with a convex function. Another interesting property of this example is that the distribution map is $\epsilon$-sensitive with $\epsilon = \norm{\mu}_2$, yet the sensitivity parameter plays no role in the characterization of convexity. Motivated by this observation, we specialize the analysis in \theoremref{convex-pr} to the particular case of location-scale families, and obtain a result that is at least as tight as the previous theorem.
\begin{theorem}
\theoremlabel{convexity-location-fam}
Suppose that $\ell(z;\theta)$ is $\gamma$-strongly convex in $\theta$ \eqref{ass:s_convexity_theta}, $\beta$-smooth \eqref{ass:smoothness_z}, and $\gamma_z$-strongly convex in $z$ \eqref{ass:s_convexity_z}. Furthermore, suppose that $\D(\theta)$ forms a location-scale family \eqref{eqn:scale-loc-def} with $\epsilon$ as its sensitivity parameter\footnote{The sensitivity parameter $\epsilon$ for location-scale families can be explicitly bounded in terms of the parameters $\mu$ and $\Sigma(\theta)$; see \remarkref{eps_bound} in the Appendix.}. Define  $\Sigma_{\zb}$ to be the covariance matrix of $\zb\sim\D_0$, and let
\begin{align*}
\sigma_{\min}(\mu)=\min_{\|\theta\|_2=1}\|\mu\theta\|_2, \sigma_{\min}(\Sigma) &=\min_{\|\theta\|_2=1}\|\Sigma_{\zb}^{1/2}\Sigma(\theta)^\top\|_F.
\end{align*}
 Then, the performative risk is $\lambda$-convex for $\lambda$ equal to:
 \[ \max\{\gamma - \beta^2/\gamma_z, \gamma - 2\epsilon\beta + \gamma_z(\sigma_{\min}^2(\mu) + \sigma_{\min}^2(\Sigma))\}.\]
\end{theorem}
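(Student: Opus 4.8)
The plan is to reduce the claim to a pointwise lower bound on the Hessian of the performative risk and to exploit the defining feature of a location-scale family: for each fixed draw $\zb\sim\D_0$ the instance $z(\theta,\zb)\defeq(\Sigma_0+\Sigma(\theta))\zb+\mu_0+\mu\theta$ is an \emph{affine} function of $\theta$. Writing $\PR(\theta)=\E_{\zb\sim\D_0}\ell(z(\theta,\zb);\theta)$ and differentiating twice along a unit direction $v$, the affine dependence annihilates the second-order contribution coming through $z$, and the chain rule yields
\[
v^\top\nabla^2\PR(\theta)\,v=\E_{\zb}\!\left[w_\zb^\top H_{zz}\,w_\zb+2\,w_\zb^\top H_{z\theta}\,v+v^\top H_{\theta\theta}\,v\right],
\]
where $w_\zb=\Sigma(v)\zb+\mu v$ is the directional derivative of $z$ and $H_{zz},H_{z\theta},H_{\theta\theta}$ are the blocks of the Hessian of $\ell$ evaluated at $(z(\theta,\zb),\theta)$. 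Showing $v^\top\nabla^2\PR(\theta)v\ge\lambda$ for all unit $v$ and all $\theta$ is equivalent to $\lambda$-convexity.

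Next I would feed the three structural assumptions into the three terms. Strong convexity in $z$ \eqref{ass:s_convexity_z} gives $H_{zz}\succeq\gamma_z I$ pointwise, hence $w_\zb^\top H_{zz}w_\zb\ge\gamma_z\norm{w_\zb}^2$; smoothness in $z$ \eqref{ass:smoothness_z} bounds the mixed block, $\norm{H_{z\theta}v}\le\beta$, so the cross term is at least $-2\beta\norm{w_\zb}$; and strong convexity in $\theta$ \eqref{ass:s_convexity_theta}, applied with base point $\theta_0=\theta$ so that $z(\theta,\zb)$ is distributed as $\D(\theta)$, gives $\E_\zb[v^\top H_{\theta\theta}v]\ge\gamma$. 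The two entries of the maximum arise from two ways of combining these. For the first, I complete the square pointwise in $\zb$: $\gamma_z\norm{w_\zb}^2-2\beta\norm{w_\zb}\ge-\beta^2/\gamma_z$, which after taking expectations yields $v^\top\nabla^2\PR\,v\ge\gamma-\beta^2/\gamma_z$.

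For the second entry I keep the three terms separate and compute the two moments of $w_\zb$ explicitly. Since $\zb$ is zero-mean, the linear and constant parts of $w_\zb$ decouple, giving $\E_\zb\norm{w_\zb}^2=\norm{\mu v}^2+\Tr(\Sigma(v)\Sigma_{\zb}\Sigma(v)^\top)=\norm{\mu v}^2+\norm{\Sigma_{\zb}^{1/2}\Sigma(v)^\top}_F^2$, which for unit $v$ is at least $\sigma_{\min}^2(\mu)+\sigma_{\min}^2(\Sigma)$ by definition of the two minimal singular values. The first-moment term is controlled by sensitivity: coupling $\D(\theta)$ and $\D(\theta')$ through a common $\zb$ shows $\E_\zb\norm{w_\zb}\le\epsilon$ for the sensitivity parameter identified in \remarkref{eps_bound}. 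Substituting $\gamma_z\E\norm{w_\zb}^2\ge\gamma_z(\sigma_{\min}^2(\mu)+\sigma_{\min}^2(\Sigma))$, the cross-term bound $-2\beta\E\norm{w_\zb}\ge-2\epsilon\beta$, and $\E v^\top H_{\theta\theta}v\ge\gamma$ gives $v^\top\nabla^2\PR\,v\ge\gamma-2\epsilon\beta+\gamma_z(\sigma_{\min}^2(\mu)+\sigma_{\min}^2(\Sigma))$. Taking the larger of the two lower bounds, valid for all unit $v$ and all $\theta$, finishes the proof; sending $\gamma_z\to0$ recovers \theoremref{convex-pr}, confirming that this bound is never worse.

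The main obstacle is less the algebra than making the second-order argument legitimate under the first-order hypotheses as stated: \eqref{ass:s_convexity_theta}--\eqref{ass:s_convexity_z} and \eqref{ass:smoothness_z} are phrased as inequalities and gradient-Lipschitz conditions and do not presuppose that $\ell$ is twice differentiable. I would resolve this either by proving the bound first under a $C^2$ assumption and removing it by mollification, or by recasting the estimate at the level of the decoupled risk $\DPR$: since \eqref{ass:s_convexity_z} forces convexity in $z$, mixture dominance \eqref{ass:mixture} holds and reduces $\lambda$-convexity to controlling $\DPR(\theta,\theta')+\DPR(\theta',\theta)-\PR(\theta)-\PR(\theta')$, with the $\gamma-2\epsilon\beta$ part coming from a Kantorovich--Rubinstein bound against the sensitivity and the additional $\gamma_z$ term requiring a quantitative refinement of mixture dominance that tracks the strong-convexity-in-$z$ gap of $\ell$ over the mixture. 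A secondary point needing care is pinning down which sensitivity constant enters the cross term, since only the common-$\zb$ coupling, not an arbitrary optimal coupling, yields the clean bound $\E_\zb\norm{w_\zb}\le\epsilon$; this is exactly the explicit sensitivity estimate for location-scale families referenced in \remarkref{eps_bound}.
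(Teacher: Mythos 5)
Your route is genuinely different from the paper's. You reparameterize $\PR(\theta)=\E_{\zb}\ell(z(\theta,\zb);\theta)$ with $z(\theta,\zb)$ affine in $\theta$ and bound the Hessian directly, splitting it into the $H_{zz}$, $H_{z\theta}$, and $H_{\theta\theta}$ blocks; the two entries of the maximum then come from a pointwise completion of the square versus keeping the three terms separate. The paper instead stays entirely at first order: it writes $\nabla\PR$ via the score function, handles the score term through a quantitative strengthening of the convex-order/mixture-dominance inequality for location-scale families (\propref{sl-fam-strong-convexity} together with \lemmaref{sc-identity}, proved by an explicit coupling), and bounds the remaining cross term by Kantorovich--Rubinstein duality, once against the coupling-based Wasserstein bound of \lemmaref{wass-dist-sc-fam} (giving $\gamma-\beta^2/\gamma_z$ via AM--GM, which is your completion of the square) and once against $\epsilon$-sensitivity (giving the $-2\epsilon\beta$ term). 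Your approach buys a shorter and more transparent derivation that bypasses the stochastic-order machinery entirely; the paper's buys validity under the purely first-order hypotheses \eqref{ass:s_convexity_theta}--\eqref{ass:s_convexity_z} as stated, without any $C^2$ or mollification step. You correctly flag the differentiability issue.

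There is, however, one step where your argument as written does not go through and where your proposed repair is the wrong one. You bound the cross term by $-2\beta\,\E_{\zb}\|w_{\zb}\|$ and then claim $\E_{\zb}\|w_{\zb}\|\leq\epsilon$ ``by coupling $\D(\theta)$ and $\D(\theta')$ through a common $\zb$.'' But $\epsilon$-sensitivity bounds the \emph{infimum} over couplings; the common-$\zb$ coupling only certifies $W_1(\D(\theta),\D(\theta'))\leq\E\|\Sigma(\theta-\theta')\zb+\mu(\theta-\theta')\|_2$, not the reverse inequality $\E\|w_{\zb}\|\leq\epsilon$. Your suggested fix---taking $\epsilon$ to be the explicit constant $\sqrt{\sigma_{\max}^2(\mu)+\sigma_{\max}^2(\Sigma)}$ from \remarkref{eps_bound}---would prove a strictly weaker statement than the theorem, which is stated for the true sensitivity parameter. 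The correct repair within your framework is to notice that the cross term depends only on the marginal: $\E_{\zb}[w_{\zb}^\top H_{z\theta}v]$ is the derivative at $t=0$ of $\E_{z\sim\D(\theta+tv)}[\nabla_\theta\ell(z;\theta)^\top v]$, a functional of $\D(\theta+tv)$ alone whose integrand is $\beta$-Lipschitz in $z$ by \eqref{ass:smoothness_z}; Kantorovich--Rubinstein duality against $W_1(\D(\theta+tv),\D(\theta))\leq\epsilon t$ then yields the clean bound $-2\beta\epsilon$ with the genuine sensitivity parameter, exactly as in the paper's treatment of equation \eqref{eqn:duality_step}. With that substitution (and the $C^2$ caveat handled as you propose), your argument is sound and matches the stated constants.
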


This tighter bound leverages the fact that some losses are strongly convex in the performative variables, such as the squared loss when only the outcome variable exhibits performative effects. In general, one can achieve a tighter analysis of when the performative risk is convex by distinguishing between variables which are \emph{static}, whose distribution is the same under $\D(\theta)$ for all $\theta$, and performative variables which are influenced by the deployed classifier. For the most part we avoid this distinction in the main body for the sake of readability, however, we elaborate on how the analysis can be strengthened in \appendixref{static_vs_perf}. We now illustrate an application of \theoremref{convexity-location-fam} on a scale family example.

\begin{example}
Suppose that $x>0$ is a one-dimensional feature drawn from a fixed distribution $\D_x$, and let $y|x\sim \theta x\cdot\text{Exp}(1)$ be distributed as an exponential random variable with mean $\theta x$. Let the loss be the squared loss, $\ell((x,y);\theta) = \frac{1}{2}(y-\theta\cdot x)^2$ and let $\Theta = \R_{+}$. Note that this example exhibits a self-fulfilling prophecy property whereby all solutions are performatively stable. On the other hand, $\PR(\theta) = \theta^2 \E x^2$, and the unique performative optimum is $\thetaPO = 0$. Again, we see how stability has no bearing on whether a solution has low performative risk.

However, we note that the loss is 1-strongly convex in $y$. Furthermore, by averaging over the static features, we observe that $\PR(\theta)$ is $\E x^2$-strongly convex in $\theta$ and $\E x$-smooth in $y$. Therefore, according to \theoremref{convexity-location-fam}, the performative risk is convex and hence tractable to optimize, since $\gamma - \beta^2/\gamma_z = \E x^2-(\E x)^2\geq 0$ by Jensen's inequality.
\end{example}

While this example, like most others in this section, is intended as a toy problem to provide the reader with some intuition regarding the intricacies of performativity, many instances of performative prediction in the real world do exhibit a self-fulfilling prophecy aspect whereby predicting a particular outcome increases the likelihood that it occurs. For instance, predicting that a student is unlikely to do well on a standardized exam may discourage them from studying in the first place and hence lower their final grade. Settings like these where stability is a vacuous guarantee of performance remind us how developing reliable predictive models requires going outside the stability echo chamber.  

As a final note, to prove the results in this section, we have imposed additional assumptions such as mixture dominance, or analyzed the special case of location-scale families. The reader might naturally ask whether these settings are so restrictive that one can optimize the performative risk using previous optimization methods for performative prediction which find stable points. Or in particular, whether stable points and performative optima now identify. 

It turns out that both solutions can still have qualitatively different behavior, regardless of the strength of performative effects. First, notice that the example in the proof of \propref{quadratic} is a location family, and as such it satisfies mixture dominance. In that example, when $\epsilon\in(\frac{\gamma}{2\beta},\frac{\gamma}{\beta})$, methods for finding stable points converge to a maximizer of the performative risk; however, this is outside the regime where the performative risk is convex. In what follows, by relying on \theoremref{convexity-location-fam}, we provide another scale family example where the performative risk is convex regardless of $\epsilon$, yet stable points can be arbitrarily suboptimal.

\begin{example}
Suppose that $\D(\theta) = \mathcal{N}(\mu,\epsilon^2\theta^2)$ for some $\mu\in\R$ and $\epsilon> 0$. This distribution map is $\epsilon$-sensitive. Furthermore, if $\ell$ is the squared loss, $\ell(z;\theta) = \frac{1}{2}(z-\theta)^2$, then there is a unique stable point $\thetaPS = \mu$. On the other hand, $\thetaPO = \mu/(1+\epsilon^2)$. 

Notice how, contrary to the performative optimum $\thetaPO$,  the stable point $\thetaPS$ is independent of $\epsilon$ and hence oblivious to the performative effects. Depending on $\mu$, the stable point can be arbitrarily suboptimal, since $\PR(\thetaPS) - \PR(\thetaPO)=\Omega(\mu^2)$. Note also that, according to \theoremref{convexity-location-fam}, the performative risk is $\gamma - 2\epsilon\beta + \gamma_z \sigma_{\min}^2(\Sigma) = 1-2\epsilon + \epsilon^2$-convex. Since $1-2\epsilon + \epsilon^2 = (\epsilon-1)^2\geq 0$, the performative risk is always convex and hence tractable to optimize.
\end{example}

\section{Optimization Algorithms}
\sectionlabel{algorithms}

Having identified conditions under which the performative risk is convex, we now consider methods for efficiently optimizing it. One of the main
challenges of carrying out this task is that, even in convex settings, the learner can only access the objective via noisy function
evaluations corresponding to classifier deployments. Without
knowledge of the underlying distribution map, it is infeasible to compute gradients of
the performative risk. A naive solution is to apply a zeroth-order method, however, these algorithms are in general hard to tune, and their performance scales poorly with the problem dimension. 

Our main algorithmic contribution is to show how one can address these issues by creating an explicit \emph{model} of the distribution map and then optimizing a proxy objective for the performative risk offline. We refer to this as the two-stage procedure for optimizing the performative risk and show it is provably efficient for the case of location families.   

To develop further intuition, consider the following simple example. Let $z \sim \mathcal{N}(\epsilon \theta,  1)$ be a one-dimensional Gaussian and let $\ell(z;\theta) = \frac{1}{2}(z - \theta)^2$ be the squared loss. Then, the performative risk, $\PR(\theta) = \tfrac{1}{2}(\epsilon - 1)^2 \theta^2$, is a simple, convex function for all values of $\epsilon$ (as indeed confirmed by \theoremref{convexity-location-fam}, since $\gamma - 2\epsilon\beta + \gamma_z \sigma_{\min}^2(\mu) = 1-2\epsilon + \epsilon^2\geq 0$). However, gradients are unavailable since they depend on the density of $\D(\theta)$, denoted $p_\theta$, which is typically unknown:
\begin{align*}
\nabla_\theta \PR(\theta) &= \E_{z \sim \D(\theta)} \nabla_\theta \ell(z;\theta) 
    + \E_{z \sim \D(\theta)}\ell(z;\theta) \grad_\theta \log p_\theta(z)\\
    &= \E_{z \sim \D(\theta)} - (z - \theta) + \epsilon (\epsilon -1 )\theta. 
\end{align*}
Despite the simplicity of this example, earlier approaches to optimization in performative prediction, such as repeated retraining \cite{perdomo2020performative}, fail on this problem. The reason is that they essentially ignore the second term in the gradient computation which requires explicitly anticipating performative effects. For example, retraining computes the sequence of updates $\theta_{t+1} = \argmin_\theta \E_{z\sim \D(\theta_t)} \tfrac{1}{2}(z - \theta)^2 = \epsilon \theta_t$, which diverges for $|\epsilon| > 1$. 
\subsection{Generic Derivative-Free Methods}
\sectionlabel{derivative_free_methods}
Having observed the difficulty of computing gradients, the most natural starting
point for optimizing the performative risk is to consider derivative-free
methods for convex optimization~\cite{flaxman2005, agarwal2010optimal,shamir2013}. These methods work by constructing a noisy estimate of the
gradient by querying the objective function at a randomly perturbed point around
the current iterate. For instance, Flaxman et al.~\cite{flaxman2005} sample a vector $u \sim
\text{Unif}(\cS^{d-1})$ to get a slightly biased gradient estimator,
\begin{align*}
    \grad_\theta \PR(\theta) \approx \frac{d}{\delta} \E[\PR(\theta + \delta u)u],
\end{align*}
for some small $\delta>0$. Generic derivative-free algorithms for convex optimization require few
assumptions beyond those given in the previous section
to ensure convexity. Moreover, they guarantee convergence to a performative
optimum given sufficiently many samples.  However, their rate of convergence
can be slow and scales poorly with the problem dimension. In general,
zeroth-order methods require $\widetilde{O}(d^2 / \Delta^2)$ samples to obtain a
$\Delta$-suboptimal point~\citep{agarwal2010optimal,shamir2013}, which can be
prohibitively expensive if samples are hard to come by.

\subsection{Two-Stage Approach}
In cases where we have further structure, an alternative solution to derivative-free methods is to utilize a \emph{two-stage}
approach to optimizing the performative risk. In the first stage,
we estimate a coarse model of the distribution map, $\estD(\cdot)$ via experiment design. Then, in
the second stage, the algorithm optimizes a proxy to the performative risk treating the estimated $\estD$ as if it were the true distribution map:
\begin{align*}
    \thetaPOhat
    \in \argmin_\theta \PRh(\theta)
    \defeq \approxploss{\theta}{\theta}.
\end{align*}
The exact implementation of this idea depends on the problem setting at hand; to make things concrete, we instantiate the approach in the context of location families and prove that it optimizes the performative risk with significantly better sample
complexity than generic zeroth-order methods. For the remainder of this section, we assume the distribution map $\D$ is parameterized by a location family
\begin{align*}
    z_\theta \sim \D(\theta)
    ~ \Leftrightarrow~ 
    z_\theta\stackrel{d}{=} \zb + \mu\theta,
\end{align*} 
where the matrix $\mu \in \R^{m \times d}$ is an unknown parameter, and $\zb\sim\D_0$ is
a zero-mean random variable.\footnote{The variable $z_0$ being zero-mean is only to simplify the exposition; the same analysis carries over when there is an additional intercept term. Similarly, the choice of Gaussian noise in the experiment design phase of Algorithm \ref{alg:model_based_location_fam} is made for convenience. In general, any subgaussian distribution with full rank covariance would suffice.}

As discussed previously, location-scale families encompass many formal
examples discussed in prior work. They capture the intuition that in performative
settings, the data points are composed of a \emph{base} component $z_0$,
representing the natural data distribution in the absence of performativity, and an additive performative term.  


\begin{algorithm}[tb]
   \caption{Two-Stage Algorithm for Location Families}
   \label{alg:model_based_location_fam}
\begin{algorithmic}
    \STATE \textbf{Stage 1:} Construct a model of the distribution map
    \STATE // Estimate location parameter $\mu$ with experiment design
    \FOR{$i=1$ {\bfseries to} $n$}
        \STATE -Sample and deploy classifier $\theta_i \simiid \normal{0}{I_d}$.
        \STATE -Observe $z_i \sim \D(\theta_i)$.
    \ENDFOR
    \STATE -Estimate $\mu$ via ordinary least squares, 
    $\estmu \in \argmin_{\mu} \sum_{i=1}^n \norm{z_i - \mu \theta_i}_2^2$.
    \STATE // Gather samples from the base distribution
    \FOR{$j=n+1$ {\bfseries to} $2n$}
        \STATE -Deploy classifier $\theta_j = 0$, and observe $z_j \sim \D(0)$.
    \ENDFOR
    \STATE \textbf{Stage 2:} Minimize a finite-sample approximation of the performative risk,
    $\argmin_{\theta \in \Theta} \frac{1}{n}\sum_{j=n+1}^{2n} \ell(z_j +
    \estmu\theta; \theta)$.
\end{algorithmic}
\end{algorithm}

In the first stage of our two-stage procedure we build a model of the distribution map $\estD$ that in effect 
allows us to draw samples $z \sim \estD(\theta) \approx \D(\theta)$. To do this, we perform
experiment design to recover the unknown parameter $\mu$ which captures the performative effects. In particular, we sample and deploy $n$ classifiers
$\theta_i$, $i\in[n]$, observe data $z_i \sim
\D(\theta_i)$, and then construct an estimate $\estmu$ of the location map
$\mu$ using ordinary least squares.  We then gather samples from the base distribution
$\D_0$ by repeatedly deploying the zero classifier.  In the
location-family model, deploying the zero classifier ensures we observe data
points $\zb$, without performative effects. With both of these components, given
any $\theta'$, we can simulate $z \sim \estD(\theta')$ by taking $z =
\zb + \estmu\theta'$. 

In the second stage, we use the estimated model to construct a proxy objective. Define the perturbed performative risk:
\begin{align*}
    \PRh(\theta)
    = \approxploss{\theta}{\theta}
    = \E_{\zb \sim \D_0}\ell(\zb + \estmu\theta; \theta).
\end{align*}
Note that $\PR(\theta)= \E_{\zb \sim \D_0}\ell(\zb + \mu\theta; \theta)$.
Using the estimated parameter $\estmu$ and samples $z_i \sim \D_0$, we
can construct a finite-sample approximation to the perturbed performative risk and find the following optimizer:
\begin{align*}
    \thetahat_n 
    \in \argmin_{\theta \in \Theta} \PRh_n(\theta)
    &\defeq \frac{1}{n}\sum_{i=n+1}^{2n} \ell(z_i + \estmu\theta; \theta).
\end{align*}
The main technical result in this section shows that, under appropriate regularity assumptions on the loss,
Algorithm~\ref{alg:model_based_location_fam} efficiently approximates the performative optimum. In particular, when the data dimensionality $m$ is comparable to the model dimensionality $d$, i.e. $m = O(d)$, then computing a $\Delta$-suboptimal classifier requires $O(d / \Delta)$ samples. In contrast, the derivative-free methods considered previously require $\widetilde{O}(d^2/\Delta^2)$ samples to compute a classifier of similar quality. The formal statement and proof of this result is deferred to~\appendixref{model_based_analysis}. 
\begin{theorem}[Informal]
Under appropriate smoothness and strong convexity assumptions on the loss
$\ell$, if the distribution of $\zb$ is subgaussian, and if 
the number of samples $n \geq \Omega\paren{d + m + \log(1/\delta)}$, then,
with probability $1-\delta$, Algorithm~\ref{alg:model_based_location_fam}
returns a point $\thetahat_n$ such that
\begin{align*}
    \PR(\thetahat_n) - \PR(\thetaPO)
    \leq \mathcal{O}\paren{\frac{d + m + \log(1/\delta)}{n} 
                    + \frac{1}{\delta n}}.
\end{align*}
\end{theorem}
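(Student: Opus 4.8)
The plan is to control the excess performative risk by comparing three objectives: the true risk $\PR(\theta)=\E_{\zb\sim\D_0}\ell(\zb+\mu\theta;\theta)$, the \emph{perturbed} risk $\PRh(\theta)=\E_{\zb\sim\D_0}\ell(\zb+\estmu\theta;\theta)$ that replaces $\mu$ by the Stage-1 estimate $\estmu$, and the empirical proxy $\PRh_n(\theta)=\frac1n\sum_{j=n+1}^{2n}\ell(z_j+\estmu\theta;\theta)$ that the algorithm actually minimizes. Under the location-family structure together with \eqref{ass:smoothness_z} and \eqref{ass:s_convexity_theta}, $\PR$ is $\lambda$-strongly convex for the $\lambda$ of \theoremref{convexity-location-fam}, which I would assume the hypotheses force to be positive. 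Strong convexity gives the Polyak--\L ojasiewicz bound $\PR(\thetahat_n)-\PR(\thetaPO)\le \frac{1}{2\lambda}\norm{\nabla\PR(\thetahat_n)}^2$, so it suffices to bound the gradient of the true risk at the returned point. Since $\thetahat_n$ is a stationary point of $\PRh_n$, we have $\nabla\PRh_n(\thetahat_n)=0$, and I would write $\nabla\PR(\thetahat_n)=\big(\nabla\PR(\thetahat_n)-\nabla\PRh(\thetahat_n)\big)+\big(\nabla\PRh(\thetahat_n)-\nabla\PRh_n(\thetahat_n)\big)$, splitting the error into a \emph{model} term from $\estmu\neq\mu$ and a \emph{sampling} term from the finite-sample average.

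For the sampling term I would establish uniform concentration of the empirical gradient around the population gradient of the perturbed risk. Differentiating through the location family gives $\nabla\PRh(\theta)=\E_{\zb}\big[\estmu^\top\nabla_z\ell(\zb+\estmu\theta;\theta)+\nabla_\theta\ell(\zb+\estmu\theta;\theta)\big]$, and its empirical counterpart replaces $\E_{\zb}$ by the Stage-2 average over the fresh samples $z_j\sim\D_0$. Crucially these samples are independent of Stage 1, so after conditioning on $\estmu$ I can treat it as fixed and invoke subgaussianity of $\zb$ together with the smoothness and Lipschitz properties of $\ell$ to obtain a vector Bernstein/chaining bound scaling like $\sqrt{(d+m+\log(1/\delta))/n}$; squaring produces the first term $\frac{d+m+\log(1/\delta)}{n}$.

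For the model term I would first show, using \eqref{ass:smoothness_z} and Lipschitzness of $\nabla_z\ell$, that $\norm{\nabla\PR(\theta)-\nabla\PRh(\theta)}\le C(1+\norm{\theta})\,\norm{\estmu-\mu}_{\mathrm{op}}$ for a constant $C$ depending only on the smoothness constants; the chain-rule computation above makes each difference Lipschitz in the location parameter. It then remains to bound the ordinary-least-squares error. Writing $\estmu-\mu=\big(\frac1n\sum_i \zb^{(i)}\theta_i^\top\big)\big(\frac1n\sum_i\theta_i\theta_i^\top\big)^{-1}$ with $\theta_i\simiid\normal{0}{I_d}$ and $\zb^{(i)}$ zero-mean subgaussian and independent of $\theta_i$, the numerator concentrates at rate $\sqrt{(m+d+\log(1/\delta))/n}$ and the Gaussian design has smallest eigenvalue bounded below once $n\gtrsim d+\log(1/\delta)$, which yields the high-probability part of the $\frac{d+m}{n}$ contribution. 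The residual $\frac{1}{\delta n}$ term arises because the polynomial tail of the inverse sample covariance is not subgaussian: I would control the corresponding dimension-free lower-order contribution only in expectation and convert it to high probability via Markov's inequality, which replaces a $\log(1/\delta)$ factor by $1/\delta$. A point to watch is that $\norm{\thetahat_n}$ appears in the model bound, so I would either use boundedness of $\Theta$ or first establish $\norm{\thetahat_n-\thetaPO}=O(1)$ from strong convexity to keep this factor controlled.

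The main obstacle I anticipate is the coupling between the two stages: the estimate $\estmu$ enters both the objective being optimized and, through its own randomness, the effective variance of the Stage-2 concentration. Handling this cleanly requires conditioning on Stage 1 and then carefully propagating the $\estmu$-dependence through the uniform gradient bound, ensuring the smoothness constants multiplying the sampling error do not blow up with $\norm{\estmu}$. The second delicate point is the sharp separation between the $\frac{d+m+\log(1/\delta)}{n}$ and $\frac{1}{\delta n}$ terms, i.e.\ showing that only a dimension-free residual of the design-inverse tail needs the weaker Markov argument, while the dominant $(d+m)$ dependence is captured with subgaussian concentration.
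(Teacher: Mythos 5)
Your skeleton matches the paper's proof: the strong-convexity reduction $\PR(\thetahat_n)-\PR(\thetaPO)\le\tfrac{1}{2\lambda}\|\nabla\PR(\thetahat_n)\|_2^2$, the split of $\nabla\PR(\thetahat_n)$ into a model-error term and an optimization/sampling term, the chain-rule perturbation bound $\|\nabla\PR(\theta)-\nabla\PRh(\theta)\|_2\lesssim(1+\|\theta\|_2)\|\mu-\estmu\|$, and the OLS analysis with Gaussian design (numerator concentration plus a Wishart minimum-eigenvalue bound) are exactly the paper's \lemmaref{perturbation_bound} and \lemmaref{least_squares}. Where you genuinely diverge is the second term: you anchor at the empirical first-order condition $\nabla\PRh_n(\thetahat_n)=0$ and propose uniform (chaining) concentration of the empirical gradient around $\nabla\PRh$ over $\Theta$, whereas the paper anchors at the population perturbed minimizer, writes $\nabla\PRh(\thetahat_n)-\nabla\PRh(\thetaPOhat)$ with $\nabla\PRh(\thetaPOhat)=0$, bounds it by smoothness of $\PRh$ times $\|\thetahat_n-\thetaPOhat\|_2$, and controls that distance via strong convexity of $\PRh_n$ together with the stability-based ERM generalization bound of Shalev-Shwartz et al., which holds only with a Markov-type $1/\delta$ tail---\emph{that} is the actual source of the $1/(\delta n)$ term. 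Your attribution of $1/(\delta n)$ to a polynomial tail of the inverse sample covariance in the OLS step is incorrect: with Gaussian design and $n\gtrsim d+\log(1/\delta)$ the minimum eigenvalue of $\Theta^\top\Theta$ is bounded below except with exponentially small probability, and the paper's OLS lemma carries no $1/\delta$ factor. If your uniform-concentration route were carried out rigorously (it requires a covering of $\Theta$, Lipschitz control of the gradient in $\theta$, conditioning on Stage~1 so that $\estmu$ is fixed, and an interiority/stationarity assumption on $\thetahat_n$ rather than on $\thetaPOhat$), it would in fact deliver the cleaner rate $O((d+m+\log(1/\delta))/n)$ with no $1/(\delta n)$ term, at the cost of heavier empirical-process machinery than the paper's more elementary stability argument.
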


While we analyze this two-stage procedure in the context of location families,
the principles behind the approach can be extended to more general
settings. Whenever the distribution map has enough structure to efficiently
estimate a model $\estD$ that supports sampling new data, we
can always use the ``plug-in'' approach above and construct and optimize a
perturbed version of the performative risk.

\section{Experiments}
\sectionlabel{experiments}
We complement our theoretical findings with an empirical evaluation of different methods on two tasks: the strategic classification simulator from \cite{perdomo2020performative}, and a synthetic linear regression example.

We pay particular attention to understanding the
differences in empirical performance between algorithms which converge to
performative optima, such as the two-stage procedure or derivative-free methods
from \sectionref{derivative_free_methods}, versus existing optimization
algorithms for finding stable
points, in particular greedy and lazy SGD due to Mendler-D\"{u}nner et al.\cite{mendler2020stochastic}. In addition, we focus on highlighting the differences in the sample
efficiency of the different algorithms and examine their sensitivity to the
relevant structural assumptions outlined in \sectionref{structural_results}. To evaluate derivative-free methods, we implement the ``gradient descent without a gradient'' algorithm from~\cite{flaxman2005}, which we refer to from here on out as the  ``DFO algorithm.'' For each of the following experiments, we run each algorithm 50 times and 
    display 95\% bootstrap confidence intervals.
 We provide a formal description of
all the procedures, as well as a detailed description of the experimental setup in \appendixref{experiments}.

%
%

\textbf{Linear regression experiments.}
We begin by evaluating how increasing the strength of performative effects affects the behavior of the different optimization procedures in settings where the performative risk is convex. We recall the setup from \exampleref{linear_reg}, where the learner attempts to solve a linear regression with performative labels. Given a parameter $\theta$, data are drawn from $\D(\theta)$ according to:
\begin{align*}
	x \sim \mathcal{N}(0, \Sigma_x),~ U_y \sim \mathcal{N}(0, \sigma_y^2),~ y = \beta^\top x + \mu^\top \theta + U_y.
\end{align*}
This distribution map is a location family, and is $\epsilon$-sensitive with $\epsilon= \norm{\mu}_2$. Performance is measured according to the squared loss, $\ell((x,y);\theta) = \tfrac{1}{2}(y - \theta^\top x)^2$.  Furthermore, the performative risk is convex for all choices of $\mu$. 
\begin{figure}
\begin{center}
    \includegraphics[width=\linewidth]{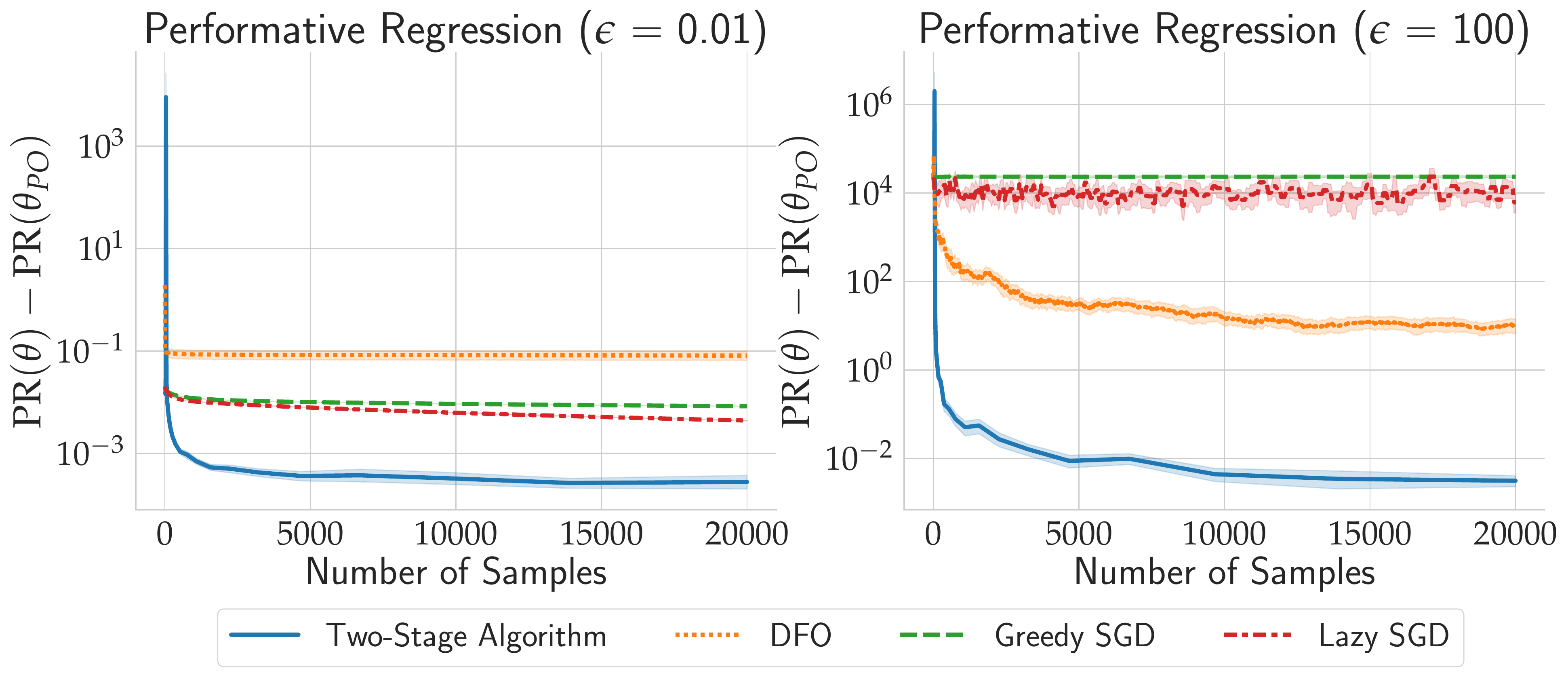}
    \caption{Suboptimality gap versus number of samples collected for the two-stage
    algorithm, DFO algorithm, greedy SGD, and lazy SGD, for $\epsilon = 0.01$
    (left) and $\epsilon = 100$ (right). Each experiment is repeated 50
    times, and we display 95\% bootstrap confidence intervals.}
   \figurelabel{linear_regression_experiment}
\end{center}
\end{figure}

For small $\epsilon$, we see that greedy and lazy SGD converge to a stable point that approximately minimizes the performative risk (see left panel in \figureref{linear_regression_experiment}). However, as we increase the strength of performative effects, these methods fail to make any progress, and are outperformed by both the DFO algorithm and the two-stage approach by a considerable margin (see right panel in \figureref{linear_regression_experiment}). The two-stage procedure efficiently converges after a small number of samples and its behavior is largely unaffected as we increase the value of $\epsilon$, while the DFO algorithm becomes considerably slower when $\epsilon$ is large.

\textbf{Strategic classification simulator.}
We next consider experiments on the credit scoring simulator from~\cite{perdomo2020performative}, which has been employed as an empirical
benchmark for performative prediction in several works~\citep{mendler2020stochastic,
drusvyatskiy2020stochastic, brown2020performative}.
The simulator models a strategic classification problem between a bank and
individual agents seeking a loan. The bank deploys a logistic regression
classifier $f_\theta$ to determine the individuals' default probabilities, while
agents strategically manipulate their features to achieve a more favorable
classification. 

More specifically, individuals correspond to feature, label pairs $(x,y)$ drawn
i.i.d. from a base distribution $\D_0$. Given a classifier $f_\theta$, agents
compute a best-response set of features $x_{\mathrm{BR}}$ by solving an
optimization problem.  The bank then observes the manipulated data points
$(x_{\mathrm{BR}},y)\sim\D(\theta)$. For an appropriate choice of the agents' objective function, the distribution map forms a location family, $x_{\mathrm{BR}} = x + \eps\theta$, where $\epsilon$ is a parameter of the agents' objective. It also serves as a measure of performativity, since this distribution map is $\epsilon$-sensitive. As a final remark, we add $\ell_2$-regularization to the logistic loss to ensure strong convexity. See discussion in \cite{perdomo2020performative} and Appendix~\ref{app:experiments} for full details.

\begin{figure}
\begin{center}
    \includegraphics[width=\linewidth]{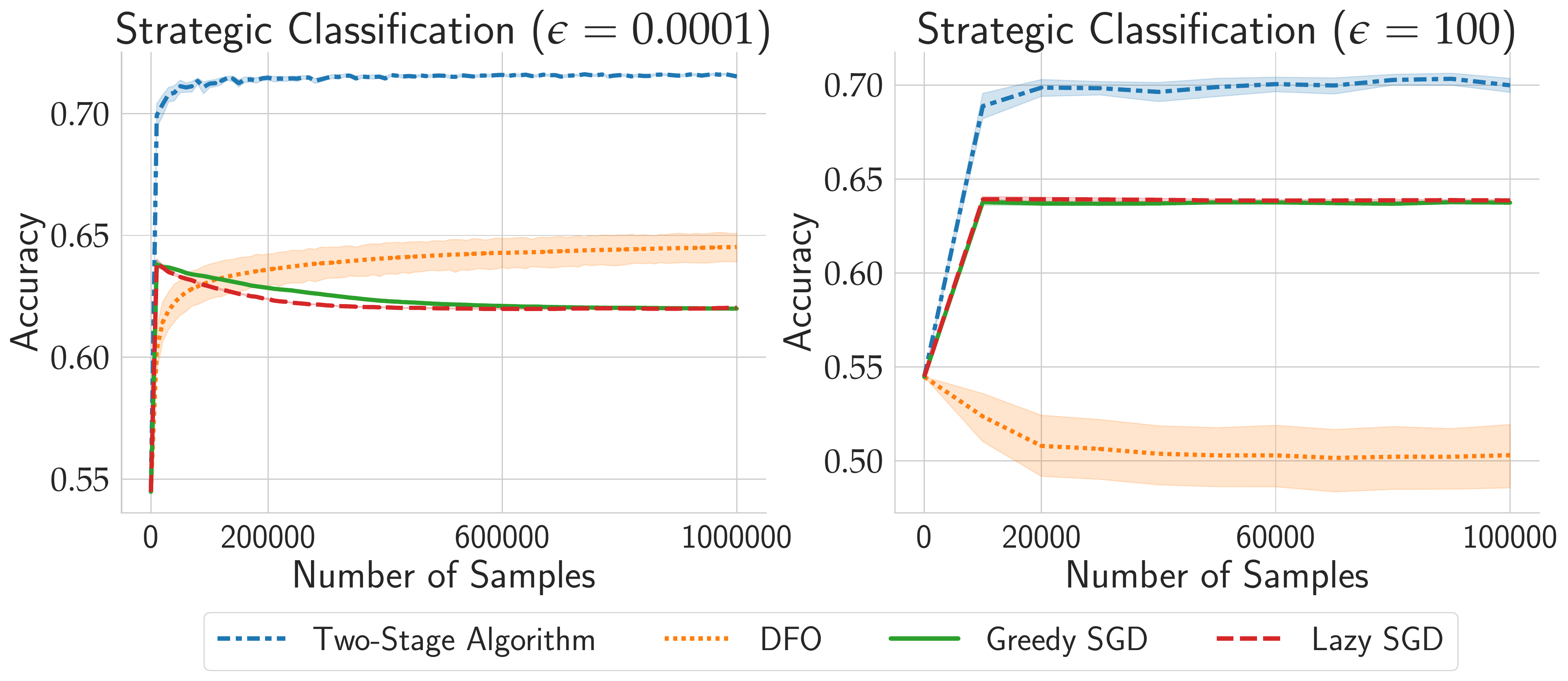}
    \caption{Classification accuracy versus number of samples collected for the two-stage algorithm, DFO algorithm, greedy SGD, and lazy SGD, for $\epsilon = 0.0001\leq \frac{\gamma}{2\beta}$ (left) and $\epsilon = 100 \gg \frac{\gamma}{2\beta}$ (right). 
        Each experiment is repeated 50 times, and we display 95\% bootstrap confidence intervals.
    }
   \figurelabel{strat_class_experiment}
\end{center}	
\end{figure}

Since the logistic loss is not strongly convex in the features, we only have a certificate of convexity when $\epsilon$ is small enough (namely, $\epsilon \leq \frac{\gamma}{2\beta}$). We consider two values of $\epsilon$: one which is below this critical threshold, and one large value for which we do not have theoretical guarantees. When $\epsilon$ is small, both the DFO algorithm and the
two-stage method yield significantly higher accuracy solutions compared to the two variants of SGD (see left panel of \figureref{strat_class_experiment}). Together with the linear regression experiments, this observation serves as further evidence that stable points have significantly worse performative risk relative to performative optima, even in regimes where $\epsilon < \gamma / (2\beta)$. Note also that, although both the DFO algorithm and the two-stage algorithm improve upon methods for repeated retraining, the two-stage algorithm converges with significantly fewer samples and
significantly lower variance.  Indeed, a few thousand samples suffice for convergence of the two-stage method, whereas the DFO algorithm has still not fully converged after a million samples. 

Lastly, on the top right plot, we evaluate these methods for $\epsilon  \gg \gamma / (2\beta)$ which is outside the regime of our theoretical analysis. Consequently, we have no convergence guarantees for any of the four algorithms. Despite the lack of guarantees and the increased strength of performative effects, we see that the two-stage procedure achieves only a slightly lower accuracy than in the previous setting. On the other hand, as described in our echo chamber analogy, greedy and lazy SGD rapidly converge to a local minimum and do not significantly improve predictive performance after the 10k sample mark. Despite extensive tuning, we were unable to improve the performance of the DFO algorithm and achieve nontrivial accuracy with this method. 

\section{Discussion and Future Work}

Given the stark difference between performative stability and optimality, the
goal of our work is to identify the first set of conditions and algorithmic
procedures by which one might be able to provably optimize the performative
risk. To this end, we focus on analyzing the problem at a broad level of
generality, identifying simple, structural conditions under which the
optimization problem becomes tractable. 

However, when applying these ideas in practice, there are a number of important considerations determined by the relevant social context that are not explicitly addressed by our theoretical analysis and which we believe are an important direction for future work. In social settings, such as credit scoring or election forecasts, the choice of loss function must  balance predictive accuracy with any externalities that arise from the classifier's impact on the observed distribution. For example, in lending we 
may wish to find a classifier $f_\theta$ that accurately predicts individual defaults, but that also induces a distribution $\D(\theta)$ over which the mean probability of default is low (or which satisfies some other socially desirable property). 

In order to balance between predictive accuracy and other concerns about the
observed distribution, one possibility is to directly incorporate a penalty on
$\D(\theta)$. For instance, if we would like $\D(\theta)$ to satisfy $\E_{z\sim
\D(\theta)}[z] \approx z_\star$, e.g., to control the mean
probability of default, we can modify our loss by including a regularization
term of the form $(z - z_\star)^\top Q (z - z_\star)$, where $Q$ is some PSD
matrix. Importantly, incorporating such a regularization term does not alter the
convexity of the performative risk. More formally, given any loss $\ell$ and
distribution map $\D(\cdot)$, which satisfy the conditions from
\theoremref{convexity-location-fam}, and hence for which $\PR(\theta) =
\ploss{\theta}{\theta}$ is convex, our analysis proves that the regularized
objective,
\begin{align}
\label{eq:regularized_pr}
\PR'(\theta) \defeq \E_{z\sim\D(\theta)} [\ell(z;\theta) + (z - z_\star)^\top Q (z - z_\star)],
\end{align}  
is also convex. Furthermore, this phenomenon does not just hold for quadratics, \theoremref{convexity-location-fam} shows that the performative risk remains convex after we incorporate any convex function $f(z)$ independent of $\theta$.

Reasoning about regularizers of this form illustrates another important difference between performatively stable solutions and performative optima. In particular, the set of stable points is the same for both $\PR(\theta)$ and its regularized version $\PR'(\theta)$ as defined in equation \eqref{eq:regularized_pr}, since the regularization term is \emph{independent} of $\theta$. Therefore, retraining algorithms such as RRM \cite{perdomo2020performative} or greedy/lazy SGD \cite{mendler2020stochastic} essentially ignore any kind of signal provided by these regularizers and converge to the same point on both the regularized and unregularized objective. To find performatively optimal classifiers whose induced distributions satisfy socially desirable criteria, we must directly engage with and anticipate performative effects, rather than just passively retrain.

Lastly, to date, work on performative prediction has mostly studied the problem from a theoretical perspective. We believe that evaluating the ideas and different design choices raised by these papers in the context of specific applications and case studies on performative prediction would be of high value to the community and an exciting direction for future work.

\section*{Acknowledgements}
We thank Moritz Hardt and Celestine Mendler-D\"unner for many helpful
conversations during the course of this project as well as for providing
detailed feedback on a draft of this manuscript. We would also like to thank the anonymous reviewers whose comments helped improve the quality of our work.

This research was generously supported in part by the National Science
Foundation Graduate Research Fellowship Program under Grant No. DGE 1752814.

\bibliographystyle{plain}
\bibliography{refs}

\begin{thebibliography}{10}

\bibitem{agarwal2010optimal}
Alekh Agarwal and Ofer Dekel.
\newblock Optimal algorithms for online convex optimization with multi-point
  bandit feedback.
\newblock In {\em Conference on Learning Theory}, pages 28--40, 2010.

\bibitem{agarwal_planning}
Alekh Agarwal, Sham Kakade, and Lin~F Yang.
\newblock Model-based reinforcement learning with a generative model is minimax
  optimal.
\newblock In {\em Conference on Learning Theory}, pages 67--83, 2020.

\bibitem{bechavod2020causal}
Yahav Bechavod, Katrina Ligett, Steven Wu, and Juba Ziani.
\newblock Gaming helps! learning from strategic interactions in natural
  dynamics.
\newblock In {\em International Conference on Artificial Intelligence and
  Statistics}, pages 1234--1242, 2021.

\bibitem{bickel1982adaptive}
Peter~J Bickel.
\newblock On adaptive estimation.
\newblock {\em The Annals of Statistics}, pages 647--671, 1982.

\bibitem{brown2020performative}
Gavin Brown, Shlomi Hod, and Iden Kalemaj.
\newblock Performative prediction in a stateful world.
\newblock {\em arXiv preprint arXiv:2011.03885}, 2020.

\bibitem{chen2020learning}
Yiling Chen, Yang Liu, and Chara Podimata.
\newblock Learning strategy-aware linear classifiers.
\newblock In {\em Advances in Neural Information Processing Systems},
  volume~33, pages 15265--15276, 2020.

\bibitem{chernozhukov2017double}
Victor Chernozhukov, Denis Chetverikov, Mert Demirer, Esther Duflo, Christian
  Hansen, and Whitney Newey.
\newblock Double/debiased/{N}eyman machine learning of treatment effects.
\newblock {\em American Economic Review}, 107(5):261--65, 2017.

\bibitem{chernozhukov2018double}
Victor Chernozhukov, Denis Chetverikov, Mert Demirer, Esther Duflo, Christian
  Hansen, Whitney Newey, and James Robins.
\newblock Double/debiased machine learning for treatment and structural
  parameters, 2018.

\bibitem{dong2018strategic}
Jinshuo Dong, Aaron Roth, Zachary Schutzman, Bo~Waggoner, and Zhiwei~Steven Wu.
\newblock Strategic classification from revealed preferences.
\newblock In {\em Proceedings of the 2018 ACM Conference on Economics and
  Computation}, pages 55--70. ACM, 2018.

\bibitem{drusvyatskiy2020stochastic}
Dmitriy Drusvyatskiy and Lin Xiao.
\newblock Stochastic optimization with decision-dependent distributions.
\newblock {\em arXiv preprint arXiv:2011.11173}, 2020.

\bibitem{flaxman2005}
Abraham~D Flaxman, Adam~Tauman Kalai, and H~Brendan McMahan.
\newblock Online convex optimization in the bandit setting: gradient descent
  without a gradient.
\newblock In {\em Symposium on Discrete Algorithms}, pages 385--394, 2005.

\bibitem{frankel2019improving}
Alex Frankel and Navin Kartik.
\newblock Improving information from manipulable data.
\newblock {\em Journal of the European Economic Association}, 2021.

\bibitem{haghtalab2020maximizing}
Nika Haghtalab, Nicole Immorlica, Brendan Lucier, and Jack~Z Wang.
\newblock Maximizing welfare with incentive-aware evaluation mechanisms.
\newblock In {\em International Joint Conference on Artificial Intelligence},
  2020.

\bibitem{hardt2016strategic}
Moritz Hardt, Nimrod Megiddo, Christos Papadimitriou, and Mary Wootters.
\newblock Strategic classification.
\newblock In {\em Proceedings of the ACM Conference on Innovations in
  Theoretical Computer Science}, pages 111--122, 2016.

\bibitem{hu2019disparate}
Lily Hu, Nicole Immorlica, and Jennifer~Wortman Vaughan.
\newblock The disparate effects of strategic manipulation.
\newblock In {\em Proceedings of the 2nd ACM Conference on Fairness,
  Accountability, and Transparency}, pages 259--268, 2019.

\bibitem{ibragimov2013statistical}
Il'dar~Abdulovich Ibragimov and Rafail~Zalmanovich Has'~Minskii.
\newblock {\em Statistical estimation: asymptotic theory}, volume~16.
\newblock Springer Science \& Business Media, 2013.

\bibitem{kilbertus2020fair}
Niki Kilbertus, Manuel~Gomez Rodriguez, Bernhard Sch{\"o}lkopf, Krikamol
  Muandet, and Isabel Valera.
\newblock Fair decisions despite imperfect predictions.
\newblock In {\em International Conference on Artificial Intelligence and
  Statistics}, pages 277--287, 2020.

\bibitem{levit1976efficiency}
B~Ya Levit.
\newblock On the efficiency of a class of non-parametric estimates.
\newblock {\em Theory of Probability \& Its Applications}, 20(4):723--740,
  1976.

\bibitem{mackey2018orthogonal}
Lester Mackey, Vasilis Syrgkanis, and Ilias Zadik.
\newblock Orthogonal machine learning: Power and limitations.
\newblock In {\em International Conference on Machine Learning}, pages
  3375--3383, 2018.

\bibitem{mania2019certainty}
Horia Mania, Stephen Tu, and Benjamin Recht.
\newblock Certainty equivalence is efficient for linear quadratic control.
\newblock In {\em Advances in Neural Information Processing Systems}, pages
  10154--10164, 2019.

\bibitem{matni2019tutorial}
Nikolai Matni and Stephen Tu.
\newblock A tutorial on concentration bounds for system identification.
\newblock In {\em 2019 IEEE 58th Conference on Decision and Control (CDC)},
  pages 3741--3749. IEEE, 2019.

\bibitem{mendler2020stochastic}
Celestine Mendler-D\"{u}nner, Juan Perdomo, Tijana Zrnic, and Moritz Hardt.
\newblock Stochastic optimization for performative prediction.
\newblock In {\em Advances in Neural Information Processing Systems},
  volume~33, pages 4929--4939, 2020.

\bibitem{milli2019social}
Smitha Milli, John Miller, Anca~D Dragan, and Moritz Hardt.
\newblock The social cost of strategic classification.
\newblock In {\em Proceedings of the 2nd ACM Conference on Fairness,
  Accountability, and Transparency}, pages 230--239, 2019.

\bibitem{muller2001optimal}
Alfred M{\"u}ller and Ludger R{\"u}schendorf.
\newblock On the optimal stopping values induced by general dependence
  structures.
\newblock {\em Journal of applied probability}, pages 672--684, 2001.

\bibitem{muller2002comparison}
Alfred M{\"u}ller and Dietrich Stoyan.
\newblock {\em Comparison methods for stochastic models and risks}, volume 389.
\newblock Wiley, 2002.

\bibitem{munro2020learning}
Evan Munro.
\newblock Learning to personalize treatments when agents are strategic.
\newblock {\em arXiv preprint arXiv:2011.06528}, 2020.

\bibitem{newey1990semiparametric}
Whitney~K Newey.
\newblock Semiparametric efficiency bounds.
\newblock {\em Journal of applied econometrics}, 5(2):99--135, 1990.

\bibitem{perdomo2020performative}
Juan Perdomo, Tijana Zrnic, Celestine Mendler-D{\"u}nner, and Moritz Hardt.
\newblock Performative prediction.
\newblock In {\em International Conference on Machine Learning}, pages
  7599--7609, 2020.

\bibitem{robinson1988root}
Peter~M Robinson.
\newblock Root-n-consistent semiparametric regression.
\newblock {\em Econometrica: Journal of the Econometric Society}, pages
  931--954, 1988.

\bibitem{roger1994topics}
Horn Roger and R~Johnson Charles.
\newblock Topics in matrix analysis, 1994.

\bibitem{rosenfeld2020predictions}
Nir Rosenfeld, Anna Hilgard, Sai~Srivatsa Ravindranath, and David~C Parkes.
\newblock From predictions to decisions: Using lookahead regularization.
\newblock In {\em Advances in Neural Information Processing Systems},
  volume~33, pages 4115--4126, 2020.

\bibitem{ross1996stochastic}
Sheldon~M Ross, John~J Kelly, Roger~J Sullivan, William~James Perry, Donald
  Mercer, Ruth~M Davis, Thomas~Dell Washburn, Earl~V Sager, Joseph~B Boyce, and
  Vincent~L Bristow.
\newblock {\em Stochastic processes}, volume~2.
\newblock Wiley New York, 1996.

\bibitem{shaked2007stochastic}
Moshe Shaked and J~George Shanthikumar.
\newblock {\em Stochastic orders}.
\newblock Springer Science \& Business Media, 2007.

\bibitem{shalev2010learnability}
Shai Shalev-Shwartz, Ohad Shamir, Nathan Srebro, and Karthik Sridharan.
\newblock Learnability, stability and uniform convergence.
\newblock {\em The Journal of Machine Learning Research}, 11:2635--2670, 2010.

\bibitem{shamir2013}
Ohad Shamir.
\newblock On the complexity of bandit and derivative-free stochastic convex
  optimization.
\newblock In {\em Conference on Learning Theory}, pages 3--24, 2013.

\bibitem{shavit2020learning}
Yonadav Shavit, Benjamin Edelman, and Brian Axelrod.
\newblock Causal strategic linear regression.
\newblock In {\em International Conference on Machine Learning}, pages
  8676--8686, 2020.

\bibitem{simchowitz2020naive}
Max Simchowitz and Dylan Foster.
\newblock Naive exploration is optimal for online {LQR}.
\newblock In {\em International Conference on Machine Learning}, pages
  8937--8948, 2020.

\bibitem{simon1956dynamic}
Herbert~A Simon.
\newblock Dynamic programming under uncertainty with a quadratic criterion
  function.
\newblock {\em Econometrica, Journal of the Econometric Society}, pages 74--81,
  1956.

\bibitem{tabibian2020design}
Behzad Tabibian, Vicen{\c{c}} Gomez, Abir De, Bernhard Sch{\"o}lkopf, and
  Manuel~Gomez Rodriguez.
\newblock On the design of consequential ranking algorithms.
\newblock In {\em Conference on Uncertainty in Artificial Intelligence}, pages
  171--180, 2020.

\bibitem{theil1957note}
Henri Theil.
\newblock A note on certainty equivalence in dynamic planning.
\newblock {\em Econometrica: Journal of the Econometric Society}, pages
  346--349, 1957.

\bibitem{tsirtsis2020decisions}
Stratis Tsirtsis and Manuel Gomez~Rodriguez.
\newblock Decisions, counterfactual explanations and strategic behavior.
\newblock In {\em Advances in Neural Information Processing Systems},
  volume~33, pages 16749--16760, 2020.

\bibitem{vershynin2018high}
Roman Vershynin.
\newblock {\em High-dimensional probability: An introduction with applications
  in data science}, volume~47.
\newblock Cambridge university press, 2018.

\bibitem{wager2019experimenting}
Stefan Wager and Kuang Xu.
\newblock Experimenting in equilibrium.
\newblock {\em Management Science}, 2021.

\bibitem{westwood2020projecting}
Sean~Jeremy Westwood, Solomon Messing, and Yphtach Lelkes.
\newblock Projecting confidence: How the probabilistic horse race confuses and
  demobilizes the public.
\newblock {\em The Journal of Politics}, 82(4):1530--1544, 2020.

\end{thebibliography}

\newpage
\appendix

\section{Background on Stochastic Orders}
\appendixlabel{stochastic_orders}

In this section we provide the necessary preliminaries from the literature on stochastic orders.

First, we recall the notion of the \emph{convex order}: for two random vectors $z, z'\in\R^m$, we say that $z$ is less than $z'$ in the convex order, denoted $z\leq_{cx} z'$, if for all convex functions $g:\R^m\rightarrow\R$, it holds that
$$\E g(z)\leq \E g(z').$$
Using a slight abuse of notation, we will also write $\D_1\leq_{cx} \D_2$ for two distributions $\D_1,\D_2$ when $z\sim\D_1,z'\sim\D_2$ and $z\leq_{cx}z'$.

Therefore, an immediate way to satisfy condition \eqref{ass:mixture} is to assume that the loss function $\ell(z;\theta)$ is convex in $z$, and to require $\D(\alpha \theta + (1-\alpha) \theta')\leq_{cx} \alpha\D(\theta) + (1-\alpha) \D(\theta')$. The latter condition has been long studied in classical statistical literature and many equivalent characterizations are known (see, e.g., \cite{shaked2007stochastic, ross1996stochastic, muller2002comparison}). This leads to the following corollary of \theoremref{convex-pr}.

\begin{corollary}
\label{corollary:convex-dominance}
Suppose that the loss function is $\gamma$-strongly convex in $\theta$ \eqref{ass:s_convexity_theta} and $\beta$-smooth in $z$ \eqref{ass:smoothness_z}, and that the distribution map $\D(\cdot)$ is $\epsilon$-sensitive \eqref{ass:sensitivity}. Further, assume that $\ell(z;\theta)$ is convex in $z$ and that $\D(\alpha \theta + (1-\alpha) \theta')\leq_{cx} \alpha\D(\theta) + (1-\alpha) \D(\theta')$. Then, the performative risk $\PR(\theta)$ is $(\gamma-2\epsilon\beta)$-convex.
\end{corollary}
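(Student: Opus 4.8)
The plan is to deduce the corollary directly from \theoremref{convex-pr} by showing that its two extra hypotheses---convexity of $\ell(z;\theta)$ in $z$ together with the convex-order domination $\D(\alpha\theta+(1-\alpha)\theta')\leq_{cx}\alpha\D(\theta)+(1-\alpha)\D(\theta')$---imply the mixture dominance condition \eqref{ass:mixture}. Once \eqref{ass:mixture} is in hand, the three remaining hypotheses (strong convexity in $\theta$, smoothness in $z$, and $\epsilon$-sensitivity) are exactly those of \theoremref{convex-pr}, so the conclusion that $\PR(\theta)$ is $\lambda$-convex with $\lambda=\gamma-2\epsilon\beta$ follows verbatim. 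In other words, the entire task reduces to verifying a single implication between assumptions.

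To verify \eqref{ass:mixture}, I would fix arbitrary $\theta,\theta',\theta_0\in\Theta$ and $\alpha\in(0,1)$, and treat $g(\cdot)\defeq\ell(\cdot;\theta_0)$ as a test function of the performative variable $z$ alone. By hypothesis $g$ is convex on $\R^m$. Applying the definition of the convex order (recalled at the start of this appendix) to the assumed domination $\D(\alpha\theta+(1-\alpha)\theta')\leq_{cx}\alpha\D(\theta)+(1-\alpha)\D(\theta')$ with this particular $g$ yields
\[
\E_{z\sim\D(\alpha\theta+(1-\alpha)\theta')}\ell(z;\theta_0)\leq\E_{z\sim\alpha\D(\theta)+(1-\alpha)\D(\theta')}\ell(z;\theta_0),
\]
which is precisely \eqref{ass:mixture}. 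Since $\theta,\theta',\theta_0$ and $\alpha$ were arbitrary, mixture dominance holds in full generality, and the corollary is an immediate instantiation of \theoremref{convex-pr}.

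There is no genuine obstacle here: the content is entirely the observation that the convex order is exactly the partial order under which $\E g$ is larger for the mixture \emph{simultaneously} for every convex $g$, so it specializes to the one convex test function $\ell(\cdot;\theta_0)$ demanded by \eqref{ass:mixture}. The only points requiring care are bookkeeping. First, one must confirm that the convex-order inequality is oriented so that the mixture $\alpha\D(\theta)+(1-\alpha)\D(\theta')$ sits on the larger side, matching the right-hand side of \eqref{ass:mixture}. Second, one must keep $\theta_0$ fixed and view $g$ solely as a function of $z$, so that the strong-convexity-in-$\theta$ hypothesis plays no role in this reduction step---it is reserved for the invocation of \theoremref{convex-pr}. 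With that alignment checked, the proof is complete.
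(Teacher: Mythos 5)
Your proposal is correct and matches the paper's reasoning exactly: the text preceding the corollary notes that convexity of $\ell(\cdot;\theta_0)$ in $z$ together with the convex-order domination immediately yields mixture dominance \eqref{ass:mixture}, after which the corollary follows from \theoremref{convex-pr}. Your bookkeeping about the orientation of the convex order and the role of $\theta_0$ is the same observation the paper relies on.
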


Now we discuss important families of distributions that satisfy the convex order condition $\D(\alpha \theta + (1-\alpha) \theta')\leq_{cx} \alpha\D(\theta) + (1-\alpha) \D(\theta')$.

\begin{example}
\label{example:bernoulli}
An obvious example where $\D\left(\alpha\theta + (1-\alpha)\theta'\right) \leq_{cx} \alpha \D(\theta) + (1-\alpha) \D(\theta')$ is when $\D(\alpha\theta + (1-\alpha)\theta') = \alpha \D(\theta) + (1-\alpha)\D(\theta')$. An important setting which satisfies this linearity property is when the probability of a positive outcome of a binary variable is linear in $\theta$: $z_\theta \sim \text{Bern}\left(a+w^\top\theta\right)$ defines $z_\theta\sim\D(\theta)$. In this case, $\D(\alpha\theta+(1-\alpha)\theta') = \alpha\D(\theta) + (1-\alpha) \D(\theta')$.
\end{example}

For further examples, we invoke a convenient characterization of the convex order condition.

\begin{lemma}[\cite{muller2001optimal}]
\label{lem:convex_dominance}
Two random vectors $z$ and $z'$ satisfy $z\leq_{cx} z'$ if and only if there exists a coupling of $z$ and $z'$ such that $\E[z'|z]=z$ a.s.
\end{lemma}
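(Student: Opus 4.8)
The plan is to prove the two implications separately. The forward (sufficiency) direction is immediate from conditional Jensen's inequality, whereas the reverse direction carries the real content and is a form of Strassen's theorem. First I would dispatch sufficiency: suppose a coupling $(z,z')$ satisfies $\E[z'\mid z]=z$ almost surely. For any convex $g:\R^m\rightarrow\R$, the conditional Jensen inequality gives $\E[g(z')\mid z]\geq g(\E[z'\mid z])=g(z)$ almost surely; taking expectations yields $\E g(z')\geq \E g(z)$, which is exactly $z\leq_{cx} z'$ since $g$ was arbitrary.

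For the converse, assume $z\leq_{cx} z'$ with laws $\mu,\nu$ on $\R^m$, and seek a martingale coupling. I would aim to produce a Markov kernel $K(x,\cdot)$ that is mean preserving, $\int y\,K(x,dy)=x$ for $\mu$-almost every $x$, and pushes $\mu$ to $\nu$, i.e. $\mu K=\nu$; then $\pi(dx,dy)=\mu(dx)\,K(x,dy)$ is the required coupling, since its first marginal is $\mu$, its second marginal is $\nu$, and its disintegration has barycenter $x$. Existence of such a $K$ follows from a separation argument. Let $C$ denote the set of all probability measures obtainable from $\mu$ by a mean-preserving kernel (the ``dilations'' of $\mu$); this is a convex, weakly closed set. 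If $\nu\notin C$, the Hahn--Banach theorem separates $\nu$ from $C$ by a continuous linear functional, i.e. by a function $g$ with $\int g\,d\nu<\inf_{\rho\in C}\int g\,d\rho$. One checks that the infimum over dilations forces $g$ to act against a convex minorant, so that the resulting inequality reads $\E g(z')<\E g(z)$ for a convex $g$, contradicting $z\leq_{cx} z'$. Hence $\nu\in C$, which is precisely the existence of the martingale coupling.

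The reverse direction is the crux, and it is functional-analytic rather than computational. The main obstacles are verifying that the dilation set $C$ is weakly closed so that Hahn--Banach applies, handling the unboundedness of the linear test functions against which one integrates (addressed by the finite-moment property implicit in the convex order, which in particular forces $\E z=\E z'$), and extracting a genuinely measurable kernel $K$ from the abstract membership $\nu\in C$ via a measurable selection theorem. Since this is a classical fact, in the writeup I would invoke Strassen's theorem directly as stated in the cited reference rather than reproduce these steps in full.
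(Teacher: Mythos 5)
The paper offers no proof of this lemma at all---it is stated as an imported classical result with the citation to M\"uller (equivalently, Strassen's theorem on the existence of martingale couplings)---so there is nothing for your argument to diverge from: your conditional-Jensen argument for the sufficiency direction ($\E[z'|z]=z$ implies $\E g(z')\geq \E g(z)$ for convex $g$) is complete and correct, and your sketch of the necessity direction via separating $\nu$ from the convex, weakly closed set of dilations of $\mu$ and contradicting the convex order through the lower convex envelope of the separating functional is the standard route, with the genuine technical obstacles (weak closedness, integrability of the test functions, measurable selection of the kernel) correctly identified. Deferring the hard direction to the cited reference is exactly what the paper itself does, so your treatment matches it while adding a correct proof of the easy half.
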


By applying \lemmaref{convex_dominance}, we show that the important case of \emph{location-scale families} satisfies the convex order condition. Therefore, if the loss function is additionally convex in $z$, condition~\eqref{ass:mixture} follows.

\begin{proposition}
\proplabel{sl-weak-convexity}
Suppose that $\D(\theta)$ forms a location-scale family \eqref{eqn:scale-loc-def} such that $\Sigma_0 + \Sigma(\theta)$ has full rank for all $\theta\in\Theta$. Then, $\D(\alpha\theta+(1-\alpha)\theta')\leq_{cx} \alpha\D(\theta) + (1-\alpha)\D(\theta')$ for all $\theta,\theta'\in\Theta$.
\end{proposition}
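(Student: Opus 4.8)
The plan is to invoke the coupling characterization of the convex order from \lemmaref{convex_dominance}: it suffices to exhibit, for fixed $\theta,\theta'\in\Theta$ and $\alpha\in(0,1)$, a joint law of a pair $(z,z')$ whose marginals are $\D(\alpha\theta+(1-\alpha)\theta')$ and $\alpha\D(\theta)+(1-\alpha)\D(\theta')$ respectively, and which satisfies the martingale condition $\E[z'\mid z]=z$ almost surely. Writing $\theta''=\alpha\theta+(1-\alpha)\theta'$, the goal is then to produce such a coupling directly from the location-scale structure.

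The starting point is the linearity of the maps $\Sigma(\cdot)$ and $\mu$. Because of it, $\Sigma_0+\Sigma(\theta'')=\alpha(\Sigma_0+\Sigma(\theta))+(1-\alpha)(\Sigma_0+\Sigma(\theta'))$ and $\mu_0+\mu\theta''=\alpha(\mu_0+\mu\theta)+(1-\alpha)(\mu_0+\mu\theta')$. Hence, if a \emph{single} base draw $\zb\sim\D_0$ is used to realize all three location-scale variables $z_\theta,z_{\theta'},z_{\theta''}$, they obey the pointwise identity $z_{\theta''}=\alpha z_\theta+(1-\alpha)z_{\theta'}$. This suggests the following coupling: draw $\zb\sim\D_0$ together with an independent Bernoulli variable $B$ with $\prob{B=1}=\alpha$, set $z\defeq z_{\theta''}=(\Sigma_0+\Sigma(\theta''))\zb+\mu_0+\mu\theta''$, and set $z'\defeq z_\theta$ if $B=1$ and $z'\defeq z_{\theta'}$ otherwise. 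Since $B$ is independent of $\zb$, the marginal of $z'$ is exactly the mixture $\alpha\D(\theta)+(1-\alpha)\D(\theta')$, while $z$ has law $\D(\theta'')$ by construction, so the required marginals hold.

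It then remains to verify the martingale condition. Conditioning on $\zb$ and using independence of $B$ gives $\E[z'\mid\zb]=\alpha z_\theta+(1-\alpha)z_{\theta'}=z_{\theta''}=z$. Because $z=z_{\theta''}$ is an affine function of $\zb$, the $\sigma$-algebra it generates is contained in that of $\zb$, so the tower property yields $\E[z'\mid z]=\E\big[\E[z'\mid\zb]\mid z\big]=\E[z\mid z]=z$. Applying \lemmaref{convex_dominance} then delivers $\D(\theta'')\leq_{cx}\alpha\D(\theta)+(1-\alpha)\D(\theta')$, which is the claim.

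I expect the only genuine subtlety to be the design of the coupling, specifically the insight that the two coupled variables should share a common base sample $\zb$ while the mixture component is selected by an \emph{independent} coin — this is precisely what makes the conditional mean collapse back onto $z$. The full-rank hypothesis on $\Sigma_0+\Sigma(\theta)$ is in fact not needed for the conditional-expectation computation, since the tower property handles the degenerate case as well; it serves to guarantee that each $\D(\theta)$ is a genuine, non-degenerate location-scale law, under which $z_{\theta''}$ is moreover an affine bijection of $\zb$, so that the two conditionings coincide.
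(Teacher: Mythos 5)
Your proof is correct, and it rests on the same two pillars as the paper's: the coupling characterization of the convex order from \lemmaref{convex_dominance}, and a coupling in which the mixture component is selected by a randomizer independent of the base sample. The difference is the direction in which you build the coupling. The paper first draws $z\sim\D(\alpha\theta+(1-\alpha)\theta')$ and then defines $z'$ by \emph{inverting} the affine map to recover $\zb=(\Sigma_0+\Sigma(\alpha\theta+(1-\alpha)\theta'))^{-1}(z-\mu_0-\mu(\alpha\theta+(1-\alpha)\theta'))$ and re-applying the map indexed by the random parameter $G$; this is exactly where the full-rank hypothesis is consumed. You instead run the construction forward from a single shared base draw $\zb$ and an independent Bernoulli selector, so that $z$ and $z'$ are both explicit functions of $(\zb,B)$, and you recover the martingale property $\E[z'\mid z]=z$ by first conditioning on $\zb$ and then invoking the tower property via $\sigma(z)\subseteq\sigma(\zb)$. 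In the full-rank case the two couplings are literally the same joint law, but your version is cleaner and strictly more general: it shows the full-rank assumption on $\Sigma_0+\Sigma(\theta)$ can be dropped from the statement of the proposition, since no inversion is ever needed. Your observation to that effect is accurate, and it is a genuine (if small) strengthening of the result as stated.
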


\begin{proof}
We will construct a coupling $(z,z')$ such that $z\sim \D(\alpha\theta+(1-\alpha)\theta'), z'\sim \alpha\D(\theta) + (1-\alpha)\D(\theta')$, and $\E[z'|z]=z$. Let $z\sim \D(\alpha\theta+(1-\alpha)\theta')$; then we define $z'$ in terms of $z$ as
\begin{equation}
\label{eqn:location-scale-coupling}
z' = (\Sigma_0 + \Sigma(G))(\Sigma_0 + \Sigma(\alpha\theta + (1-\alpha)\theta'))^{-1}\left(z - \mu_0 - \mu(\alpha\theta + (1-\alpha)\theta')\right) + \mu_0 + \mu G,
\end{equation}
where
$$G = \begin{cases}
\theta, \text{ with probability } \alpha,\\
\theta', \text{ with probability } 1-\alpha
\end{cases}$$
is independent of $z$. Notice that
\begin{align*}
    \E[z'~|~z] &= \E\left[(\Sigma_0 + \Sigma(G))(\Sigma_0 + \Sigma(\alpha\theta + (1-\alpha)\theta'))^{-1}\left(z - \mu_0 - \mu(\alpha\theta + (1-\alpha)\theta')\right) + \mu_0 + \mu G~|~ z\right]\\
    &= (\Sigma_0 + \E[\Sigma(G)])(\Sigma_0 + \Sigma(\alpha\theta + (1-\alpha)\theta'))^{-1}\left(z - \mu_0 - \mu(\alpha\theta + (1-\alpha)\theta')\right) + \mu_0 + \E[\mu G]\\
    &= z,
\end{align*}
which follows by linearity of $\mu$ and $\Sigma(\cdot)$ and the fact that $\E[G] = \alpha\theta + (1-\alpha)\theta'$.

We now only need to verify that $z' \sim \alpha\D(\theta) + (1-\alpha)\D(\theta')$ in order to apply \lemmaref{convex_dominance} and conclude that $z'\leq_{cx} z$. Indeed, with probability $\alpha$ we have $G = \theta$, and on that event $z' \stackrel{d}{=} (\Sigma_0 + \Sigma(\theta))\zb + \mu_0 + \mu\theta$; a similar argument applies to $\theta'$. Therefore, putting everything together we conclude that $z\leq_{cx} z'$.	
\end{proof}

\propref{sl-weak-convexity} implies that for all convex functions $g:\R^m\rightarrow \R$,
$$\E_{z\sim \D(\alpha\theta+(1-\alpha)\theta')}[g(z)] \leq \E_{z\sim \alpha\D(\theta) + (1-\alpha)\D(\theta')}[g(z)].$$
We now show that for \emph{strongly} convex $g$, this conclusion can be made even stronger. This result allows for deriving a tighter version of \theoremref{convex-pr} for the important class of location-scale families, stated in \theoremref{convexity-location-fam}.

\begin{proposition}
\label{prop:sl-fam-strong-convexity}
Let $g:\R^m\rightarrow\R$ be a $\gamma_z$-strongly convex function for some $\gamma_z\geq 0$, and let $\D(\theta)$ form a location-scale family \eqref{eqn:scale-loc-def}. Then,
$$\E_{z\sim \D(\alpha\theta+(1-\alpha)\theta')}[g(z)] \leq \E_{z\sim \alpha\D(\theta) + (1-\alpha)\D(\theta')}[g(z)] - \frac{\alpha(1-\alpha)\gamma_z}{2}\E\|\Sigma(\theta - \theta')\zb+ \mu(\theta-\theta')\|_2^2.$$
\end{proposition}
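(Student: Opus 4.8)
The plan is to reuse the explicit coupling constructed in the proof of \propref{sl-weak-convexity} and upgrade the argument from plain convexity to $\gamma_z$-strong convexity. Concretely, write $\bar\theta = \alpha\theta + (1-\alpha)\theta'$, draw $\zb\sim\D_0$ together with an independent random parameter $G$ equal to $\theta$ with probability $\alpha$ and to $\theta'$ with probability $1-\alpha$, and set $z = (\Sigma_0 + \Sigma(\bar\theta))\zb + \mu_0 + \mu\bar\theta$ and $z' = (\Sigma_0 + \Sigma(G))\zb + \mu_0 + \mu G$. As established in the proof of \propref{sl-weak-convexity}, this coupling satisfies $z\sim\D(\bar\theta)$, $z'\sim\alpha\D(\theta)+(1-\alpha)\D(\theta')$, and crucially the mean-preserving property $\E[z'\mid z]=z$.

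The first step is to apply the defining inequality of $\gamma_z$-strong convexity pointwise along the coupling,
\[
g(z')\geq g(z) + \nabla g(z)^\top(z'-z) + \frac{\gamma_z}{2}\|z'-z\|_2^2.
\]
Taking the conditional expectation given $z$, the linear term vanishes because $\E[z'\mid z]=z$, so $\E[g(z')\mid z]\geq g(z) + \frac{\gamma_z}{2}\E[\|z'-z\|_2^2\mid z]$. Taking a further expectation over $z$ and rearranging yields
\[
\E_{z\sim\D(\bar\theta)}[g(z)] \leq \E_{z\sim\alpha\D(\theta)+(1-\alpha)\D(\theta')}[g(z)] - \frac{\gamma_z}{2}\,\E\|z'-z\|_2^2,
\]
so it remains only to evaluate $\E\|z'-z\|_2^2$ exactly and match it to the claimed expression.

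The main computation is this last step. Since $z$ and $z'$ are built from the same $\zb$, linearity of $\Sigma(\cdot)$ and $\mu$ gives $z'-z = \Sigma(G-\bar\theta)\zb + \mu(G-\bar\theta)$. The parameter $G-\bar\theta$ takes only two values: $(1-\alpha)(\theta-\theta')$ with probability $\alpha$, and $-\alpha(\theta-\theta')$ with probability $1-\alpha$. Writing $v = \Sigma(\theta-\theta')\zb + \mu(\theta-\theta')$, we have $z'-z = (1-\alpha)v$ on the first event and $z'-z = -\alpha v$ on the second, whence, by independence of $G$ and $\zb$,
\[
\E\|z'-z\|_2^2 = \big(\alpha(1-\alpha)^2 + (1-\alpha)\alpha^2\big)\,\E\|v\|_2^2 = \alpha(1-\alpha)\,\E\|\Sigma(\theta-\theta')\zb + \mu(\theta-\theta')\|_2^2,
\]
which upon substitution gives exactly the claimed inequality.

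I do not anticipate a genuine obstacle, since the argument is a direct strengthening of \propref{sl-weak-convexity}; the only point requiring care is confirming that the cross term $\nabla g(z)^\top(z'-z)$ integrates to zero, which relies solely on $\E[z'\mid z]=z$ and no structure of $g$. An equivalent route, which sidesteps the conditional expectation entirely, is to decompose $g(z) = h(z) + \tfrac{\gamma_z}{2}\|z\|_2^2$ with $h$ convex, apply \propref{sl-weak-convexity} to $h$, and evaluate the quadratic gap $\tfrac{\gamma_z}{2}\big(\E_{z\sim\D(\bar\theta)}\|z\|_2^2 - \E_{z\sim\mathrm{mix}}\|z\|_2^2\big)$ directly via the variance identity $\alpha\|a\|_2^2+(1-\alpha)\|b\|_2^2-\|\alpha a+(1-\alpha)b\|_2^2 = \alpha(1-\alpha)\|a-b\|_2^2$ applied separately to the location and scale contributions; the cross term there vanishes because $\E[\zb]=0$, recovering the same bound.
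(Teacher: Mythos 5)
Your proposal is correct and is essentially the paper's argument: it uses the same coupling $(z,z')$ built from a shared $\zb$ and an independent two-point variable $G$, the same mean-preserving property $\E[z'\mid z]=z$, and the identical computation $\E\|z'-z\|_2^2=\alpha(1-\alpha)\E\|\Sigma(\theta-\theta')\zb+\mu(\theta-\theta')\|_2^2$. The only cosmetic difference is that your main route applies the first-order strong-convexity inequality pointwise (which tacitly assumes $g$ has a gradient or subgradient), whereas the paper instead splits $g = g_0 + \tfrac{\gamma_z}{2}\|z\|_2^2$ with $g_0$ convex and bounds $\E[\|z'\|_2^2-\|z\|_2^2]$ via the same martingale trick --- which is exactly the ``equivalent route'' you sketch at the end.
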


\begin{proof}
Since $g$ is strongly convex, we can write $g(z) = g_0(z) + \frac{\gamma_z}{2}\|z\|_2^2$, where $g_0$ is a convex function. Thus, we want to prove
\begin{align*}
\E_{z\sim \D(\alpha\theta+(1-\alpha)\theta')}\left[g_0(z) + \frac{\gamma_z}{2}\|z\|_2^2\right] &\leq \E_{z'\sim \alpha\D(\theta) + (1-\alpha)\D(\theta')}\left[g_0(z') + \frac{\gamma_z}{2}\|z'\|_2^2\right]\\
&- \frac{\alpha(1-\alpha)\gamma_z}{2}\E\left\|\Sigma(\theta - \theta')\zb+ \mu(\theta-\theta')\right\|_2^2.
\end{align*}
By \propref{sl-weak-convexity}, we know that
$$\E_{z\sim \D(\alpha\theta + (1-\alpha)\theta')}[g_0(z)] \leq \E_{z\sim \alpha\D(\theta) + (1-\alpha)\D(\theta')}[g_0(z)].$$
Therefore, we only need to argue that
$$\E\left[\|z'\|_2^2 - \|z\|_2^2\right] \geq  \alpha(1-\alpha)\E\|\Sigma(\theta - \theta')\zb+ \mu(\theta-\theta')\|_2^2.$$
Without loss of generality, we take $z,z'$ to be coupled as in equation \eqref{eqn:location-scale-coupling}. Then, we can write
\begin{align*}
    \E\left[\|z'\|_2^2 - \|z\|_2^2\right] &= \E\left[\|z' - z\|_2^2 + 2(z' - z)^\top z\right]\\
    &= \E\left[\|z' - z\|_2^2\right]\\
    &= \E\left[\left\| \Sigma\left(G - (\alpha\theta + (1-\alpha)\theta')\right)\zb + \mu\left(G - (\alpha\theta + (1-\alpha)\theta')\right)\right\|_2^2\right],
    \end{align*}
where the second steps follows by iterating expectations, because $\E[z'|z] = z$.

By further taking an expectation over $G$, we get:
\begin{align*}
    &\E\left[\left\| \Sigma\left(G - (\alpha\theta + (1-\alpha)\theta')\right)\zb + \mu\left(G - (\alpha\theta + (1-\alpha)\theta')\right)\right\|_2^2\right]\\
    &= \alpha (1-\alpha)^2\E\|\Sigma(\theta - \theta')\zb + \mu(\theta-\theta')\|_2^2 + (1-\alpha) \alpha^2\E\|\Sigma(\theta' - \theta)\zb+ \mu(\theta-\theta')\|_2^2\\
    &= \alpha(1-\alpha)\E\|\Sigma(\theta - \theta')\zb+ \mu(\theta-\theta')\|_2^2.
\end{align*}	
\end{proof}

\section{Distinguishing between Static and Performative Variables}
\appendixlabel{static_vs_perf}

 In many natural examples, the performative effects are only present in a subset of the variables that make up $z$. For example, in strategic classification, the performative effects are often only present in the strategically manipulated features, and not in the label. In \exampleref{linear_reg}, on the other hand, the effects are only present in the label. For simplicity of exposition, we suppress this distinction between \emph{performative} and \emph{static} variables, that is, those whose distribution does not change for different $\D(\theta)$. However, the reader should think of all assumptions on $z$, such as strong convexity or various Lipschitz assumptions, as only having to apply to the performative variables, while the static ones can be averaged out. To give one example, suppose that $z=(z_s,z_p)$, where $z_s$ denotes the static variables and $z_p$ denotes the performative ones. Using this distinction, the step in equation \eqref{eqn:duality_step} would proceed as follows:
 \begin{align*}
\lefteqn{\E_{(z_s,z_p)\sim\D(\theta)} [\nabla_\theta \ell((z_s,z_p);\theta)]^\top(\theta'-\theta) - \E_{(z_s,z_p')\sim\D(\theta')} [\nabla_\theta \ell((z_s,z_p');\theta)]^\top(\theta'-\theta)}\\
&= \E_{z_s} \left[\left(\E[\nabla_\theta \ell((z_s,z_p);\theta) |z_s]] -  \E[\nabla_\theta \ell((z_s,z_p');\theta)|z_s]\right)^\top(\theta'-\theta)\right]\\
&\leq \E_{z_s}[\beta(z_s) \epsilon(z_s)]~ \|\theta-\theta'\|_2^2.
\end{align*}
Here, $\beta(z_s)$ is the Lipschitz constant of $\nabla_\theta \ell((z_s,\cdot);\theta)$, and $\epsilon(z_s)$ is the sensitivity parameter of the distribution of $z_p$, conditional on $z_s$. As clear from the above example, stating all conditions and proofs while emphasizing this distinction is fairly cumbersome, so we opted for a simplified presentation. Similar calculations can be carried out for the rest of the proofs of the structural results.

\section{Deferred Proofs}

\subsection{Convexity of the Performative Risk}

\paragraph{Proof of \theoremref{convex-pr}.}
We begin by writing out the gradient of the performative risk:
\begin{align*}
    \nabla_\theta \PR(\theta) = \nabla_\theta \left(\int \ell(z;\theta)p_\theta(z)dz\right) &= \int \nabla_\theta \ell(z;\theta)p_\theta(z)dz + \int  \ell(z;\theta)\nabla_\theta p_\theta(z)dz\\
    &= \int \nabla_\theta \ell(z;\theta)p_\theta(z)dz + \int  \ell(z;\theta)\nabla_\theta \log(p_\theta(z))p_\theta(z) dz\\
    &= \E_{z\sim\D(\theta)} [\nabla_\theta \ell(z;\theta)] + \E_{z\sim\D(\theta)} [\ell(z;\theta) \nabla_\theta \log(p_\theta(z))].
\end{align*}
By the first-order condition for convexity, we know that $\PR(\theta)$ is $(\gamma - 2\epsilon\beta)$-convex if and only~if
\begin{align}
\label{eqn:convexity-iff}
\left(\E_{z\sim\D(\theta)} [\nabla_\theta \ell(z;\theta) + \ell(z;\theta) \nabla_\theta \log(p_\theta(z))]\right)^\top(\theta'-\theta) + \frac{\gamma - 2\epsilon\beta}{2}\|\theta-\theta'\|_2^2 \leq \PR(\theta') - \PR(\theta),
\end{align}
for all $\theta,\theta'\in\Theta$. By assumption \eqref{ass:mixture}, we know that for all $\theta,\theta',\theta_0\in\Theta$,
$$\E_{z\sim \D\left(\alpha\theta + (1-\alpha) \theta'\right)}[\ell(z;\theta_0)] \leq \alpha\E_{z\sim \D(\theta)}[\ell(z;\theta_0)] + (1-\alpha)\E_{z\sim \D(\theta')}[\ell(z;\theta_0)].$$
This assumption is equivalent to saying that $g_{\theta_0}(\theta) = \E_{z\sim \D\left(\theta\right)}[\ell(z;\theta_0)]$ is a convex function of $\theta$, for all $\theta_0$. We can express this convexity condition using the equivalent first-order characterization:
$$\E_{z\sim\D(\theta)} [\ell(z;\theta_0) \nabla_\theta \log(p_\theta(z))]^\top(\theta'-\theta) \leq \E_{z\sim \D(\theta')}[\ell(z;\theta_0)] - \E_{z\sim \D(\theta)}[\ell(z;\theta_0)].$$
Since the mixture dominance condition holds for all $\theta, \theta'$ and $\theta_0$, we can set $\theta_0$ equal to $\theta$ in the inequality above to conclude that 
$$\E_{z\sim\D(\theta)} [\ell(z;\theta) \nabla_\theta \log(p_\theta(z))]^\top(\theta'-\theta) \leq \E_{z\sim \D(\theta')}[\ell(z;\theta)] - \E_{z\sim \D(\theta)}[\ell(z;\theta)].$$
Going back to equation \eqref{eqn:convexity-iff}, we see that a sufficient condition for $(\gamma - 2\epsilon\beta)$-convexity of the performative risk is
$$\E_{z\sim\D(\theta)} [\nabla_\theta \ell(z;\theta)]^\top(\theta'-\theta) + \frac{\gamma - 2\epsilon\beta}{2}\|\theta-\theta'\|_2^2 \leq \E_{z\sim\D(\theta')}\ell(z;\theta') - \E_{z\sim\D(\theta')}\ell(z;\theta).$$
By the assumption that the loss is $\gamma$-strongly convex in $\theta$, we know
$$\E_{z\sim\D(\theta')}\ell(z;\theta') - \E_{z\sim\D(\theta')}\ell(z;\theta) \geq \E_{z\sim\D(\theta')}[\nabla_\theta \ell(z;\theta)]^\top(\theta'-\theta) + \frac{\gamma}{2}\|\theta-\theta'\|_2^2,$$
and thus we have further simplified the sufficient condition to
$$\E_{z\sim\D(\theta)} [\nabla_\theta \ell(z;\theta)]^\top(\theta'-\theta) - \E_{z\sim\D(\theta')} [\nabla_\theta \ell(z;\theta)]^\top(\theta'-\theta) \leq \frac{2\epsilon\beta}{2}\|\theta-\theta'\|_2^2.$$
Since the loss is $\beta$-smooth in $z$, we have that $\nabla_\theta \ell(z;\theta)^\top(\theta'-\theta)$ is $\beta \|\theta - \theta'\|_2$-Lipschitz in $z$. Now, we can use the fact that the distribution map is $\epsilon$-sensitive to upper bound the left-hand side by applying the Kantorovich-Rubinstein duality theorem:
\begin{align}
\label{eqn:duality_step}
\E_{z\sim\D(\theta)} [\nabla_\theta \ell(z;\theta)]^\top(\theta'-\theta) - \E_{z\sim\D(\theta')} [\nabla_\theta \ell(z;\theta)]^\top(\theta'-\theta) \leq \epsilon \beta \|\theta-\theta'\|_2^2.
\end{align}
Therefore, we can conclude that the performative risk is $(\gamma - 2\epsilon\beta)$-convex.

\paragraph{Proof of \theoremref{convexity-location-fam}.}
Following the steps of \theoremref{convex-pr}, we know that $\PR(\theta)$ is $\lambda$-convex if and only if
\begin{align*}
\E_{z\sim\D(\theta)} [\nabla_\theta \ell(z;\theta)]^\top(\theta'-\theta) + \E_{z\sim\D(\theta)} [\ell(z;\theta) \nabla_\theta \log(p_\theta(z))]^\top(\theta'-\theta) +\frac{\lambda}{2}\|\theta-\theta'\|_2^2 \leq \PR(\theta') - \PR(\theta),
\end{align*}
for all $\theta,\theta'\in\Theta$.

We now state a technical lemma which rephrases the conclusion of \propref{sl-fam-strong-convexity} in an equivalent way, deferring its proof to the end of this section.
\begin{lemma}
\lemmalabel{sc-identity}
Suppose that
$$\E_{z\sim \D(\alpha\theta+(1-\alpha)\theta')}[g(z)] \leq \E_{z\sim \alpha\D(\theta) + (1-\alpha)\D(\theta')}[g(z)] - \frac{\alpha(1-\alpha)\gamma_z}{2}\E\|\Sigma(\theta - \theta')\zb+ \mu(\theta-\theta')\|_2^2.$$
Then,
$$\E_{z\sim \D(\theta')}[g(z)] \geq \E_{z\sim \D(\theta)}[g(z)] + (\nabla_\theta \E_{z\sim \D(\theta)}[g(z)])^\top (\theta'-\theta) + \frac{\gamma_z}{2}\E\|\Sigma(\theta - \theta')\zb+ \mu(\theta-\theta')\|_2^2.$$
\end{lemma}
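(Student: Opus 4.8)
The plan is to recognize this as the standard equivalence between the Jensen-type (interpolation) form of strong convexity and its first-order characterization, adapted to the case where the modulus of strong convexity is measured by a fixed quadratic form rather than the Euclidean norm. Write $h(\theta) \defeq \E_{z\sim\D(\theta)}[g(z)]$. Because expectation under a mixture is linear, $\E_{z\sim\alpha\D(\theta)+(1-\alpha)\D(\theta')}[g(z)] = \alpha h(\theta) + (1-\alpha)h(\theta')$, so the hypothesis rewrites as
\[
h(\alpha\theta + (1-\alpha)\theta') \leq \alpha h(\theta) + (1-\alpha)h(\theta') - \frac{\alpha(1-\alpha)\gamma_z}{2}\, c(\theta,\theta'),
\]
where $c(\theta,\theta') \defeq \E\|\Sigma(\theta-\theta')\zb + \mu(\theta-\theta')\|_2^2$ is a fixed nonnegative quantity that, crucially, does \emph{not} depend on $\alpha$.

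Next I would pass to the infinitesimal version by setting $\alpha = 1-s$ for small $s \in (0,1)$, so that the interpolation point becomes $\theta + s(\theta'-\theta)$. Substituting, subtracting $h(\theta)$ from both sides, and dividing by $s$ gives
\[
\frac{h(\theta + s(\theta'-\theta)) - h(\theta)}{s} \leq h(\theta') - h(\theta) - \frac{(1-s)\gamma_z}{2}\,c(\theta,\theta').
\]
Since $c(\theta,\theta')$ is constant in $s$, the right-hand side converges cleanly as $s \to 0^+$ to $h(\theta') - h(\theta) - \frac{\gamma_z}{2}c(\theta,\theta')$. The left-hand side tends to the one-sided directional derivative of $h$ at $\theta$ in the direction $\theta'-\theta$, which equals $\nabla h(\theta)^\top(\theta'-\theta)$. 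Taking the limit and rearranging yields exactly
\[
\E_{z\sim \D(\theta')}[g(z)] \geq \E_{z\sim \D(\theta)}[g(z)] + \big(\nabla_\theta \E_{z\sim \D(\theta)}[g(z)]\big)^\top (\theta'-\theta) + \frac{\gamma_z}{2}\,c(\theta,\theta'),
\]
as claimed.

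The only genuine obstacle is justifying that $h$ is differentiable and that the difference quotient limit is the gradient inner product. This is mild in the present setting: for a location-scale family $h(\theta) = \E_{\zb\sim\D_0}[g((\Sigma_0+\Sigma(\theta))\zb + \mu_0 + \mu\theta)]$, which is smooth in $\theta$ whenever $g$ is well-behaved, so one may differentiate under the expectation (consistent with the regularity already invoked for $\PR(\theta)$ in the proof of \theoremref{convex-pr}). Everything else is algebraic rearrangement; no convexity of $g$ beyond what is encoded in the hypothesis is required, and it is precisely the fact that $c(\theta,\theta')$ is independent of $\alpha$ that lets the limit pass through without any further care.
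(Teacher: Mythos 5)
Your proof is correct, and it takes a mildly but genuinely different route from the paper's. The paper follows the classical ``shift-and-reduce'' template: it first verifies, via an explicit bilinear expansion of the quadratic form $q(\theta) \defeq \E\|\Sigma(\theta)\zb+\mu\theta\|_2^2$, that the hypothesis is equivalent to ordinary convexity of $\theta \mapsto \E_{z\sim\D(\theta)}[g(z)] - \frac{\gamma_z}{2}q(\theta)$, then invokes the first-order characterization of convexity for that shifted function and re-expands the quadratic to recover the $\frac{\gamma_z}{2}\E\|\Sigma(\theta-\theta')\zb+\mu(\theta-\theta')\|_2^2$ term. You instead work directly with the interpolation inequality, set $\alpha = 1-s$, divide by $s$, and let $s\to 0^+$; the key observation that makes this clean --- that the modulus $c(\theta,\theta')$ is independent of $\alpha$, so only the prefactor $(1-s)$ survives the division --- is exactly right, and your rearrangement lands on the stated conclusion. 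What your approach buys is the avoidance of the parallelogram-type identity $q(\alpha\theta+(1-\alpha)\theta') = \alpha q(\theta)+(1-\alpha)q(\theta') - \alpha(1-\alpha)c(\theta,\theta')$ that the paper must verify by hand; what the paper's approach buys is an explicit reusable intermediate statement (convexity of the shifted objective) and no need to justify passing to a limit, since it cites the first-order characterization of convexity as a black box. Both proofs presuppose differentiability of $\theta\mapsto\E_{z\sim\D(\theta)}[g(z)]$ --- the lemma statement itself writes down its gradient --- so your remark on that point is appropriate rather than a gap.
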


Therefore, by \propref{sl-fam-strong-convexity} and \lemmaref{sc-identity}, we know
\begin{align*}
\E_{z\sim\D(\theta)} [\ell(z;\theta) \nabla_\theta \log(p_\theta(z))]^\top(\theta'-\theta)  &\leq \E_{z\sim \D(\theta')}[\ell(z;\theta)] - \E_{z\sim \D(\theta)}[\ell(z;\theta)]\\
 &- \frac{\gamma_z}{2}\E\|\Sigma(\theta - \theta')\zb+ \mu(\theta-\theta')\|_2^2,
\end{align*}
where we take $g(z) = \ell(z;\theta)$.

Thus it suffices to show
$$\E_{z\sim\D(\theta)} [\nabla_\theta \ell(z;\theta)]^\top(\theta'-\theta) +\frac{\lambda}{2}\|\theta-\theta'\|_2^2 \leq \E_{z\sim\D(\theta')}\ell(z;\theta') - \E_{z\sim\D(\theta')}\ell(z;\theta) + \frac{\gamma_z}{2}\E\|\Sigma(\theta - \theta')\zb+ \mu(\theta-\theta')\|_2^2.$$
By the assumption that the loss is $\gamma$-strongly convex, we know
$$\E_{z\sim\D(\theta')}\ell(z;\theta') - \E_{z\sim\D(\theta')}\ell(z;\theta) \geq \E_{z\sim\D(\theta')}[\nabla_\theta \ell(z;\theta)]^\top(\theta'-\theta) + \frac{\gamma}{2}\|\theta-\theta'\|_2^2.$$
With this, we have simplified the sufficient condition for $\gamma$-convexity to
\begin{align}
\label{eqn:smoothness_step}
(\E_{z\sim\D(\theta)} [\nabla_\theta \ell(z;\theta)] - \E_{z\sim\D(\theta')} [\nabla_\theta \ell(z;\theta)])^\top(\theta'-\theta) &\leq \frac{\gamma-\lambda}{2}\|\theta-\theta'\|_2^2 + \frac{\gamma_z}{2}\E\|\Sigma(\theta - \theta')\zb+ \mu(\theta-\theta')\|_2^2.
\end{align}
We bound the left-hand side by applying smoothness of the loss together with the Kantorovich-Rubinstein duality theorem; for this, we need a bound on $W(\D(\theta),\D(\theta'))$. We will use the bound implied by $\epsilon$-sensitivity, as well as the bound implied by the following lemma.
\begin{lemma}
\lemmalabel{wass-dist-sc-fam}
Suppose that the distribution map $\D(\theta)$ forms a location-scale family \eqref{eqn:scale-loc-def}. Then,
$$W(\D(\theta),\D(\theta')) \leq \E\|\Sigma(\theta-\theta')\zb + \mu(\theta-\theta')\|_2.$$
\end{lemma}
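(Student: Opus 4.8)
The plan is to bound the Wasserstein-1 distance by exhibiting a single explicit coupling. Since $W(\D(\theta),\D(\theta'))$ is defined as the infimum of $\E\|z-z'\|_2$ over all joint distributions with the prescribed marginals, any valid coupling immediately yields an upper bound, and I only need to choose a convenient one.

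The coupling I would use is the \emph{synchronous} one obtained by driving both samples with the same base draw $\zb\sim\D_0$. Concretely, set
\[
z = (\Sigma_0 + \Sigma(\theta))\zb + \mu_0 + \mu\theta, \qquad z' = (\Sigma_0 + \Sigma(\theta'))\zb + \mu_0 + \mu\theta'.
\]
By the defining property \eqref{eqn:scale-loc-def} of the location-scale family, $z$ has marginal $\D(\theta)$ and $z'$ has marginal $\D(\theta')$, so $(z,z')$ is a valid coupling of the two distributions. Next I would compute the difference and invoke linearity of the maps $\Sigma(\cdot)$ and $\mu$: the fixed terms $\Sigma_0\zb$ and $\mu_0$ cancel, leaving $z - z' = \Sigma(\theta-\theta')\zb + \mu(\theta-\theta')$. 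Taking the $\ell_2$-norm and expectation then gives $\E\|z-z'\|_2 = \E\|\Sigma(\theta-\theta')\zb + \mu(\theta-\theta')\|_2$, and bounding $W$ by this particular coupling delivers exactly the claimed inequality.

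There is essentially no obstacle here; the proof is a one-line coupling argument. The only points requiring a sentence of justification are that the two components genuinely have the correct marginals (immediate from \eqref{eqn:scale-loc-def}) and that linearity of $\Sigma(\cdot)$ and $\mu$ lets the shared noise collapse to $\Sigma(\theta-\theta')\zb$. It is worth noting that this is precisely the same synchronous coupling that underlies the covariance computation in \propref{sl-fam-strong-convexity}, so the resulting bound is consistent with the second-moment estimates used there, and it holds for an arbitrary base distribution $\D_0$.
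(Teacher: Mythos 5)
Your argument is correct and is exactly the paper's proof: both use the synchronous coupling driven by a common base draw $\zb\sim\D_0$, cancel the fixed terms by linearity of $\Sigma(\cdot)$ and $\mu$, and bound $W_1$ by the expected norm of the resulting difference. No gaps.
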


\begin{proof}[Proof of \lemmaref{wass-dist-sc-fam}]
By definition, $W(\D(\theta),\D(\theta')) = \inf_{\Pi(\D(\theta),\D(\theta'))} \E_{(z_\theta,z_{\theta'})\sim \Pi(\D(\theta),\D(\theta'))}[\|z_\theta - z_{\theta'}\|_2]$, where $\Pi(\D(\theta),\D(\theta'))$ denotes a coupling of $\D(\theta)$ and $\D(\theta')$. The simplest way to couple $\D(\theta)$ and $\D(\theta')$, or equivalently $z_\theta$ and $z_{\theta'}$, is to sample $\zb\sim \D$, and set $z_\theta = (\Sigma_0 + \Sigma(\theta))\zb + \mu_0 + \mu(\theta)$ and $z_{\theta'} = (\Sigma_0 + \Sigma(\theta'))\zb + \mu_0 + \mu(\theta')$. With this choice, $\|z_\theta - z_{\theta'}\|_2 = \|\Sigma(\theta-\theta')\zb + \mu(\theta-\theta')\|_2$, and hence $W(\D(\theta),\D(\theta'))\leq \E\|\Sigma(\theta-\theta')\zb + \mu(\theta-\theta')\|_2$.
\end{proof}
Therefore, the left-hand side in equation \eqref{eqn:smoothness_step} can be bounded by
$$\E_{z\sim\D(\theta)} [\nabla_\theta \ell(z;\theta)]^\top(\theta'-\theta) - \E_{z\sim\D(\theta')} [\nabla_\theta \ell(z;\theta)]^\top(\theta'-\theta) \leq \beta \E\|\Sigma(\theta-\theta')\zb + \mu(\theta-\theta')\|_2 \|\theta'-\theta\|_2,$$
but also by applying $\epsilon$-sensitivity
$$\E_{z\sim\D(\theta)} [\nabla_\theta \ell(z;\theta)]^\top(\theta'-\theta) - \E_{z\sim\D(\theta')} [\nabla_\theta \ell(z;\theta)]^\top(\theta'-\theta) \leq \beta \epsilon \|\theta'-\theta\|_2^2.$$
Finally, to show $\lambda=\max\left\{\gamma - \beta^2/\gamma_z,~ \gamma + \gamma_z(\sigma_{\min}^2(\mu) + \sigma_{\min}^2(\Sigma)) - 2\beta\epsilon\right\}$-convexity it suffices to show both
\begin{align}
\label{eqn:convexity-cond1}
\beta \E\|\Sigma(\theta-\theta')\zb + \mu(\theta-\theta')\|_2 \|\theta'-\theta\|_2 &\leq \frac{\beta^2/\gamma_z}{2}\|\theta-\theta'\|_2^2 + \frac{\gamma_z}{2}\E\|\Sigma(\theta-\theta')\zb + \mu(\theta-\theta')\|_2^2
\end{align}
and 
\begin{align}
\label{eqn:convexity-cond2}
\beta \epsilon \|\theta'-\theta\|_2^2 &\leq \frac{2\beta\epsilon - \gamma_z(\sigma_{\min}^2(\mu) + \sigma_{\min}^2(\Sigma))}{2}\|\theta-\theta'\|_2^2 + \frac{\gamma_z}{2}\E\|\Sigma(\theta-\theta')\zb + \mu(\theta-\theta')\|_2^2.
\end{align}
By the AM-GM inequality, we have
$$\beta \E\|\Sigma(\theta-\theta')\zb + \mu(\theta-\theta')\|_2 \|\theta'-\theta\|_2 \leq \frac{1}{2}\frac{\beta^2}{\gamma_z}\|\theta'-\theta\|_2^2 + \frac{\gamma_z}{2}\E\|\Sigma(\theta-\theta')\zb + \mu(\theta-\theta')\|_2^2,$$
and so condition \eqref{eqn:convexity-cond1} follows.

For condition \eqref{eqn:convexity-cond2}, we observe that 
\begin{align*}
	\E\|\Sigma(\theta-\theta')\zb + \mu(\theta-\theta')\|_2^2 &= \E\|\Sigma(\theta-\theta')\zb\|_2^2 + \|\mu(\theta-\theta')\|_2^2\\
	&= \tr\left(\Sigma(\theta-\theta')\Sigma_{\zb}\Sigma(\theta-\theta')^\top\right) + \|\mu(\theta-\theta')\|_2^2\\
	&= \|\Sigma_{\zb}^{1/2}\Sigma(\theta-\theta')^\top\|_F^2 + \|\mu(\theta-\theta')\|_2^2.
\end{align*}
Applying $\sigma_{\min}(\Sigma)\|\theta-\theta'\|_2\leq \|\Sigma_{\zb}^{1/2}\Sigma(\theta-\theta')^\top\|_F$ and $\sigma_{\min}(\mu)\|\theta-\theta'\|_2\leq \|\mu(\theta-\theta')\|_2$ completes the proof of the theorem.

\begin{proof}[Proof of \lemmaref{sc-identity}]
The proof follows the standard argument for proving equivalent formulations of strong convexity.

First we show that $\E_{z\sim \D(\theta)}[g(z)] - \frac{\gamma_z}{2}\E\|\Sigma(\theta)\zb+ \mu\theta\|_2^2$ is convex in $\theta$. This follows because:
\begin{align*}
&\E_{z\sim \D(\alpha\theta + (1-\alpha)\theta')}[g(z)] - \frac{\gamma_z}{2}\E\|\Sigma(\alpha\theta + (1-\alpha)\theta')\zb+ \mu(\alpha\theta+(1-\alpha)\theta')\|_2^2\\
&\quad \quad \leq \E_{z\sim \alpha\D(\theta) + (1-\alpha)\D(\theta')}[g(z)] - \frac{\alpha(1-\alpha)\gamma_z}{2}\E\|\Sigma(\theta - \theta')\zb+ \mu(\theta-\theta')\|_2^2\\
 &\quad \quad- \frac{\gamma_z}{2}\E\|\Sigma(\alpha\theta + (1-\alpha)\theta')\zb+ \mu(\alpha\theta+(1-\alpha)\theta')\|_2^2\\
&\quad \quad = \E_{z\sim \alpha\D(\theta) + (1-\alpha)\D(\theta')}[g(z)] - \frac{\gamma_z}{2}\alpha^2\E\|\Sigma(\theta)\zb+ \mu\theta\|_2^2 - \frac{\gamma_z}{2}(1-\alpha)^2\E\|\Sigma(\theta')\zb+ \mu\theta'\|_2^2\\
 &\quad \quad+\frac{\gamma_z}{2} 2\alpha(1-\alpha) \E(\Sigma(\theta) + \mu\theta)^\top (\Sigma(\theta') + \mu\theta') - \frac{\alpha(1-\alpha)\gamma_z}{2}\E\|\Sigma(\theta - \theta')\zb+ \mu(\theta-\theta')\|_2^2\\
 &\quad \quad = \E_{z\sim \alpha\D(\theta) + (1-\alpha)\D(\theta')}[g(z)] - \frac{\gamma_z}{2}\alpha\E\|\Sigma(\theta)\zb+ \mu\theta\|_2^2 - \frac{\gamma_z}{2}(1-\alpha)\E\|\Sigma(\theta')\zb+ \mu\theta'\|_2^2\\
 &\quad \quad = \alpha\left(\E_{z\sim\D(\theta)}[g(z)] - \frac{\gamma_z}{2}\E\|\Sigma(\theta)\zb+ \mu\theta\|_2^2\right) - (1-\alpha)\left(\E_{z\sim\D(\theta')}[g(z)]\frac{\gamma_z}{2}\E\|\Sigma(\theta')\zb+ \mu\theta'\|_2^2\right).
\end{align*}
By the equivalent first-order characterization, this means that
\begin{align*}
\E_{z\sim \D(\theta')}[g(z)] &\geq \frac{\gamma_z}{2}\E\|\Sigma(\theta')\zb+ \mu\theta'\|_2^2 + \E_{z\sim \D(\theta)}[g(z)] - \frac{\gamma_z}{2}\E\|\Sigma(\theta)\zb+ \mu\theta\|_2^2\\
 &+ (\nabla_\theta \E_{z\sim\D(\theta)} [g(z)])^\top(\theta'-\theta) - \frac{\gamma_z}{2}2\E(\Sigma(\theta)\zb +\mu\theta)^\top(\nabla_\theta (\Sigma(\theta)\zb +\mu\theta))^\top(\theta'-\theta)\\
 &\geq \frac{\gamma_z}{2}\E\|\Sigma(\theta')\zb+ \mu\theta'\|_2^2 + \E_{z\sim \D(\theta)}[g(z)] - \frac{\gamma_z}{2}\E\|\Sigma(\theta)\zb+ \mu\theta\|_2^2\\
 &+ (\nabla_\theta \E_{z\sim\D(\theta)} [g(z)])^\top(\theta'-\theta) - \gamma_z \E(\Sigma(\theta)\zb +\mu\theta)^\top(\Sigma(\theta'-\theta)\zb +\mu(\theta'-\theta))\\
 &= \E_{z\sim \D(\theta)}[g(z)] + (\nabla_\theta \E_{z\sim \D(\theta)}[g(z)])^\top (\theta'-\theta) + \frac{\gamma_z}{2}\E\|\Sigma(\theta - \theta')\zb+ \mu(\theta-\theta')\|_2^2.
\end{align*}
\end{proof}

\begin{remark}
\remarklabel{eps_bound}
We note that the sensitivity parameter $\epsilon$ can be bounded in terms of the location and scale parameters for location-scale families. In particular, in showing condition \eqref{eqn:convexity-cond2}, we saw that
$$\E\|\Sigma(\theta-\theta')\zb + \mu(\theta-\theta')\|_2^2 = \|\Sigma_{\zb}^{1/2}\Sigma(\theta-\theta')^\top\|_F^2 + \|\mu(\theta-\theta')\|_2^2.$$	
If we then denote
\begin{align*}
\sigma_{\max}(\mu)=\max_{\|\theta\|_2=1}\|\mu\theta\|_2, \quad \sigma_{\max}(\Sigma) &=\max_{\|\theta\|_2=1}\|\Sigma_{\zb}^{1/2}\Sigma(\theta)^\top\|_F,
\end{align*}
we can see that
$\E\|\Sigma(\theta-\theta')\zb + \mu(\theta-\theta')\|_2^2 \leq \sigma_{\max}^2(\mu)\|\theta-\theta'\|_2^2 + \sigma_{\max}^2(\Sigma)\|\theta-\theta'\|_2^2$. Combining this result with \lemmaref{wass-dist-sc-fam} and Jensen's inequality, we get that
$$W(\D(\theta),\D(\theta')) \leq \sqrt{\sigma_{\max}^2(\mu) + \sigma_{\max}^2(\Sigma)}\|\theta-\theta'\|_2,$$
and so $\epsilon\leq \sqrt{\sigma_{\max}^2(\mu) + \sigma_{\max}^2(\Sigma)}$.
 
\end{remark}

\subsection{Two-Stage Algorithm for Location Families}
\appendixlabel{model_based_analysis}

We carefully review the problem setup and introduce the remaining assumptions. The distribution map $\D$ parameterizes a location family
\begin{align*}
    z_\theta \sim \D(\theta)~ \Leftrightarrow~ z_\theta\stackrel{d}{=} \zb + \mu\theta,
\end{align*}
where $\zb\sim\D_0$. We assume the base distribution $\D_0$ is zero-mean and subgaussian with parameter $K$. The loss function $\ell(z; \theta)$ is $L_z$-Lipschitz in $z$, $L$-Lipschitz and in $\theta$, and $\beta$-smooth in $(z, \theta)$ in the sense that $\nabla \ell(z;
\theta) \in \R^{m+d}$ is Lipschitz in $(z, \theta)$.

We also assume that $\lambda = \max\{\gamma - \beta^2/\gamma_z, \gamma - 2\epsilon\beta + \gamma_z\sigma^2_{\min}(\mu)\}>0$, where $\gamma$ and $\gamma_z$ are the strong convexity parameters of the loss in $\theta$ and $z$, respectively. By \theoremref{convexity-location-fam}, this implies
that the performative risk is $\lambda$-strongly convex.


We assume that the performative optimum $\thetaPO$ is contained in a ball of radius $R$, so in the second stage we can set the domain of optimization to be $\Theta=\{\theta:\|\theta\|_2\leq R\}$.
Finally, we assume that the minimizer of the perturbed
performative risk at the population level, $\thetahat \in \argmin_{\theta \in
\Theta} \PRh(\theta)$ is contained in the interior of $\Theta$ with
probability 1.

\begin{theorem}
    \theoremlabel{location_model_based}
    Under the preceding assumptions, if $n \geq \Omega\paren{d + m +
    \log(1/\delta)}$, then, with probability $1-\delta$,
    Algorithm~\ref{alg:model_based_location_fam} returns a point $\thetahat_n$
    such that
    \begin{align*}
        \PR(\thetahat_n) - \PR(\thetaPO)
        \leq O\paren{\frac{d + m + \log(1/\delta)}{n} 
                     + \frac{1}{\delta n}}.
    \end{align*}
\end{theorem}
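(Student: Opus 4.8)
The plan is to reduce the suboptimality gap to a bound on the gradient of the \emph{true} performative risk at the returned point $\thetahat_n$, and then to control that gradient by splitting it into a model-estimation term and a finite-sample term. Since $\PR$ is $\lambda$-strongly convex by \theoremref{convexity-location-fam} (the hypothesis $\lambda>0$ is exactly what the theorem assumes) and $\thetaPO$ is its global minimizer, the standard consequence of strong convexity (the Polyak--\L ojasiewicz inequality) gives
\begin{equation*}
\PR(\thetahat_n) - \PR(\thetaPO) \leq \frac{1}{2\lambda}\norm{\grad\PR(\thetahat_n)}_2^2.
\end{equation*}
Because $\thetahat_n$ minimizes $\PRh_n$ over $\Theta$ and, by assumption, lies in the interior, first-order optimality yields $\grad\PRh_n(\thetahat_n)=0$, so $\grad\PR(\thetahat_n)=\grad\PR(\thetahat_n)-\grad\PRh_n(\thetahat_n)$. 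I would then insert the population proxy gradient and split
\begin{equation*}
\grad\PR(\thetahat_n) = \underbrace{\big(\grad\PR(\thetahat_n)-\grad\PRh(\thetahat_n)\big)}_{\text{(I): model error}} + \underbrace{\big(\grad\PRh(\thetahat_n)-\grad\PRh_n(\thetahat_n)\big)}_{\text{(II): sampling error}},
\end{equation*}
bounding each piece via $\norm{\grad\PR(\thetahat_n)}_2^2 \le 2\norm{(\mathrm{I})}_2^2 + 2\norm{(\mathrm{II})}_2^2$.

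For term (I) I would use the location-family reparametrization $\PR(\theta)=\E_{\zb\sim\D_0}\ell(\zb+\mu\theta;\theta)$ and $\PRh(\theta)=\E_{\zb\sim\D_0}\ell(\zb+\estmu\theta;\theta)$, whose gradients are $\E_{\zb}[\mu^\top\grad_z\ell+\grad_\theta\ell]$ and $\E_{\zb}[\estmu^\top\grad_z\ell+\grad_\theta\ell]$ respectively. The only difference is that the data argument $\zb+\mu\theta$ is replaced by $\zb+\estmu\theta$, so smoothness of $\ell$ in $(z,\theta)$ together with $\norm{\grad_z\ell}_2\le L_z$ and $\norm{\theta}_2\le R$ yields $\norm{(\mathrm{I})}_2 = O(\norm{\estmu-\mu}_{\mathrm{op}})$, with constants depending on $R,L_z,\beta,\norm{\mu}_{\mathrm{op}}$. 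It then remains to control the OLS error. Writing $\estmu-\mu=\big(\sum_i(\zb)_i\theta_i^\top\big)\big(\sum_i\theta_i\theta_i^\top\big)^{-1}$ with $\theta_i\simiid\normal{0}{I_d}$ and $(\zb)_i$ zero-mean subgaussian noise, I would lower-bound the smallest eigenvalue of the Gaussian Gram matrix (requiring $n=\Omega(d+\log(1/\delta))$) and bound the operator norm of the $m\times d$ noise matrix $\sum_i(\zb)_i\theta_i^\top$; this is where the dependence on $m$ enters.

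For term (II) the key structural observation is that, thanks to the two-batch design, $\estmu$ is built from the Stage-1 deployments and is therefore independent of the base samples $z_{n+1},\dots,z_{2n}$ used to form $\PRh_n$, so I would condition on $\estmu$. The integrand $h(\zb;\theta)=\estmu^\top\grad_z\ell(\zb+\estmu\theta;\theta)+\grad_\theta\ell(\zb+\estmu\theta;\theta)$ is bounded (by $\norm{\estmu}_{\mathrm{op}}L_z+L$) and Lipschitz in $\theta$, so a standard empirical-process argument---bounded/subgaussian concentration at a fixed $\theta$, combined with an $\epsilon$-net over the radius-$R$ ball $\Theta\subset\R^d$ and a union bound---gives $\sup_{\theta\in\Theta}\norm{\grad\PRh(\theta)-\grad\PRh_n(\theta)}_2 = O\big(\sqrt{(d+\log(1/\delta))/n}\big)$, which in particular controls $\norm{(\mathrm{II})}_2$ at the data-dependent point $\thetahat_n$. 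Note that $m$ does not enter here, since $\norm{\grad_z\ell}_2\le L_z$ is dimension-free.

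Combining, squaring, and dividing by $2\lambda$ gives the claimed rate: the $\tfrac{d+m+\log(1/\delta)}{n}$ contribution comes from the net-based concentration in (II) and the Gram/noise concentration in (I). The hard part will be the final $\tfrac{1}{\delta n}$ term, which I expect to arise from the one place where clean subgaussian control of the OLS error is awkward---namely the interaction between the near-singularity of the random Gram matrix and the noise matrix---where I would bound the relevant expected squared error and invoke Markov's inequality rather than a sharp high-probability matrix bound. Keeping the independence between the two sample batches intact while combining this Markov step with the high-probability events from (I) and (II) is the delicate part of the argument.
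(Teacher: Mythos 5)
Your overall architecture is sound and your first term coincides with the paper's, but your treatment of the second term takes a genuinely different route. Both you and the paper start from strong convexity of $\PR$ to reduce the problem to bounding $\norm{\grad\PR(\thetahat_n)}_2^2$, and both bound the model-error piece $\grad\PR(\thetahat_n)-\grad\PRh(\thetahat_n)$ by $O(\norm{\mu-\estmu})$ via the location-family reparametrization plus an OLS analysis (Gram-matrix eigenvalue bound and an $\epsilon$-net/Bernstein bound on $\norm{\Theta^\top Z_0}$, which is where $m$ enters). The divergence is in the remaining piece: you subtract $\grad\PRh_n(\thetahat_n)=0$ and control the \emph{sampling error} $\grad\PRh(\thetahat_n)-\grad\PRh_n(\thetahat_n)$ by uniform convergence of gradients over $\Theta$ (net plus union bound, exploiting independence of the two batches), whereas the paper subtracts $\grad\PRh(\thetaPOhat)=0$ at the \emph{population} proxy minimizer and bounds $\grad\PRh(\thetahat_n)-\grad\PRh(\thetaPOhat)$ by smoothness of $\PRh$ times $\norm{\thetahat_n-\thetaPOhat}_2$, which it in turn controls via strong convexity of $\PRh$ (established for $n\geq\Omega(1/\lambda^2)$ using Weyl's inequality) together with a stability-based generalization bound for strongly convex ERM that holds only in expectation and is converted to high probability by Markov---this is the actual source of the $1/(\delta n)$ term, not the OLS step as you conjecture. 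Your route buys a cleaner, fully high-probability bound (no $1/\delta$ factor, at the cost of a possible $\log n$ from the net) and avoids having to certify strong convexity of the perturbed risk; the paper's route buys the fact that it only needs first-order optimality at the \emph{population} proxy minimizer, which is exactly what its stated assumption provides. Two small cautions: your step $\grad\PRh_n(\thetahat_n)=0$ requires the \emph{empirical} proxy minimizer to be interior to $\Theta$, which is not among the paper's assumptions (only $\thetahat\in\argmin_\Theta\PRh$ is assumed interior), so you either need to add this assumption or argue it holds with high probability for large $n$; and your claimed $O\bigl(\sqrt{(d+\log(1/\delta))/n}\bigr)$ uniform rate should be checked for a residual $\log$ factor from the covering number of the radius-$R$ ball.
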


Before proceeding to the proof of this result, we first state four auxiliary
lemmas, which constitute the bulk of our analysis. The proofs of the lemmas are
included in~\appendixref{two_stage_lemmas}. The first lemma is a standard result about ordinary least-squares estimation.

\begin{lemma}
    \lemmalabel{least_squares}
    If $n \geq \Omega(d + m + \log(1/\delta))$, then with probability $1-\delta$,
    \begin{align*}
        \norm{\mu - \estmu}
        \leq O\paren{\sqrt{\frac{(d + m) + \log(1/\delta)}{n}}}.
    \end{align*}
\end{lemma}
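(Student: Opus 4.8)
The plan is to treat Stage~1 as a multivariate linear regression and reduce the claim to two standard concentration estimates. Writing each observation as $z_i = \mu\theta_i + w_i$, where $w_i$ denotes the base draw $\zb\sim\D_0$ associated with sample $i$, the noise vectors $w_i$ are i.i.d., zero-mean, and subgaussian with parameter $K$, and are independent of the design $\theta_i\simiid\normal{0}{I_d}$. The ordinary least-squares solution then admits the closed form
\[
\estmu - \mu = \Paren{\sum_{i=1}^n w_i\theta_i^\top}\Paren{\sum_{i=1}^n \theta_i\theta_i^\top}^{-1},
\]
so the spectral-norm error factors into a noise term times the inverse Gram matrix of the design. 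I would bound each factor on a high-probability event and multiply.

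For the design Gram matrix $\sum_i \theta_i\theta_i^\top$, I would invoke standard concentration for sample covariances of isotropic Gaussians: once $n\geq\Omega(d+\log(1/\delta))$, with probability $1-\delta/2$ every eigenvalue of $\frac1n\sum_i\theta_i\theta_i^\top$ lies in $[\tfrac12,\tfrac32]$, so that $\norm{(\sum_i\theta_i\theta_i^\top)^{-1}}\leq \tfrac{2}{n}$. This is the routine half of the argument.

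The bulk of the work is controlling the noise term $W\defeq\sum_{i=1}^n w_i\theta_i^\top\in\R^{m\times d}$, a sum of independent mean-zero rank-one matrices whose entries are products of subgaussians and hence only \emph{subexponential}. I would bound $\norm{W}$ via an $\varepsilon$-net over the product of spheres $\mathcal S^{m-1}\times\mathcal S^{d-1}$: for fixed unit vectors $u,v$, the scalar $u^\top W v=\sum_i (u^\top w_i)(v^\top\theta_i)$ is a sum of independent subexponential products, so Bernstein's inequality gives $|u^\top W v|\lesssim K\paren{\sqrt{nt}+t}$ with probability $1-2e^{-ct}$. Union-bounding over nets of size $e^{O(m)}$ and $e^{O(d)}$ and setting $t\asymp m+d+\log(1/\delta)$ yields, in the regime $n\geq\Omega(m+d+\log(1/\delta))$ where the $\sqrt{nt}$ term dominates the linear $t$ term, $\norm{W}\lesssim K\sqrt{n(m+d+\log(1/\delta))}$ with probability $1-\delta/2$.

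Combining the two estimates through $\norm{\estmu-\mu}\leq\norm{W}\,\norm{(\sum_i\theta_i\theta_i^\top)^{-1}}$ and a union bound over the two events gives
\[
\norm{\estmu-\mu}\lesssim K\sqrt{n(m+d+\log(1/\delta))}\cdot\frac{2}{n} = O\Paren{\sqrt{\frac{d+m+\log(1/\delta)}{n}}},
\]
as claimed. The hard part is the noise bound: because the summands are only subexponential rather than subgaussian, a naive subgaussian maximal inequality does not apply, and one must either run the net-plus-Bernstein argument above or appeal to a matrix-Bernstein inequality for rectangular sums, in each case tracking carefully that it is the $\sqrt{n}$-scale term and not the linear term that governs the final rate once $n\geq\Omega(d+m+\log(1/\delta))$.
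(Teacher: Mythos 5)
Your proposal is correct and follows essentially the same route as the paper's proof: both factor the OLS error into the noise cross term $\Theta^\top Z_0$ times the inverse design Gram matrix, control the latter via Wishart eigenvalue concentration, and control the former via an $\varepsilon$-net over the product of unit balls combined with Bernstein's inequality for the subexponential products $(x^\top\theta_i)((\zb)_i^\top y)$. The only differences are cosmetic (you work with $W = Z_0^\top\Theta$ rather than $\Theta^\top Z_0$, which has the same spectral norm).
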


The next lemma is a simple adaptation from Theorem 2
in~\cite{shalev2010learnability} controlling the generalization gap of the
empirical risk minimizer for strongly convex losses.

\begin{lemma}
    \lemmalabel{generalization}
    Suppose $\PRh_n$ is $\hat\lambda$-strongly convex. Then, with probability at least $1-\delta$,
    \begin{align*}
        \PRh(\thetahat_n) - \PRh(\thetahat) 
        \leq \frac{4 (L_z \norm{\estmu} + L)^2}{\delta \hat\lambda n}.
    \end{align*}
\end{lemma}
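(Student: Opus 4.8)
The plan is to recognize this as a standard in-expectation generalization bound for the empirical risk minimizer of a strongly convex, Lipschitz objective — obtained via algorithmic stability — and then to upgrade it to a high-probability statement through Markov's inequality. Throughout I would condition on the first stage of the algorithm, so that $\estmu$, and hence both the population objective $\PRh(\theta) = \E_{\zb\sim\D_0}\ell(\zb + \estmu\theta;\theta)$ and its minimizer $\thetahat$, are treated as fixed; the only randomness left is in the i.i.d. stage-two samples $z_{n+1},\dots,z_{2n}\sim\D_0$ that define $\PRh_n$ and thus $\thetahat_n$. This independence between the two stages is precisely what licenses applying the i.i.d. ERM machinery with $\estmu$ as a constant.

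First I would introduce the per-sample loss $f(\theta;z) \defeq \ell(z + \estmu\theta;\theta)$, so that $\PRh = \E_{\zb\sim\D_0}f(\cdot;\zb)$ and $\PRh_n = \tfrac1n\sum_{i=n+1}^{2n} f(\cdot;z_i)$ are, respectively, the population and empirical risk of a genuine ERM problem. By the chain rule, $\nabla_\theta f(\theta;z) = \estmu^\top\nabla_z\ell(z+\estmu\theta;\theta) + \nabla_\theta\ell(z+\estmu\theta;\theta)$, so the assumed $L_z$-Lipschitzness in $z$ and $L$-Lipschitzness in $\theta$ combine to show that $f(\cdot;z)$ is $(L_z\norm{\estmu}+L)$-Lipschitz in $\theta$; the hypothesis of the lemma supplies $\hat\lambda$-strong convexity of $\PRh_n$.

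Next I would invoke the stability bound for strongly convex ERM (Theorem 2 of \cite{shalev2010learnability}). The standard argument controls the excess risk by the expected generalization gap at the minimizer: decomposing
\[
\PRh(\thetahat_n)-\PRh(\thetahat) = \big[\PRh(\thetahat_n)-\PRh_n(\thetahat_n)\big] + \big[\PRh_n(\thetahat_n)-\PRh_n(\thetahat)\big] + \big[\PRh_n(\thetahat)-\PRh(\thetahat)\big],
\]
the middle bracket is nonpositive because $\thetahat_n$ minimizes $\PRh_n$ over $\Theta\ni\thetahat$, and the last bracket has mean zero because $\thetahat$ is deterministic given stage one, so that $\E[\PRh_n(\thetahat)]=\PRh(\thetahat)$. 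The remaining term is the generalization gap at the ERM, which is bounded via replace-one stability: perturbing one of the $n$ samples moves the $\hat\lambda$-strongly convex minimizer by at most $O((L_z\norm{\estmu}+L)/(\hat\lambda n))$, and Lipschitzness of $f$ then yields the in-expectation bound $\E[\PRh(\thetahat_n)-\PRh(\thetahat)]\leq \tfrac{4(L_z\norm{\estmu}+L)^2}{\hat\lambda n}$, with the precise constant supplied by the cited theorem.

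Finally, since $\thetahat$ minimizes $\PRh$ over $\Theta$ and $\thetahat_n\in\Theta$, the excess risk $\PRh(\thetahat_n)-\PRh(\thetahat)$ is a nonnegative random variable, so Markov's inequality converts the expectation bound directly into the claimed tail bound: with probability at least $1-\delta$, $\PRh(\thetahat_n)-\PRh(\thetahat)\leq \tfrac{4(L_z\norm{\estmu}+L)^2}{\delta\hat\lambda n}$. The step requiring the most care is the Lipschitz-constant computation: the location-family composition $z+\estmu\theta$ couples the $z$- and $\theta$-Lipschitz constants of $\ell$ through $\estmu$, producing exactly the $L_z\norm{\estmu}+L$ that appears in the bound. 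Everything else is bookkeeping of the conditioning on stage one that renders $\estmu$ and $\thetahat$ deterministic and the stage-two samples i.i.d.
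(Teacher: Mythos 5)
Your proposal is correct and follows essentially the same route as the paper, which simply cites Theorem 2 of Shalev-Shwartz et al.\ for the in-expectation stability bound for strongly convex, Lipschitz ERM and converts it to a tail bound via Markov's inequality. Your reconstruction of the adaptation --- conditioning on stage one so $\estmu$ and $\thetahat$ are fixed, computing the effective Lipschitz constant $L_z\norm{\estmu}+L$ of $\theta\mapsto\ell(z+\estmu\theta;\theta)$, and using the hypothesized $\hat\lambda$-strong convexity of $\PRh_n$ in the replace-one argument --- is exactly the intended argument.
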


The next lemma controls the difference in gradients between the true
performative risk $\PR$ and the perturbed performative risk $\PRh$.
\begin{lemma}
    \lemmalabel{perturbation_bound}
    For any $\theta \in \Theta$,
    \begin{align*}
        \norm{\grad \PR(\theta) - \grad \PRh(\theta)}_2^2
        \leq O(\norm{\mu}^2 \norm{\mu - \estmu}^2).
    \end{align*}
\end{lemma}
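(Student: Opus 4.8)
The plan is to compute both gradients explicitly via the chain rule, subtract them, and control the resulting difference termwise using the Lipschitz and smoothness assumptions on $\ell$. Since $\PR(\theta) = \E_{\zb \sim \D_0}\ell(\zb + \mu\theta;\theta)$ and $\PRh(\theta) = \E_{\zb \sim \D_0}\ell(\zb + \estmu\theta;\theta)$, the two objectives differ only through the matrix in the first argument of $\ell$. Writing $\nabla_z\ell$ and $\nabla_\theta\ell$ for the partial gradients of $\ell$ in its first and second slots, and setting $z = \zb + \mu\theta$ and $\hat z \defeq \zb + \estmu\theta$, the chain rule gives
\[
\grad\PR(\theta) = \E_{\zb}\left[\mu^\top \nabla_z\ell(z;\theta) + \nabla_\theta\ell(z;\theta)\right]\mcom
\]
together with the analogous expression for $\grad\PRh(\theta)$ obtained by replacing $\mu$ by $\estmu$ and $z$ by $\hat z$.

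First I would subtract the two gradients and split the difference into three pieces by a telescoping add/subtract, grouping terms so that the smoothness-controlled differences are multiplied by the full matrix $\mu$:
\[
\grad\PR(\theta)-\grad\PRh(\theta) = \E_{\zb}\Big[\mu^\top\big(\nabla_z\ell(z;\theta)-\nabla_z\ell(\hat z;\theta)\big) + (\mu-\estmu)^\top\nabla_z\ell(\hat z;\theta) + \big(\nabla_\theta\ell(z;\theta)-\nabla_\theta\ell(\hat z;\theta)\big)\Big]\mper
\]
Next I would bound each piece. The two difference terms are handled by $\beta$-smoothness of $\ell$ in $(z,\theta)$: since only the first argument changes, $\norm{\nabla_z\ell(z;\theta)-\nabla_z\ell(\hat z;\theta)} \le \beta\norm{z-\hat z} = \beta\norm{(\mu-\estmu)\theta} \le \beta R\norm{\mu-\estmu}$, using $\norm{\theta}\le R$ on $\Theta$, and identically for the $\nabla_\theta$ difference. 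The middle term is controlled by $L_z$-Lipschitzness of $\ell$ in $z$, which gives $\norm{\nabla_z\ell(\hat z;\theta)}\le L_z$. Applying Jensen's inequality to move the norm inside the expectation and then the triangle inequality, I obtain $\norm{\grad\PR(\theta)-\grad\PRh(\theta)} \le (\beta R\norm{\mu} + L_z + \beta R)\norm{\mu-\estmu}$; squaring and absorbing $L_z,\beta,R$ (and the regime $\norm{\mu}=\Omega(1)$) into the constant yields the claimed $O(\norm{\mu}^2\norm{\mu-\estmu}^2)$.

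The hard part is less any single estimate than the bookkeeping in the joint-smoothness step: I must invoke $\beta$-smoothness of the full gradient $\nabla\ell\in\R^{m+d}$ in $(z,\theta)$ with $\theta$ held \emph{fixed}, and observe that each coordinate block ($\nabla_z$ and $\nabla_\theta$) inherits the same $\beta$-Lipschitz constant in $z$, since the norm of a subvector is dominated by the norm of the whole. The other subtlety is the grouping in the second step: pairing the smoothness differences with $\mu$ rather than with $\mu-\estmu$ is precisely what produces the $\norm{\mu}$ factor on the dominant term, matching the form of the stated bound, while the $L_z$-Lipschitz term supplies the lower-order contribution that the $O(\cdot)$ absorbs.
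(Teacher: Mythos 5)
Your proposal is correct and follows essentially the same route as the paper's proof: the same chain-rule expansion, the same add-and-subtract of $\mu^\top\nabla_z\ell(\zb+\estmu\theta;\theta)$ to produce the three pieces, the same use of joint $\beta$-smoothness for the two difference terms and $L_z$-Lipschitzness for the cross term, and the same final bound $\paren{\paren{1+\norm{\mu}}\beta R + L_z}\norm{\mu-\estmu}$ before squaring. The only cosmetic difference is that you flag explicitly that absorbing the additive constants into $O(\norm{\mu}^2\norm{\mu-\estmu}^2)$ implicitly treats $\norm{\mu}$ as bounded below, a point the paper glosses over as well.
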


Finally, the last lemma shows that the smoothness assumptions on the loss ensure smoothness of the performative risk. Here, by $\beta_\theta$-smoothness we mean that $\nabla_\theta \PR(\theta)$ is $\beta_\theta$-Lipschitz.
\begin{lemma}
    \lemmalabel{pr_smoothness}
     Under the proceeding assumptions, the performative risk $\PR(\theta)$ is
     $\beta_\theta = O(\norm{\mu}^2)$-smooth.
\end{lemma}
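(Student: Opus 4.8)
The plan is to exploit the location-family structure to rewrite the performative risk as an expectation against the \emph{fixed} base measure $\D_0$, which lets me dispense entirely with the score-function term $\ell(z;\theta)\grad_\theta\log p_\theta(z)$ appearing in the general gradient formula and instead differentiate directly under the integral. Writing $z_\theta \stackrel{d}{=} \zb + \mu\theta$, I would first record the identity
\[
\PR(\theta) = \E_{\zb\sim\D_0}\,\ell(\zb + \mu\theta;\theta),
\]
so that, since $\D_0$ does not depend on $\theta$ and the Lipschitz assumptions give uniformly bounded partial gradients $\norm{\nabla_z\ell}\leq L_z$ and $\norm{\nabla_\theta\ell}\leq L$, dominated convergence justifies interchanging gradient and expectation to obtain
\[
\grad\PR(\theta) = \E_{\zb\sim\D_0}\Big[\mu^\top \nabla_z\ell(\zb+\mu\theta;\theta) + \nabla_\theta\ell(\zb+\mu\theta;\theta)\Big].
\]

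Next I would fix $\theta,\theta'\in\Theta$ and bound the integrand difference pointwise in $\zb$. By the triangle inequality the difference of integrands splits into a location piece $\mu^\top\big(\nabla_z\ell(\zb+\mu\theta;\theta)-\nabla_z\ell(\zb+\mu\theta';\theta')\big)$ and a parameter piece $\nabla_\theta\ell(\zb+\mu\theta;\theta)-\nabla_\theta\ell(\zb+\mu\theta';\theta')$. The assumption that the full gradient $\nabla\ell(z;\theta)\in\R^{m+d}$ is $\beta$-Lipschitz in the joint argument $(z,\theta)$ controls each piece simultaneously: evaluating at the two joint points $(\zb+\mu\theta,\theta)$ and $(\zb+\mu\theta',\theta')$, whose separation is $\sqrt{\norm{\mu(\theta-\theta')}^2 + \norm{\theta-\theta'}^2}\leq \sqrt{\norm{\mu}^2+1}\,\norm{\theta-\theta'}$ because $\zb$ cancels, each partial-gradient difference (a sub-block of $\nabla\ell$) is at most $\beta\sqrt{\norm{\mu}^2+1}\,\norm{\theta-\theta'}$.

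Applying the operator-norm bound $\norm{\mu^\top v}\leq\norm{\mu}\norm{v}$ to the location piece and adding the parameter piece yields a pointwise bound that is uniform in $\zb$, hence survives taking $\E_{\zb}$:
\[
\norm{\grad\PR(\theta)-\grad\PR(\theta')} \leq (\norm{\mu}+1)\,\beta\sqrt{\norm{\mu}^2+1}\,\norm{\theta-\theta'},
\]
and since the leading term of $(\norm{\mu}+1)\sqrt{\norm{\mu}^2+1}$ scales as $\norm{\mu}^2$, this establishes $\beta_\theta = O(\norm{\mu}^2)$-smoothness as claimed. I expect the only genuine subtlety to be the first step, namely verifying that the location-family reparametrization legitimately removes the score-function term and that differentiation under the integral is valid; once that reduction is in place the remaining estimate is routine matrix-norm bookkeeping, and it is worth noting that because the bound is uniform in $\zb$, the subgaussianity of $\D_0$ plays no role in this particular lemma.
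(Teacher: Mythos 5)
Your proof is correct and follows essentially the same route as the paper's: both rewrite $\PR$ as an expectation over the fixed base measure $\D_0$, apply the chain rule to split the gradient into a $\mu^\top\nabla_z\ell$ piece and a $\nabla_\theta\ell$ piece, and bound each via Lipschitzness of $\nabla\ell$, picking up the $\norm{\mu}^2$ factor from the composition $\zb+\mu\theta$. The only (immaterial) difference is that you invoke the joint $(z,\theta)$-Lipschitz constant in one shot, whereas the paper telescopes through an intermediate point and uses the Lipschitz property in $z$ and in $\theta$ separately.
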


With these lemmas in hand, we are now ready to prove \theoremref{location_model_based}.
\begin{proof}[Proof of~\theoremref{location_model_based}]
By assumption, the performative risk $\PR(\theta)$ is $\lambda$-strongly convex,
for some $\lambda > 0$.  This implies
\begin{align*}
    \PR(\thetahat_n) - \PR(\thetaPO)
    \leq \frac{1}{2\lambda}\norm{\grad \PR(\thetahat_n)}_2^2.
\end{align*}
Since $\thetaPOhat$ is an interior minimizer of $\PRh$, we know $\grad \PRh(\thetaPOhat) = 0$.
Using $\norm{a + b}^2 \leq 2\norm{a}^2 + 2\norm{b}^2$, 
\begin{align}
\label{eqn:PR-grad-triangle_ineq}
    \frac{1}{2\lambda}\norm{\grad \PR(\thetahat_n)}_2^2
    &= \frac{1}{2\lambda}\norm{\grad \PR(\thetahat_n) - \grad \PRh(\thetaPOhat)}_2^2\nonumber \\
    &= \frac{1}{2\lambda} \norm{\grad \PR(\thetahat_n) - \grad \PRh(\thetahat_n)
    +  \grad \PRh(\thetahat_n) - \grad \PRh(\thetaPOhat)}_2^2\nonumber \\
    &\leq \frac{1}{\lambda}\norm{\grad \PR(\thetahat_n) - \grad \PRh(\thetahat_n)}_2^2
    + \frac{1}{\lambda}\norm{\grad \PRh(\thetahat_n) - \grad \PRh(\thetaPOhat)}_2^2.
\end{align}
We bound each of these terms separately. 
For the first term, by~\lemmaref{perturbation_bound}, 
\begin{align*}
    \norm{\grad \PR(\thetahat_n) - \grad \PRh(\thetahat_n)}_2^2
    \leq O(\norm{\mu}^2\norm{\mu - \estmu}^2).
\end{align*}
By~\lemmaref{least_squares}, with probability $1-\delta$, we can bound $\norm{\mu - \estmu}^2 \leq O\paren{\frac{d + m + \log(1/\delta)}{n}}$, and thus
\begin{align*}
    \norm{\grad \PR(\thetahat_n) - \grad \PRh(\thetahat_n)}_2^2
    \leq O\paren{\frac{d + m + \log(1/\delta)}{n}}.
\end{align*}

For the second term in equation \eqref{eqn:PR-grad-triangle_ineq}, 
notice that $\lambda = \max\{\gamma - \beta^2/\gamma_z, \gamma - 2\epsilon\beta + \gamma_z\sigma^2_{\min}(\mu)\} > 0$ 
implies that $\PRh$ is at least
$\hat{\lambda} = \lambda - O(\frac{1}{\sqrt{n}})$-strongly convex. This follows
because $|\sigma_{\min}(\mu) - \sigma_{\min}(\estmu)| \leq \|\mu - \estmu\|$ by
Weyl's inequality (see for example Theorem 3.3.16 in ~\cite{roger1994topics}), and
$\PRh$ is $O(\norm{\estmu})$-sensitive, so by~\lemmaref{least_squares}, each
term depending on $\epsilon$ or $\sigma_{\min}(\estmu)$ is within
$O(1/\sqrt{n})$ or $O(1/n)$ of the corresponding values for the non-perturbed risk $\PR$. 

Hence, when $n \geq \Omega(1/\lambda^2)$, the strong convexity
parameter of the perturbed performative risk, $\hat{\lambda}$, is at least $\lambda / 2$.

With this, we can apply the fact that $\thetaPOhat$ is an interior minimizer of
$\PRh$ by assumption to conclude that when $n \geq \Omega(1/\lambda^2)$,
\begin{align*}
    \norm{\thetahat_n - \thetaPOhat}_2^2 
    \leq \frac{4}{\lambda}\paren{\PRh(\thetahat_n) - \PRh(\thetaPOhat)}.
\end{align*}
Now, when $\PRh$ is strongly convex, the finite-sample performative risk
$\PRh_n$ is also strongly convex because~\theoremref{convexity-location-fam}
does not depend on the base distribution $\D_0$, and $\PRh_n$ is simply $\PRh$
when the base distribution $\D_0$ is replaced with the uniform distribution on
$\set{z_1, \dots, z_n}$. Consequently, by~\lemmaref{generalization}, with
probability $1-\delta$, 
\begin{align*}
    \norm{\thetahat_n - \thetaPOhat}_2^2 
    \leq O\paren{\PRh(\thetahat_n) - \PRh(\thetaPOhat)} \leq O\paren{\frac{\norm{\estmu}^2}{\delta n}}.
\end{align*}
By~\lemmaref{pr_smoothness}, $\PRh$ is $O(\norm{\estmu}^2)$-smooth.
Applying the previous display then gives us,
\begin{align*}
    \norm{\grad \PRh(\thetahat_n) - \grad \PRh(\thetaPOhat)}_2^2
    \leq O\paren{\norm{\estmu}^4 \norm{\thetahat_n - \thetaPOhat}_2^2}
    \leq O\paren{\frac{\norm{\estmu}^6}{\delta n}}.
\end{align*}

By the triangle inequality and repeated application of $(a + b)^2 \leq 2a^2 +
2b^2$, $\norm{\estmu}^6 \leq 128\norm{\estmu - \mu}^6 + 128\norm{\mu}^6$.
Therefore, the above term is $O(\norm{\mu}^6 / \delta n)$.  Putting everything
together with a union bound, we have shown that with probability $1-\delta$, if
$n \geq \Omega(d + m + \log(1/\delta))$, it holds that
\begin{align*}
    \PR(\thetahat_n) - \PR(\thetaPO)
    \leq O\paren{\frac{d + m + \log(1/\delta) }{n} + \frac{1}{\delta n}},
\end{align*}
as desired.
\end{proof}

\subsection{Proofs of Lemmas for Two-Stage Algorithm Analysis}
\appendixlabel{two_stage_lemmas}

The proof of~\lemmaref{least_squares} is essentially standard (see, e.g., \cite{matni2019tutorial}),
but we include it for completeness.

\begin{proof}[Proof of~\lemmaref{least_squares}]
Define $Z \in \R^{n \times m}$ with rows $z_i$ and $\Theta \in \R^{n \times d}$
with rows $\theta_i$, $1\leq i\leq n$. Then, $Z = \Theta \mu^\top + Z_0$, where $Z_0\in\R^{n\times m}$ is a matrix with base samples from $\D_0$ as rows. Temporarily assume that $\Theta^\top \Theta$ is invertible; we
will later condition on this event. Separately optimizing over each row of
$\mu$, we can write the least-squares estimator as
\begin{align*}
    \estmu^\top = \paren{\Theta^\top \Theta}^{-1} \Theta^\top Z.
\end{align*}
Consequently, we can bound the estimation
error as
\begin{align*}
    \norm{\mu - \estmu}
    = \norm{\mu^\top - \estmu^\top}
    &= \norm{\mu^\top - \paren{\Theta^\top \Theta}^{-1}\Theta^\top \paren{\Theta \mu^\top + Z_0}} \\ 
    &= \norm{\paren{\Theta^\top \Theta}^{-1} \Theta^\top Z_0} \\
    &\leq \frac{1}{\lambda_{\min}(\Theta^\top \Theta)} \norm{\Theta^\top Z_0}.
\end{align*}
Since $\theta_i \sim \normal{0}{I}$, $\Theta \in \R^{n \times d}$ has i.i.d.
$\normal{0}{1}$ entries, and so $\Theta^\top \Theta$ is a standard Wishart
matrix. The standard bound on the minimum eigenvalue of a
Wishart matrix (see Theorem 4.6.1 in \cite{vershynin2018high}) gives, with probability $1-\delta$,
\begin{align*}
    \sqrt{\lambda_{\min}(\Theta^\top \Theta)}
    \geq \Omega(\sqrt{n} - \sqrt{d} - \sqrt{\log(1/\delta)}).
\end{align*}
Therefore, if $n \geq \Omega(d + \log(2/\delta))$, then, with probability $1-\delta / 2$,
\begin{align}
\label{eqn:part1}
\sqrt{\lambda_{\min}(\Theta^\top \Theta)} \geq \Omega(\sqrt{n} / 2).
\end{align}

Control of the second term, $\norm{\Theta^\top Z_0}$, also follows from a
standard covering argument followed by the Bernstein bound. Write $\Theta^\top Z_0 =
\sum_{i=1}^n \theta_i (\zb)_i^\top$. Let $\cB^d$ and $\cB^m$ denote the unit
balls in $\R^d$ and $\R^m$, respectively. Then,
\begin{align*}
    \norm{\Theta^\top Z_0}
    &= \sup_{x \in \cB^d, y \in \cB^m} x^\top \paren{\sum_{i=1}^n \theta_i (\zb)_i^\top} y = \sup_{x \in \cB^d, y \in \cB^m} \sum_{i=1}^n \paren{x^\top \theta_i} \paren{(\zb)_i^\top y}.
\end{align*}
Let $\cN_\eps$, and $\cM_\eps$ denote $\eps$-coverings of $\cB^d$ and $\cB^m$,
respectively. A volumetric bound gives $\abs{\cN_\eps} \leq \paren{1 +
\frac{2}{\eps}}^d$ and similarly $\abs{\cM_\eps} \leq \paren{1 +
\frac{2}{\eps}}^m$ (see Corollary 4.2.13 in \cite{vershynin2018high}). Taking $\eps = 1/4$, $\abs{\cN_\eps} \leq 9^d$ and
$\abs{\cM_\eps} \leq 9^m$. Approximating the supremum over the $\eps$-nets gives
\begin{align*}
    \norm{\Theta^\top Z_0}
    &\leq 2\max_{x \in \cN_\eps, y \in \cM_\eps} \sum_{i=1}^n \paren{x^\top \theta_i} \paren{(z_0)_i^\top y}.
\end{align*}
Fix $x, y \in \cN_\eps, \cM_\eps$. Since $\theta_i \sim \normal{0}{I}$ and
$\norm{x}_2 = 1$, $x^\top \theta_i \sim \normal{0}{1}$, which has
subgaussian norm 1.  Similarly, since $(\zb)_i$ is subgaussian with parameter $K$ 
and $\norm{y}_2 = 1$, the marginal $(\zb)_i^\top y$ is subgaussian
with parameter $K$. Since $\zb$ and $\theta$ are independent and zero-mean,
the product $(x^\top \theta_i)((\zb)_i^\top y)$ is zero-mean and subexponential
with parameter $K$. Since each term is subexponential, by the Bernstein
bound (see Theorem 2.8.1 in \cite{vershynin2018high}), for any $t > 0$,
\begin{align*}
    \pr{\sum_{i=1}^n \paren{x^\top \theta_i} \paren{(\zb)_i^\top y} > t /2}
    \leq \exp\paren{-c \min\left\{\frac{t^2}{nK^2}, \frac{t}{K}\right\}},
\end{align*}
for some universal constant $c$. Taking a union bound over the $\eps$-nets, 
\begin{align*}
    \pr{\norm{\Theta^\top Z_0} > t}
    \leq 9^{d + m}\exp\paren{-c \min\left\{\frac{t^2}{nK^2}, \frac{t}{K}\right\}}.
\end{align*}
If $n \geq \Omega\paren{d+m +
\log(2/\delta)}$, then with probability at least $1-\delta/2$,
\begin{align}
\label{eqn:part2}
    \norm{\Theta^\top Z_0} \leq O(\sqrt{n((d + m) + \log(1/\delta))}).
\end{align}
Combining equations \eqref{eqn:part1} and \eqref{eqn:part2} with a union bound, 
if $n \geq \Omega(d + m + \log(1/\delta))$, then
\begin{align*}
    \norm{\mu - \estmu}
    \leq O\left(\sqrt{\frac{(d + m) + \log(1/\delta)}{n}}\right).
\end{align*}
\end{proof}

\begin{proof}[Proof of~\lemmaref{perturbation_bound}]
Under the location-family parameterization, we can write
\begin{align*}
    \PR(\theta)
    = \ploss{\theta}{\theta}
    = \E_{z_0 \sim \D_0}\ell(z_0 + \mu\theta; \theta),
\end{align*}
so the gradients are given by
\begin{align*}
    \grad \PR(\theta)
    = \E_{z_0 \sim \D_0} \grad \ell(z_0 + \mu\theta; \theta)
    \quad
    \text{and}
    \quad
    \grad \PRh(\theta)
    = \E_{z_0 \sim \D_0} \grad \ell(z_0 + \estmu\theta; \theta).
\end{align*}
This representation allows us to write
\begin{align*}
    \left\|\grad \PR(\theta) - \grad \PRh(\theta)\right\|_2^2
    &= \left\|\E_{z_0 \sim \D_0} \brack{\grad \ell(z_0 + \mu\theta;\theta) 
       - \grad \ell(z_0 + \estmu\theta;\theta)}\right\|_2^2.
\end{align*}
Applying the chain rule, together with the triangle-inequality, gives
\begin{align*}
   \left\|\grad \PR(\theta) - \grad \PRh(\theta)\right\|_2
    &\leq \left\|\E_{z_0 \sim \D_0} \brack{\grad_\theta \ell(z_0 + \mu\theta;\theta )
    - \grad_\theta \ell(z_0 + \estmu\theta;\theta)}\right\|_2\\
     &+ \left\|\E_{z_0 \sim \D_0} \brack{\mu^\top\grad_{z} \ell(z_0 + \mu\theta ;\theta)
       - \estmu^\top \grad_{z} \ell(z_0 + \estmu\theta;\theta)}\right\|_2.
\end{align*}
We bound each of these terms separately. For the first term, 
$\beta$-smoothness in $z$ immediately gives
\begin{align*}
    \left\|\E_{z_0 \sim \D_0} \brack{\grad_\theta \ell(z_0 + \mu\theta;\theta)
       - \grad_\theta \ell(z_0 + \estmu\theta;\theta)}\right\|_2
    \leq \beta\norm{\mu\theta - \estmu\theta}_2
    \leq \beta \norm{\mu - \estmu}\norm{\theta}_2.
\end{align*}
For the second term, adding and subtracting
$\mu^\top \grad_{z}\ell(z_0 + \estmu\theta; \theta)$ and then using the triangle
inequality,
\begin{align*}
    &\left\|\E_{z_0 \sim \D_0} \brack{\mu^\top\grad_{z} \ell(z_0 + \mu\theta);\theta)
       - \estmu^\top \grad_{z} \ell(z_0 + \estmu\theta;\theta)}\right\|_2 \\
    &\leq 
    \norm{\mu}\norm{\E_{z_0 \sim \D_0} [\grad_{z} \ell(z_0 + \mu\theta); \theta)
    - \grad_{z} \ell(z_0 + \estmu\theta; \theta)]}_2
    + \norm{\mu - \estmu}\norm{\E_{z_0 \sim \D_0} [\grad_{z} \ell(z_0 + \estmu\theta; \theta)]}_2 \\
    &\leq \beta \norm{\mu}\norm{\mu - \estmu}\norm{\theta}_2
    + L_z\norm{\mu - \estmu},
\end{align*}
where the last line used $\beta$-smoothness in
$z$. Combining both pieces, we have
\begin{align*}
    \left\|\grad \PR(\theta) - \grad \PRh(\theta)\right\|_2
    \leq \paren{\paren{\beta + \beta \norm{\mu}}\norm{\theta}_2 
    + L_z}\norm{\mu - \estmu}.
\end{align*}
Using the trivial bound $\norm{\theta}_2 \leq R$, and then squaring both sides,
\begin{align*}
    \norm{\grad \PR(\thetahat) - \grad \PRh(\thetahat)}_2^2
    \leq \paren{\paren{1 + \norm{\mu}}\beta R + L_z}^2
          \norm{\mu - \estmu}^2.
\end{align*}
\end{proof}

\begin{proof}[Proof of~\lemmaref{pr_smoothness}]
By applying the location family parameterization as in the proof of \lemmaref{perturbation_bound}, we get
\begin{align*}
    \norm{\grad \PR(\theta) - \grad \PR(\theta')}_2
    = \norm{\E_{z_0 \sim \D_0} [\grad \ell(z_0 + \mu\theta; \theta) 
            - \grad \ell(z_0 + \mu\theta'; \theta')]}_2.
\end{align*}
Using the chain rule and the triangle inequality,
\begin{align}
\label{eqn:smoothnes_triangle_ineq}
    \norm{\grad \PR(\theta) - \grad \PR(\theta')}_2
    &\leq 
     \norm{\E_{z_0 \sim \D_0} \grad_\theta \ell(z_0 + \mu\theta; \theta) 
            - \grad_\theta \ell(z_0 + \mu\theta'; \theta')}_2 \nonumber \\
    &+ \norm{\E_{z_0 \sim \D_0} \mu^\top \grad_{z} \ell(z_0 + \mu\theta; \theta) 
            - \mu^\top \grad_{z} \ell(z_0 + \mu\theta'); \theta')}_2.
\end{align}
For the first term in equation \eqref{eqn:smoothnes_triangle_ineq}, adding and subtracting $\grad_\theta \ell(z +
\mu\theta'; \theta)$ and using the triangle inequality gives
\begin{align*}
     \norm{\E_{z_0 \sim \D_0} [\grad_\theta \ell(z_0 + \mu\theta; \theta) 
            - \grad_\theta \ell(z_0 + \mu\theta'; \theta')]}_2
    &\leq 
    \norm{\E_{z_0 \sim \D_0}
           \grad_\theta \ell(z_0 + \mu\theta; \theta) 
            - \grad_\theta \ell(z_0 + \mu\theta'; \theta)}_2 \\
    & + \norm{\E_{z_0 \sim \D_0}
              \grad_\theta \ell(z_0 + \mu\theta'; \theta) 
             - \grad_\theta \ell(z_0 + \mu\theta'; \theta')}_2 \\
    &\leq 
    \beta \norm{\mu}\norm{\theta - \theta'}_2 
    + \beta \norm{\theta - \theta'}_2,
\end{align*}
where we used Jensen's inequality and the assumption that
$\grad_\theta \ell(z; \theta)$ is $\beta$-Lipschitz in $z$
(for the first term) and $\beta$-Lipschitz in $\theta$
(for the second term).

Now, for the second term in equation \eqref{eqn:smoothnes_triangle_ineq}, similarly adding and subtracting $\mu^\top\grad_{z}
\ell(z + \mu\theta'; \theta)$ and using the triangle inequality gives
\begin{align*}
    \norm{\E_{z_0 \sim \D_0} [\mu^\top \grad_{z} \ell(z_0 + \mu\theta; \theta) 
            - \mu^\top \grad_{z} \ell(z_0 + \mu\theta'; \theta')]}_2
    &\leq \norm{\E_{z_0 \sim \D_0}
          \mu^\top \grad_{z} \ell(z_0 + \mu\theta; \theta) 
           - \mu^\top \grad_{z} \ell(z_0 + \mu\theta'; \theta)}_2 \\
    &+ \norm{\E_{z_0 \sim \D_0}
          \mu^\top \grad_{z} \ell(z_0 + \mu\theta'; \theta) 
           - \mu^\top \grad_{z} \ell(z_0 + \mu\theta'; \theta')}_2 \\
    &\leq \beta\norm{\mu}^2 \norm{\theta - \theta'}_2 
    + \beta \norm{\mu}^2 \norm{\theta - \theta'}_2,
\end{align*}
where we used 
$\grad_{z} \ell(z; \theta)$ is $\beta$ Lipschitz in $z$
(for the first term) and $\beta$ Lipschitz in $\theta$
(for the second term). This completes the proof.
\end{proof}

\section{Experimental Details}
\appendixlabel{experiments}

Lastly, we elaborate on the implementation details of the various simulators and algorithms evaluated in \sectionref{experiments}.

\subsection{Synthetic Linear Regression Example}

\paragraph{Data generating process.}
Given a parameter vector $\theta \in \R^d$, as per \exampleref{linear_reg},
feature label pairs $(x,y)$ are generated according to the following data
generating process:
\begin{enumerate}
	\item $x \sim \cN(0, \Sigma_x)$.
	\item $y = \beta^\top x + \mu^\top \theta + U_y$ where $U_y \sim \cN(0, \sigma^2_y)$.
\end{enumerate}
In our experiments, we take $d=20$, and set $\sigma_y^2 = 0.01$. For each trial, we
sample $\Sigma_x$ as a random symmetric positive-definite matrix with operator
norm $0.01$, sample $\beta \sim \cN(0, I_d)$, and sample $w$ uniformly on the
sphere of radius $\eps$, where $\eps$ is the sensitivity parameter of the
distribution map. In our experiments, we choose $\eps \in \set{0.01, 100}$.

For this example, the performative optimum $\thetaPO$ can be computed in
closed-form due to the squared-loss and the linearity of the performative
effects. In particular,
\begin{align*}
    \thetaPO = (\Sigma_x + \mu\mu^\top)^{-1} \Sigma_x\beta. 
\end{align*}

\paragraph{Algorithms.}
We compare four different algorithms. In all four cases, we set $\Theta = \{\theta:\|\theta\|_2\leq 10\}$.
\begin{enumerate}
\item \textbf{The two-stage procedure.} For a budget of $n$ samples, the two-stage procedure consists of
first deploying $n /2$ classifiers $\theta_i \sim \normal{0}{I_d}$ and observing
data $(x_i, y_i)$. We then compute an estimate $\estmu$ by solving a
least-squares problem:
\begin{align*}
    \estmu \in \argmin_{\mu, c} \sum_{i=1}^n (y_i - \mu^\top \theta_i - c)^2.
\end{align*}
After computing $\estmu$, the algorithm collects another $n/2$ samples $(x_i,
y_i)$ by repeatedly deploying $\theta_i = 0$ for $i = n+1 ,\dots, 2n$, and
computes $\thetahat_n$ by solving another least-squares problem
\begin{align*}
    \thetahat_n \in \argmin_{\theta \in \Theta} \sum_{i=n+1}^{2n} (y_i - \theta^\top (x_i - \estmu))^2.
\end{align*}
\item \textbf{DFO.} We run the derivative-free optimization procedure
from Flaxman et al.~\cite{flaxman2005}. We initialize $\theta_0 = \mathbf{1}$, use 
step-size sequence $c_0 / t$, $c_0 = 0.01$, a batch size of 20 samples per-step,
and take $\delta = 10$.  These parameter were chosen via a small grid search
over $c_0 \in [1e-4, 1]$, batch size in $[1, 500]$, and $\delta \in [0.1, 100]$.
However, the algorithm still has variance across runs, especially in the small
$\eps$ regime.
\item \textbf{Greedy SGD.} We use the greedy SGD variant introduced
by Mendler-D\"{u}nner et al.~\cite{mendler2020stochastic} with initial point $\theta_0 = \mathbf{1}$ and
step-size sequence $1/\sqrt{t}$, which we found to slightly outperform the
step-sequence $1/t$ in our experiments.  
For the sake of brevity, we omit the full pseudocode of greedy/lazy SGD instead point the reader to Figure 1 in \cite{mendler2020stochastic}.
\item \textbf{Lazy SGD.} We use the lazy SGD algorithm \cite{mendler2020stochastic} with initial point $\theta_0 = \mathbf{1}$,
step-size sequence $c / (k_0 + t)$ with parameters $c = 1, k_0 = 1$, and $k^2$ collected samples in $k$-th update.
\end{enumerate}
\paragraph{Evaluation.}
We ran each algorithm for 50 trials, and in \figureref{linear_regression_experiment}, we compare the suboptimality gap
$\PR(\theta) - \PR(\thetaPO)$ of each algorithm as a function of the number of
samples. For each sample size $n$, we bootstrap 95\% confidence
intervals over the 50 trials.

\subsection{Strategic Classification}
\paragraph{Data generation.}
We use the same strategic classification simulator
as~\cite{perdomo2020performative}. For detailed information about the simulator, please refer to Appendix B.2
of~\cite{perdomo2020performative}.

The strategic responses are determined according to
$$x_{\mathrm{BR}} = x + \eps B\theta,$$
for some matrix $B$ which determines the subset of features that are performative. Here,  $\theta\in\R^{11}$ parameterizes a logistic regression classifier. The logistic loss is regularized by an additional $\ell_2$-penalty, which makes it strongly convex. The computation of the smoothness parameter can be found in \cite{perdomo2020performative}.

We consider two different values of the sensitivity parameter, $\epsilon\in\{0.0001,100\}$, and set the magnitude of the regularizer to be $\lambda = 0.002$. We restrict the radius of the optimization domain to be 10, $\Theta = \{\theta:\|\theta\|_2 \leq 10\}$. This choice of parameters ensures that $\epsilon = 0.0001$ is below the critical threshold $\frac{\gamma}{2\beta}$, while $\epsilon = 100$ is above the threshold.

\paragraph{Algorithms.}
We compare the same four algorithms as the previous section.
\begin{enumerate}
\item \textbf{Two-stage procedure.} In the first stage, we deploy random $\theta_i \sim \mathcal{N}(0,I)$ and
perform linear regression to estimate $\mu$,
\[
\hat{\mu} = \argmin_{\mu} \sum_{i=1}^n\norm{z_i - \mu \theta_i}^2
\]
Then, having collected samples from the base distribution, we solve the proxy logistic regression objective offline by running gradient descent with a line search procedure until a tolerance criterion is met. In particular, we solve, 
 \[\argmin_{\theta \in \Theta} \frac{1}{n}\sum_{j=n+1}^{2n} \ell(z_j +
    \estmu\theta; \theta),\]
 where $\ell(z ; \theta)$, is the regularized logistic regression objective, until the improvement between consecutive iterates is smaller than 1e-10. 
\item \textbf{DFO.} We again run the derivative-free optimization procedure
from Flaxman et al.~\cite{flaxman2005}. We initialize $\theta_0 = \mathbf{0}$, use 
step-size sequence $1 / t$, a batch size of 100 samples per-step, and set $\delta = 1$. We tried several other parameter configurations and found this one to perform best on this problem setting.
\item \textbf{Greedy SGD.} We run the greedy SGD variant with initial point $\theta_0 = \mathbf{0}$ and
step-size sequence as suggested by \cite{mendler2020stochastic}. See Appendix A in \cite{mendler2020stochastic} for details.
\item \textbf{Lazy SGD.} We use the lazy SGD algorithm with initial point $\theta_0 = \mathbf{0}$ and $k^2$ collected samples in $k$-th update. As for greedy SGD, we use the step-size sequence suggested by \cite{mendler2020stochastic}.
\end{enumerate}

\paragraph{Evaluation.}
We ran each algorithm for 50 trials, and in \figureref{strat_class_experiment}, we compare the performative risk
$\PR(\theta)$ of each algorithm as a function of the number of
samples. For each sample size $n$, we bootstrap 95\% confidence
intervals over the 50 trials.

\end{document}